\def\eqref#1{equation~\ref{#1}}
\def\1{\bm{1}}
\DeclareMathAlphabet{\mathsfit}{\encodingdefault}{\sfdefault}{m}{sl}
\SetMathAlphabet{\mathsfit}{bold}{\encodingdefault}{\sfdefault}{bx}{n}
\definecolor{mydarkblue}{rgb}{0,0.1,0.6}
\newcommand{\cbit}{\begin{compactitem}}
\newcommand{\ceit}{\end{compactitem}}
\newcommand{\cben}{\begin{compactenum}}
\newcommand{\ceen}{\end{compactenum}}
\newtheorem{definition}{Definition}[section]
\newtheorem{lemma}{Lemma}
\newtheorem{theorem}{Theorem}[section]
\newcommand{\JJnorm}{{\sc $\textrm{JJnorm}$}\xspace}
\newcommand{\PMP}{{$\textrm{PMP}$}\xspace}
\newcommand{\MMP}{{$\textrm{MMP}$}\xspace}
\newcommand{\PNY}{{$\textrm{PNY}$}\xspace}
\newcommand{\IMPaCT}{{$\textrm{IMPaCT}$}\xspace}
\newcommand{\GenPMP}{{$\textrm{GenPMP}$}\xspace}
\titlespacing*{\subsection}{0pt}{1pt}{1pt}
\titlespacing*{\section}{0pt}{2pt}{2pt}
\newtheorem{corollary}{Corollary}[theorem]
\newcommand{\name}{Persistent Message Passing}
\title{IMPaCT GNN: Imposing invariance with Message Passing in Chronological split Temporal Graphs}
\author{%
  Sejun Park\\
  Electrical and Computer Engineering\\
  Seoul National University\\
  Gwanak-gu, Seoul 08826, Korea\\
  \texttt{skg4078@snu.ac.kr} \\
  \And
 Joo Young Park\\
 College of Medicine\\
 Seoul National University\\
 Gwanak-gu, Seoul 08826, Korea\\
 \texttt{jyp531@snu.ac.kr}\\
  \And
  Hyunwoo Park\\
  Graduate School of Data Science\\
  Seoul National University\\
  Gwanak-gu, Seoul 08826, Korea\\
  \texttt{hyunwoopark@snu.ac.kr} \\
}
\begin{document}

\maketitle

\begin{abstract}
This paper addresses domain adaptation challenges in graph data resulting from chronological splits. In a transductive graph learning setting, where each node is associated with a timestamp, we focus on the task of Semi-Supervised Node Classification (SSNC), aiming to classify recent nodes using labels of past nodes. Temporal dependencies in node connections create domain shifts, causing significant performance degradation when applying models trained on historical data into recent data. Given the practical relevance of this scenario, addressing domain adaptation in chronological split data is crucial, yet underexplored. We propose Imposing invariance with Message Passing in Chronological split Temporal Graphs (\IMPaCT), a method that imposes invariant properties based on realistic assumptions derived from temporal graph structures. Unlike traditional domain adaptation approaches which rely on unverifiable assumptions, \IMPaCT explicitly accounts for the characteristics of chronological splits. The \IMPaCT is further supported by rigorous mathematical analysis, including a derivation of an upper bound of the generalization error. Experimentally, \IMPaCT achieves a 3.8\% performance improvement over current SOTA method on the ogbn-mag graph dataset. Additionally, we introduce the Temporal Stochastic Block Model (TSBM), which replicates temporal graphs under varying conditions, demonstrating the applicability of our methods to general spatial GNNs.
\end{abstract}

\section{Introduction}
The task of Semi-supervised Node Classification (SSNC) on graph often involves nodes with temporal information. For instance, in academic citation network, each paper node may contain information regarding the year of its publication. The focus of this study lies within such graph data, particularly on datasets where the train and test splits are arranged in chronological order. In other words, the separation between nodes available for training and those targeted for inference occurs temporally, requiring the classification of the labels of nodes with the most recent timestamp based on the labels of nodes with historical timestamp. While leveraging GNNs trained on historical data to classify newly added nodes is a common scenario in industrial and research settings \citep{liu2016predicting,bai2020temporal,pareja2020evolvegcn}, systematic research on effectively utilizing temporal information within chronological split graphs remains scarce.

\vspace{-8pt}
\begin{figure}[hbt!]
	\centering
	\includegraphics[width=0.90\textwidth]{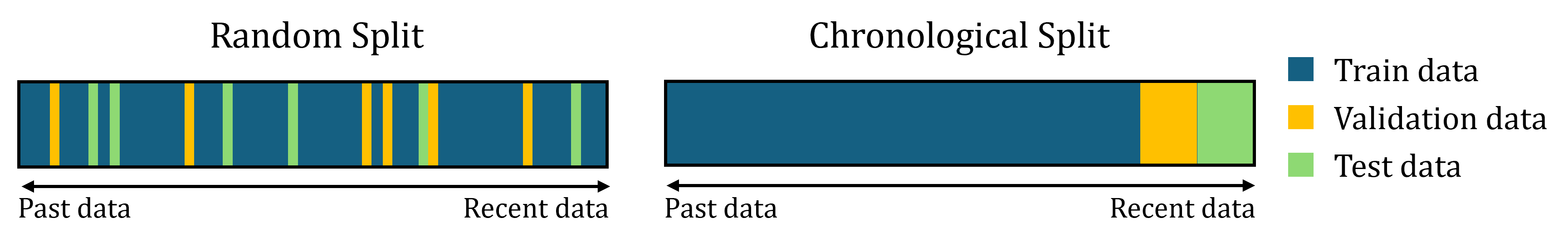}
	\vspace{-0.1in}
	\caption{Illustrative explanation of chronological split dataset.}
	 \label{fig:chronological_split}
	 \vspace{-0.3in}
\end{figure}
\vspace{10pt}

Failure to appropriately utilize temporal information can lead to significant performance degradation when the model attempts to classify labels of recent data. We conducted a toy experiment on the ogbn-mag dataset, an academic graph dataset having features with chronological split, to confirm the existence of such distribution shifts. The specific settings of this experiment can be found in Appendix \ref{apdx:toy_experiment}. Table \ref{table:toy} presents results of the toy experiment.



\begin{table}[hbt!]
\small
\centering
    \vspace{-12pt}
    \caption{Accuracy of SeHGNN \citep{SeHGNN} on ogbn-mag for chronological and random split.}
    \vspace{-5pt}
    
    \begin{tabular}{lll}
    \hline
    Split                                & wo/time emb                            & w/time emb                             \\ \hline
    {Chronological} & {0.5682 ± 0.001}  & {0.5580 ± 0.0009} \\ \hline
    {Random}        & {0.6302 ± 0.0011} & {0.6387 ± 0.0011} \\ \hline
    \end{tabular}
    \vspace{3pt}
    \label{table:toy}
    \vspace{-18pt}

\end{table}

The substantial difference in accuracy, 5.2\%, between the chronological split and random split settings clearly demonstrates the presence of distribution shift induced by the chronological split. Time positional encoding contributes to obtain better test accuracy only in random split setting. This discrepancy arises because in the chronological split setting, the inference process of test nodes encounters time positional encodings not seen during training. The distribution for recent nodes may exhibit extrapolation properties that diverge from those of past nodes, thereby adding to the challenging nature of this problem. In our work, we presented robust and realistic assumptions on temporal graph, and proposed message passing methods, \IMPaCT, to impose invariant representation.


\textbf{Contributions} Our research contributes in the following ways:\\
(a) We present robust and realistic assumptions that rooted in properties observable in real-world graphs, to effectively analyze and address the domain adaptation problem in graph datasets with chronological splits. (b) We propose the scalable imposing invariance with message passing methods, \IMPaCT, and established a theoretical upper bound of the generalization error when our methods were used. (c) We propose Temporal Stochastic Block Model (TSBM) to generate realistic temporal graph, and systematically demonstrate the robustness and applicability to general spatial GNNs of \IMPaCT. (d) We showcase significant performance improvements on the real-world citation graph ogbn-mag, yielding significant margin of 3.8\% over current SOTA method.

\section{Related Work}
\label{sec:rel}
\subsection{Graph Neural Networks} 
 Graph Neural Networks (GNNs) have gained significant attention across various domains, including recommender systems \citep{ying2018graph,gao2022graph}, biology \citep{barabasi2004network}, and chemistry \citep{wu2018moleculenet}. Spatial GNNs, such as GCN \citep{GCN}, GraphSAGE \citep{Graphsage}, GAT \citep{velickovic2017graph} and HGNN \citep{HGNN}, derive topological information by aggregating information from neighboring nodes through message passing.
\vspace{3pt}
\begin{align}
    \small
    \small
    &M_v^{(k+1)} \leftarrow \text{AGG}(\{X_u^{(k)},\forall u\in \mathcal{N}_v\}) \\ &X_v^{(k+1)}\leftarrow \text{COMBINE}(\{X_v^{(k)},M_v^{(k+1)}\}),\ \forall v\in \bold{V}, k<K
\end{align}
\vspace{-10pt}

Here, $K$ is the number of GNN layers, $X_v^0 = X_v$ is initial feature vector of each node, and the final representation $X_v^K=Z_v$ serves as the input to the node-wise classifier. The AGG function performs topological aggregation by collecting information from neighboring nodes, while the COMBINE function performs semantic aggregation through processing the collected message for each node.
Scalability is a crucial issue when applying GNNs to massive graphs. Ego graph, which defines the scope of information influencing the classification of a single node, exponentially increases with the number of GNN layers. Therefore, to ensure scalability, algorithms must be meticulously designed to efficiently utilize computation and memory resources \citep{Graphsage,clustergnn,zenggraphsaint}. Decoupled GNNs, whose process of collecting topological information occurs solely during preprocessing and is parameter-free, such as SGC \citep{SGC}, SIGN \citep{rossi2020sign}, and GAMLP \citep{GAMLP}, have demonstrated outstanding performance and scalability on many real-world datasets. Furthermore, SeHGNN \citep{SeHGNN} and RpHGNN \citep{RpHGNN} propose decoupled GNNs that efficiently apply to heterogeneous graphs by constructing separate embedding spaces for each metapath based on HGNN \citep{HGNN}.

\subsection{Domain Adaptation} \label{sec:DA}
A machine will learn from a train domain in order to perform on a test domain. Domain adaptation is needed due to the discrepancy between train and test domains. That is, we can not guarantee that a model which performed well on the train domain, will perform well on the test domain. The performance on the test domain is known to depend on the performance of the train domain and the similarity between two domains \citep{ben2006analysis, ben2010theory, germain2013pac, germain2016new}.

For feature space $\mathcal{X}$ and label space $\mathcal{Y}$, the goal is to train a predictor function $f: \mathcal{X} \rightarrow \mathcal{Y}$ to minimize the risk $R_{tr}(f) = \mathbb{E}_{(X, Y) \sim {P}_{tr} } \left[ L(f(X), Y) \right]$ where ${P}_{tr}$ is the distribution of the train feature-label pairs, and $L$ is a loss function $L: \mathcal{Y} \times \mathcal{Y} \rightarrow \mathbb{R}$. We are interested in minimizing $R_{te}(f) = \mathbb{E}_{(X, Y) \sim {P}_{te} } \left[ L(f(X), Y) \right]$ where ${P}_{te}$ is the distribution of the test feature-label pairs.

The domain adaptation bound, or upper bound of generalization error was firstly proposed for a binary classification task by defining the set of all trainable functions $\mathcal{F}$, symmetric hypothesis class $\mathcal{F} \Delta \mathcal{F}$ \citep{ben2006analysis}, and a metric for distributions, namely $d_{\mathcal{F} \Delta \mathcal{F}}$. $d_{\mathcal{F} \Delta \mathcal{F}}$ is the factor which represents the similarity between two distributions. Recently, theories and applications as setting up the metric between two distributions as the Wasserstein-1 distance, $W_1$ instead of $d_{\mathcal{F} \Delta \mathcal{F}}$ have been developed \citep{lee2019sliced, shen2018wasserstein, arjovsky2017wasserstein, redko2017theoretical}. For brevity, we omit the assumptions introduced in \citep{redko2017theoretical} and simply state the theoretical domain adaptation bound below.

\vspace{-15pt}
\begin{align}
    R_{te}(f) \le R_{tr}(f) + W_1(\mathcal{D}_{tr}, \mathcal{D}_{te})
\end{align}
\vspace{-15pt}

where $\mathcal{D}_{tr}$ and $\mathcal{D}_{te}$ are marginal distributions on $\mathcal{X}$ of $\mathcal{P}_{tr}$ and $\mathcal{P}_{te}$, respectively.

\subsection{Prior Studies} 

Despite its significance, studies on domain adaptation in GNNs are relatively scarce. Notably, to our knowledge, no studies propose invariant learning applicable to large graphs. For example, EERM \citep{wuhandling} defines a graph editor that modifies the graph to obtain invariant features through reinforcement learning, which cannot be applied to decoupled GNNs. SR-GNN \citep{shift_robust} adjusts the distribution distance of representations using a regularizer, with computational complexity proportional to the square of the number of nodes, making it challenging to apply to large graphs. This scarcity is attributed by several factors: data from different environments may have interdependencies, and the extrapolating nature of environments complicates the problem.

\section{Method Explanation}

\subsection{Motivation of Our Method: Imposing Invariance with Message Passing}
The distribution of node connections depends on both timestamps and labels. As a result, even if features from previous layers are invariant, features after the message passing layers belong to different distributions: $\mathcal{D}_{tr}$ for training and $\mathcal{D}_{te}$ for testing. Imposing invariance here means aligning the mean and variance of $\mathcal{D}_{tr}$ and $\mathcal{D}_{te}$. While it may seem straightforward to compute and align the mean and variance for each label, this is impractical in real settings since test labels are unknown during prediction. To overcome this, we analyzed real-world temporal graphs and identified practical assumptions about connection distributions. Based on this, we propose a message passing method that corrects the discrepancy between $\mathcal{D}_{tr}$ and $\mathcal{D}_{te}$, ensuring feature invariance at each layer.

\subsection{Problem Setting}

Denote the possible temporal information as $\bold{T} = \{\dots, t_{max} - 1, t_{max}\}$, $\bold{Y}$ as the set of labels, and $P_{tr}$ and $P_{te}$ as the joint probability distribution of feature-label pairs in train data and test data. The training data will be historical labels, that is, nodes with timestamp smaller than $t_{max}$. The test data will be recent labels, that is, nodes with timestamp $t_{max}$. Therefore, labels of nodes with time $t_{max}$ are unknown. We say that a variable is \textit{invariant} if and only if it does not depend on $t$.

Here are the 3 assumptions introduced in this study.

\vspace{-13pt}

\begin{flalign}\label{assumption}
    \small
    &\textit{Assumption 1}: P_{te}(Y) = P_{tr}(Y) \\
    &\textit{Assumption 2}: P_{te}(X| Y) = P_{tr}(X| Y) \\
    &\textit{Assumption 3}: \mathcal{P}_{y t} \left(\tilde{y}, \tilde{t}\right) = f(y, t) g\left(y, \tilde{y}, | \tilde{t}-t|\right), \ \forall y, \tilde{y} \in \bold{Y}, \forall t, \tilde{t} \in \bold{T}
\end{flalign}

\vspace{-2pt}

From now on, we use $y$ and $t$ as the label and time of the target node, and $\tilde{y}$ and $\tilde{t}$ as the label and time of neighboring nodes, unless specified otherwise. Relative connectivity $\mathcal{P}_{y t} \left(\tilde{y}, \tilde{t}\right)$ denotes the probability distribution of label and time pairs of neighboring nodes. Hence, $\sum_{\tilde{y} \in \bold{Y}}\sum_{\tilde{t} \in \bold{T}} \mathcal{P}_{y t} \left(\tilde{y}, \tilde{t}\right)=1$. 

Assumptions 1 and 2 posit that the initial features and labels allocated to each node originate from same distributions. Assumption 3 assumes separability in the distribution of neighboring nodes. It is based on the observation that the proportion of nodes at time $\tilde{t}$ within the set of neighboring nodes of the target node at time $t$ decreases as the time difference $|\tilde{t} - t|$ increases. $g\left(y, \tilde{y}, | \tilde{t}-t|\right)$ is the function representing the proportion of neighboring nodes as a function that decays as $|\tilde{t} - t|$ increases. However, distributions of relative time differences among neighboring nodes vary depending on the target node's time $t$. $f(y ,t)$ is to adjust relative proportion value $g\left(y, \tilde{y}, | \tilde{t}-t|\right)$ to construct $\mathcal{P}_{y t} \left(\tilde{y}, \tilde{t}\right)$ as a probability density function. Figure \ref{fig:scale_factor_f} graphically illustrates these properties. These assumptions are rooted in properties observable in real-world graphs. The motivation and analysis on real-world temporal graphs are provided in Appendix \ref{apdx:assumptions}.

\subsection{Outline of \IMPaCT Methods}

\begin{wrapfigure}{R}{0.42\textwidth}
    \vspace{-25pt}
    \centering
    \includegraphics[width=0.42\textwidth]{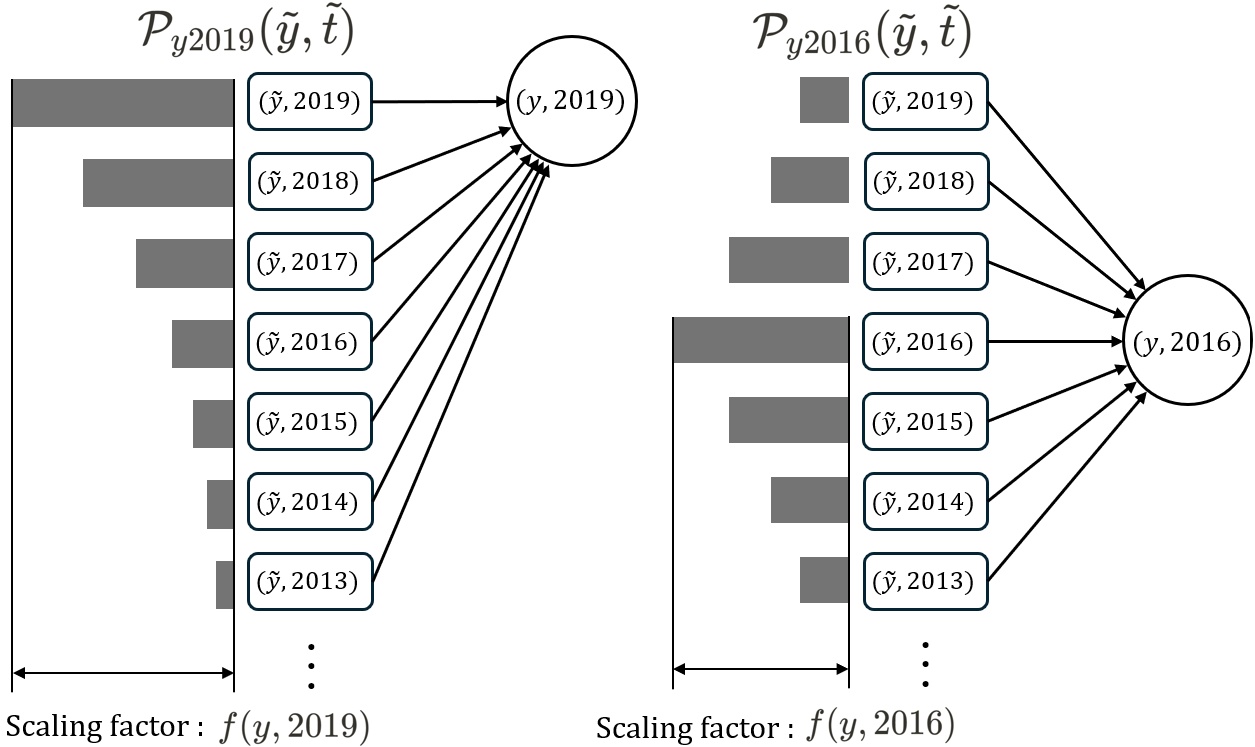}
    \caption{Graphical representation of functions $f$ and $g$. The shaded bars denote relative connectivity. Target node has label $y$, and only consider cases neighboring nodes with a labels $\tilde y$. The function $g(y,\tilde y,|\tilde t-t|)$ determines extent to which relative connectivity varies, and its scale is adjusted by the function $f(y, t)$.}
    \label{fig:scale_factor_f}
    \vspace{-30pt}
\end{wrapfigure}

In the analysis of \IMPaCT methods, we will later define and use the first and second moment of distributions, which are simply the approximations of mean and variance. Occasionally, first moment and second moment are written as approximate expectation and approximate variance, respectively.

In Section \ref{sec:firstorder}, we will introduce the 1st moment alignment methods, \MMP and \PMP. These methods impose the invariance of the 1st moment in the initial layer by modifying the original graph data, and preserves the invariance among layers during the message passing process. Formally, \MMP and \PMP ensures the aggregated message $M_{v}^{(k+1)}$ to approximately satisfy $P_{tr}(M_v^{(k+1)}| Y) =P_{te}(M_v^{(k+1)}| Y)$ when the representations $X_v^{(k)}$ at the $k$-th layer satisfies $P_{tr}(X_v^{(k)}| Y) =P_{te}(X_v^{(k)}| Y)$, $\forall v\in \bold{V}$.

Furthermore, in Section \ref{sec:secondorder}, we will introduce the 2nd moment alignment methods, \PNY and \JJnorm, which imposes the invariance of the 2nd moment. These methods are not graph modifying methods, and should be applied over 1st moment alignment methods. Specifically, \JJnorm algebraically alters the distribution of the final layer in order to impose 2nd moment invariance, without changing the 1st moment invariance property.

\section{First moment alignment methods}\label{sec:1st}
\label{sec:firstorder}

Message passing refers to the process of aggregating representations from neighboring nodes in the previous layer. Here, we assume the commonly used averaging message passing procedure. For any arbitrary target node $v\in\bold{V}$ with label $y$ and time $t$,

\vspace{-10pt}
\begin{align}
    M_{ v}^{(k+1)}={\sum_{\tilde{y}\in \bold{Y}}\sum_{\tilde{t}\in\bold{T}}\sum_{w\in\mathcal{N}_{v}\left(\tilde{y}, \tilde{t}\right)}X_w \over \sum_{\tilde{y}\in \bold{Y}}\sum_{\tilde{t}\in\bold{T}} |\mathcal{N}_{v}\left(\tilde{y}, \tilde{t}\right)|} , X_w \overset{\text{IID}}{\sim} {x_{\tilde{y}\tilde{t}}^{(k)}} \; \text{for} \; \forall w\in\mathcal{N}_{v}\left(\tilde{y}, \tilde{t}\right)
\end{align}
\vspace{-5pt}

where $\mathcal{N}_{v}\left(\tilde{y}, \tilde{t}\right) = \{w \in \bold{V} \mid w \; \text{is a neighbor of} \; v \; \text{with} \; \tilde{y} \; \text{and time} \; \tilde{t} \}$. 
$M_v^{(k+1)}$ is the aggregated message at node $v$ in the $k+1$-th layer, and $x_{yt}^{(k)}$ is the distribution of representations from the previous layer. For simplification, the term "$X_w \overset{\text{IID}}{\sim} {x_{\tilde{y}\tilde{t}}^{(k)}} \; \text{for} \; \forall w\in\mathcal{N}_{v}\left(\tilde{y}, \tilde{t}\right)$" will be omitted in definitions in the subsequent discussions.\\

\textit{\textbf{The first moment of aggregated message.}} If the representations from the previous layer have means which are consistent across time, i.e., ${\mathbb{E}}_{X\sim {x_{\tilde{y}\tilde{t}}^{(k)}}}\left[X\right]=\mu_{X}^{(k)}(\tilde{y})$, we can calculate the approximate expectation, defined in Appendix \ref{apdx:firstmm}, as $\hat{\mathbb E}[{M_{ v}^{(k+1)}}]=\sum_{\tilde{y}\in \bold{Y}}\sum_{\tilde{t}\in\bold{T}}\mathcal{P}_{y t}\left(\tilde{y}, \tilde{t}\right)\mu_{X}^{(k)}(\tilde{y})$.



We can observe that $\hat{\mathbb E}[{M_{ v}^{(k+1)}}]$ depends on the target node's time $t$ due to  $\mathcal{P}_{y t}\left(\tilde{y}, \tilde{t}\right)$. Our objective is to modify the spatial aggregation method to ensure invariance of the 1st moment and preserve it among layers.

\subsection{Persistent Message Passing: \PMP}

We propose Persistent Message Passing (\PMP) as one approach to achieve 1st moment invariance. For the target node $v$ with time $t$, consider the time $\tilde t$ of some neighboring node.
For $\Delta=  | \tilde t - t|$ where $0<\Delta \le | t_{max}-t|$, both $t + \Delta$ and $t - \Delta$ neighbor nodes can exist. However, nodes with $\Delta >|t_{max}-t|$ or $\Delta = 0$ are only possible when $\tilde t=t-\Delta$. Let $\bold{T}_{t}^{\text{double}} = \{\tilde t \in \bold{T} \mid 0<|\tilde t - t| \le | t_{max}-t|\}$ and $\bold{T}_{t}^{\text{single}} = \{\tilde t \in \bold{T} \mid | \tilde t - t| > | t_{max}-t| \; \text{or} \; \tilde t = t\}$. As discussed, the target node receives twice the weight from $\tilde t \in \bold{T}_{t}^{\text{double}}$ against $\tilde t \in \bold{T}_{t}^{\text{single}}$. Motivation behind \PMP is to correct this by double weighting the neighbor nodes with time in $\bold{T}_{t}^{\text{single}}$, as depicted in Figure \ref{fig:PMP}.

\begin{wrapfigure}{R}{0.2\textwidth}
    \vspace{-25pt}
    \centering
    \includegraphics[width=0.19\textwidth]{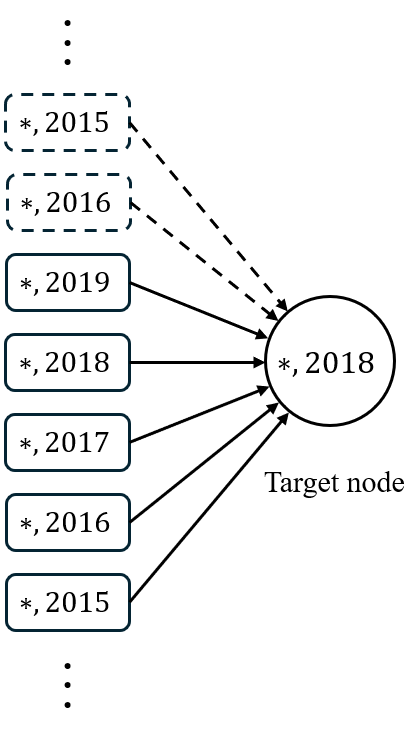}
    \caption{Graphical explanation of \PMP}
    \label{fig:PMP}
    \vspace{-40pt}
\end{wrapfigure}

\begin{definition}\label{def:env1}
The \PMP from the $k$-th layer to the $k+1$-th layer of target node $v$ is defined as:

\vspace{-13pt}
\begin{align}\label{eqn:pmp}
M_{ v}^{pmp(k+1)} &= {\sum_{\tilde{y}\in \bold{Y}}\sum_{\tilde{t}\in\bold{T}_{t}^{\text{single}}}\sum_{w\in\mathcal{N}_{v}\left(\tilde{y}, \tilde{t}\right)}2X_w+\sum_{\tilde{y}\in \bold{Y}}\sum_{\tilde{t}\in\bold{T}_{t}^{\text{double}}}\sum_{w\in\mathcal{N}_{v}\left(\tilde{y}, \tilde{t}\right)}X_w \over \sum_{\tilde{y}\in \bold{Y}}\sum_{\tilde{t}\in\bold{T}_{t}^{\text{single}}}2|\mathcal{N}_{v}\left(\tilde{y}, \tilde{t}\right)|+\sum_{\tilde{y}\in \bold{Y}}\sum_{\tilde{t}\in\bold{T}_{t}^{\text{double}}}|\mathcal{N}_{v}\left(\tilde{y}, \tilde{t}\right)|}
\end{align}
\vspace{-5pt}
\end{definition}

As noted, \PMP is a graph modifying method. Neighbor nodes in $\bold{T}_{t}^{\text{single}}$ are duplicated in order to contribute equally with nodes in $\bold{T}_{t}^{\text{double}}$. Then, the definition above can be derived.

\begin{theorem}\label{thm:pmp}
The 1st moment of aggregated message obtained by \PMP layer is invariant, if the 1st moment of previous representation is invariant. \\
\textbf{Sketch of proof} Let $\mathbb{E}_{X\sim{x_{\tilde{y} \tilde{t}}^{(k)}}}\left[X\right]=\mu_{X}^{(k)}(\tilde{y})$ as a function invariant with $\tilde{t}$. Then we can get

\vspace{-15pt}
\begin{align}
\hat{\mathbb E}\left[{M_{ v}^{pmp(k+1)}}\right]={\sum_{\tilde{y}\in \bold{Y}}\sum_{\tau\ge 0}g(y, \tilde{y}, \tau)\mu_{X}^{(k)}(\tilde{y})\over\sum_{\tilde y\in \bold{Y}}\sum_{\tau\ge 0}g(y, \tilde{y}, \tau)}
\end{align}
\vspace{0pt}
which is invariant with $t$. See Appendix \ref{apdx:PMP} for details and implementation.
\end{theorem}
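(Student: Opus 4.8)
The plan is to compute the approximate first moment $\hat{\mathbb{E}}[M_v^{pmp(k+1)}]$ directly from Definition~\ref{def:env1} and check that every trace of the target time $t$ cancels. Following the same recipe used in Appendix~\ref{apdx:firstmm} to obtain $\hat{\mathbb{E}}[M_v^{(k+1)}]$ for plain averaging, I replace each neighbor count $|\mathcal{N}_v(\tilde{y},\tilde{t})|$ by the relative connectivity $\mathcal{P}_{yt}(\tilde{y},\tilde{t})$ and each sampled representation $X_w$ by its mean; this substitution is legitimate precisely because the hypothesis gives $\mathbb{E}_{X\sim x_{\tilde{y}\tilde{t}}^{(k)}}[X]=\mu_X^{(k)}(\tilde{y})$ with no dependence on $\tilde{t}$. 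The outcome is a ratio in which both numerator and denominator are sums of $\mathcal{P}_{yt}(\tilde{y},\tilde{t})$ over $\tilde{y}\in\bold{Y}$ and $\tilde{t}\in\bold{T}$, weighted by $2$ when $\tilde{t}\in\bold{T}_t^{\text{single}}$ and by $1$ when $\tilde{t}\in\bold{T}_t^{\text{double}}$, with the numerator additionally carrying the factor $\mu_X^{(k)}(\tilde{y})$.

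Next I invoke Assumption~3, $\mathcal{P}_{yt}(\tilde{y},\tilde{t})=f(y,t)\,g(y,\tilde{y},|\tilde{t}-t|)$. The scalar $f(y,t)$ is a common factor of every term of both numerator and denominator, so it cancels out of the ratio, leaving only weighted sums of $g(y,\tilde{y},|\tilde{t}-t|)$. It then suffices to re-index these sums by the time gap $\tau=|\tilde{t}-t|\ge 0$ and track the total weight accumulated on each $g(y,\tilde{y},\tau)$.

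The combinatorial core is the observation that this total weight is always exactly $2$, independently of both $\tau$ and $t$. Indeed, for $\tau=0$ the only admissible neighbor time is $\tilde{t}=t$, which lies in $\bold{T}_t^{\text{single}}$ and contributes weight $2$; for $0<\tau\le t_{max}-t$ there are exactly two admissible times $\tilde{t}=t\pm\tau$, both in $\bold{T}_t^{\text{double}}$, each contributing weight $1$; and for $\tau>t_{max}-t$ only $\tilde{t}=t-\tau$ satisfies $\tilde{t}\le t_{max}$, it lies in $\bold{T}_t^{\text{single}}$ and contributes weight $2$ (here one uses that $\bold{T}$ is unbounded below, so $t-\tau\in\bold{T}$ always holds). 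Hence the numerator equals $2\sum_{\tilde{y}\in\bold{Y}}\sum_{\tau\ge 0} g(y,\tilde{y},\tau)\,\mu_X^{(k)}(\tilde{y})$ and the denominator equals $2\sum_{\tilde{y}\in\bold{Y}}\sum_{\tau\ge 0} g(y,\tilde{y},\tau)$; the factor $2$ cancels, yielding
\[
\hat{\mathbb{E}}\bigl[M_v^{pmp(k+1)}\bigr]=\frac{\sum_{\tilde{y}\in\bold{Y}}\sum_{\tau\ge 0} g(y,\tilde{y},\tau)\,\mu_X^{(k)}(\tilde{y})}{\sum_{\tilde{y}\in\bold{Y}}\sum_{\tau\ge 0} g(y,\tilde{y},\tau)},
\]
which contains no $t$ whatsoever, proving the claimed invariance.

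I expect the main obstacle to be the boundary bookkeeping rather than any analysis: one must verify carefully that double-weighting $\bold{T}_t^{\text{single}}$ exactly compensates for the missing ``future'' neighbors near the horizon $t_{max}$, paying attention to the endpoint case $\tau=t_{max}-t$ and the degenerate case $\tau=0$, and one must be explicit about the definition of approximate expectation (in particular that the denominator is treated deterministically, so the numerator/denominator split is handled consistently with the plain averaging case). A secondary point worth recording, to justify the ``preserved among layers'' phrasing, is that the COMBINE step is a node-wise function of $(X_v^{(k)},M_v^{(k+1)})$, so once both inputs have $t$-invariant first moments so does $X_v^{(k+1)}$; an induction on $k$ starting from the graph-modified input layer then carries the invariance through all layers, although the statement of Theorem~\ref{thm:pmp} itself only asserts the single-layer message claim.
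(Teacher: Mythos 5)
Your proposal is correct and follows essentially the same route as the paper's proof in Appendix \ref{apdx:PMP}: compute the approximate expectation, substitute Assumption~3, cancel the common factor $f(y,t)$, and re-index by $\tau=|\tilde t - t|$ so that each gap accumulates total weight exactly $2$. Your case analysis ($\tau=0$ weighted $2$; $0<\tau\le t_{max}-t$ hit twice with weight $1$; $\tau>t_{max}-t$ hit once with weight $2$) is in fact a slightly more careful rendering of the paper's own re-indexing step, which elides the doubling over $\tilde t=t\pm\tau$ in its displayed intermediate line.
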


\subsection{Mono-directional Message Passing: \MMP}
Besides \PMP, there are numerous ways to adjust the 1st moment of train and test distributions to be invariant. We introduce Mono-directional Message Passing (\MMP) as one such approach. \MMP aggregates information only from neighboring nodes with time less or equal than the target node.

\begin{definition}\label{def:env3}
The \MMP from the $k$-th layer to the $k+1$-th layer of target node $v$ is defined as:

\vspace{-15pt}
\begin{align}
    M_{v}^{mmp(k+1)} = {\sum_{\tilde{y}\in \bold{Y}}\sum_{\tilde{t}\le t}\sum_{w\in\mathcal{N}_{v}(\tilde y, \tilde t)}X_w \over \sum_{\tilde{y}\in \bold{Y}}\sum_{\tilde{t}\le t}|\mathcal{N}_{v}(\tilde y, \tilde t)|}
\end{align}
\vspace{-5pt}
\end{definition}

\begin{theorem}\label{thm:mmp}
The 1st moment of aggregated message obtained by \MMP layer is invariant, if the 1st moment of previous representation is invariant.\\
\textbf{Sketch of proof} Let $\mathbb{E}_{X\sim{x_{\tilde{y} \tilde{t}}^{(k)}}}\left[X\right]=\mu_{X}^{(k)}(\tilde{y})$ as a function invariant with $\tilde{t}$. Then we can calculate

\vspace{-15pt}
\begin{align}
\hat{\mathbb E}\left[{M_{ v}^{mmp(k+1)}}\right]={\sum_{\tilde{y}\in \bold{Y}}\sum_{\tau\ge 0}g(y, \tilde{y}, \tau)\mu_{X}^{(k)}(\tilde{y})\over\sum_{y\in \bold{Y}}\sum_{\tau\ge 0}g(y, \tilde{y}, \tau)}
\end{align}
\vspace{-5pt}
which is also invariant with $t$. See Appendix \ref{apdx:MMP} for details and implementation.
\end{theorem}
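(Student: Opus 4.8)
The plan is to mirror the route used in the sketch of Theorem~\ref{thm:pmp}: write the first moment of $M_v^{mmp(k+1)}$ as a single ratio, show that all of its dependence on the target time $t$ is carried by the scaling factor $f(y,t)$ from Assumption~3, and cancel that factor. First I would invoke the definition of approximate expectation (Appendix~\ref{apdx:firstmm}). By hypothesis the previous layer is first‑moment invariant, so $\mathbb{E}_{X\sim x_{\tilde y\tilde t}^{(k)}}[X]=\mu_X^{(k)}(\tilde y)$ does not depend on $\tilde t$, and since the $X_w$ with $w\in\mathcal{N}_v(\tilde y,\tilde t)$ are IID, replacing the numerator and denominator of Definition~\ref{def:env3} by their expectations gives
\begin{align}
\hat{\mathbb E}\!\left[M_v^{mmp(k+1)}\right]=\frac{\sum_{\tilde y\in\bold{Y}}\sum_{\tilde t\le t}|\mathcal{N}_v(\tilde y,\tilde t)|\,\mu_X^{(k)}(\tilde y)}{\sum_{\tilde y\in\bold{Y}}\sum_{\tilde t\le t}|\mathcal{N}_v(\tilde y,\tilde t)|}.
\end{align}

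Next I would convert the neighbourhood counts into relative connectivities: the expected size of $\mathcal{N}_v(\tilde y,\tilde t)$ is proportional to $\mathcal{P}_{yt}(\tilde y,\tilde t)$ with a common factor $|\mathcal{N}_v|$ that cancels between numerator and denominator, so the ratio becomes $\big(\sum_{\tilde y}\sum_{\tilde t\le t}\mathcal{P}_{yt}(\tilde y,\tilde t)\mu_X^{(k)}(\tilde y)\big)\big/\big(\sum_{\tilde y}\sum_{\tilde t\le t}\mathcal{P}_{yt}(\tilde y,\tilde t)\big)$. Then I would apply Assumption~3, $\mathcal{P}_{yt}(\tilde y,\tilde t)=f(y,t)\,g(y,\tilde y,|\tilde t-t|)$; the factor $f(y,t)$ is free of $\tilde y,\tilde t$, so it pulls out of both sums and cancels. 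Finally I would re-index the sum over $\{\tilde t\in\bold{T}:\tilde t\le t\}$ by $\tau:=t-\tilde t\ge 0$, under which $|\tilde t-t|=\tau$ and (since $\bold{T}$ extends indefinitely into the past) the range of $\tau$ is $\{0,1,2,\dots\}$ regardless of $t$. This yields
\begin{align}
\hat{\mathbb E}\!\left[M_v^{mmp(k+1)}\right]=\frac{\sum_{\tilde y\in\bold{Y}}\sum_{\tau\ge 0}g(y,\tilde y,\tau)\,\mu_X^{(k)}(\tilde y)}{\sum_{\tilde y\in\bold{Y}}\sum_{\tau\ge 0}g(y,\tilde y,\tau)},
\end{align}
which depends only on $y$ and not on $t$, establishing invariance.

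The main obstacle is not any single manipulation but making the two ``approximate'' passages rigorous: (i) commuting expectation past the ratio of the two random sums, which is exactly what the approximate-expectation formalism of Appendix~\ref{apdx:firstmm} is set up to license, and (ii) replacing the empirical counts $|\mathcal{N}_v(\tilde y,\tilde t)|$ by $\mathcal{P}_{yt}(\tilde y,\tilde t)\,|\mathcal{N}_v|$. A secondary subtlety is the re-indexing step: if $\bold{T}$ had a finite past horizon $t_{\min}$, the admissible $\tau$ would range over $\{0,\dots,t-t_{\min}\}$ and reintroduce $t$-dependence, so one must either assume $\bold{T}$ is unbounded below or that $g(y,\tilde y,\tau)$ decays fast enough to be negligible near the horizon; I would state this explicitly and defer the quantitative error control, together with the implementation, to Appendix~\ref{apdx:MMP}. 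The remaining steps --- cancelling $f(y,t)$ and relabelling the index --- are routine.
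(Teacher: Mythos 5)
Your proposal is correct and follows essentially the same route as the paper's own proof in Appendix \ref{apdx:MMP}: pass to the approximate expectation, replace counts by relative connectivities, factor out and cancel $f(y,t)$ via Assumption~3, and re-index $\tilde t\le t$ by $\tau=t-\tilde t\ge 0$ using the fact that $\bold{T}$ is unbounded below. Your explicit remark about the finite-past-horizon subtlety is a fair observation that the paper leaves implicit, but it does not change the argument.
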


\textbf{Comparison between \PMP and \MMP.} Both \PMP and \MMP adjust the weights of messages collected from neighboring nodes that meet certain conditions, either doubling or ignoring their impact. They can be implemented easily by reconstructing the graph according to the conditions without altering the existing code. However, \MMP collects less information since it only gathers information only from the past, resulting a smaller ego-graph. Therefore, \PMP will be used as the 1st moment alignment method in the subsequent discussions. Furthermore, from Theorem \ref{thm:pmp}, we will denote $\hat{\mathbb{E}}\left[{M_{v}^{pmp(k+1)}}\right]$ as $\mu_{M}^{pmp(k+1)}(y)$ for target node $v$ with label $y$ in the following discussions.



\subsection{Theoretical Analysis of \PMP when Applied in Multi-Layer GNNs.}\label{sec:analysis}
We will assume the messages and representations to be scalar in this discussion. Now suppose that (i) $|M_v^{(k)}|\le C$ almost surely for $\forall v\in \bold{V}$, $M_v^{(k)}\sim m_{yt}^{(k)}$, and (ii) $\text{var}(M_v^{(k)})\le V$ for $\forall v\in \bold{V}$. Since we are considering 1st moment alignment method \PMP, we may assume $\mathbb{E}[M_v^{(k)}]=\hat{\mathbb{E}}[M_v^{(k)}]=\mu_{M}^{(k)}(y)$ for $M_v^{(k)}\sim m_{yt}^{(k)},\ \forall y\in \bold{Y}, \forall t\in \bold{T}$. Here, $W_1$ is the Wasserstein-1 metric of probability measures. We also assume G-Lipschitz condition for semantic aggregation functions, $f^{(k)},\ \forall k\in\{1,2,\dots,K\}$. Detailed modelling of \PMP with probability measures are in Appendix \ref{apdx:pmp_modeling}, and proofs of the following theorems are in Appendix \ref{apdx:pmp_theory}. For now on, we will omit the details and only state the theorems and provide interpretations of the theoretical results.
\vspace{-3pt}
\begin{theorem}\label{thm:thm1}
    $W_1 (m_{yt}^{(k)},m_{yt'}^{(k)}) \leq \mathcal{O}(C^{1/3}V^{1/3})$
\end{theorem}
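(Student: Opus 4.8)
The plan is to bound $W_1(m_{yt}^{(k)}, m_{yt'}^{(k)})$ by comparing both measures to a common reference, using the structure of \PMP. Both $m_{yt}^{(k)}$ and $m_{yt'}^{(k)}$ arise as the law of an averaged message $M_v^{pmp(k)}$ obtained by aggregating (appropriately reweighted) i.i.d. samples from the previous-layer distributions $x_{\tilde y \tilde t}^{(k-1)}$. By Theorem~\ref{thm:pmp}, the first moments agree: $\mathbb{E}[M_v^{(k)}] = \mu_M^{(k)}(y)$ independently of $t$. So the two measures have the same mean; what differs is the number of terms in the average (the ego-graph size varies with $t$) and the mixing weights over neighbor labels/times. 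The key idea is that $W_1$ between two distributions with the same mean is controlled by their concentration around that mean — specifically, for a random variable $Z$ with mean $\mu$, $W_1(\mathrm{law}(Z), \delta_\mu) \le \mathbb{E}|Z - \mu|$, and by the triangle inequality $W_1(m_{yt}^{(k)}, m_{yt'}^{(k)}) \le \mathbb{E}|M_v^{(k)} - \mu_M^{(k)}(y)| + \mathbb{E}|M_{v'}^{(k)} - \mu_M^{(k)}(y)|$ where $v$ has time $t$ and $v'$ has time $t'$.

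Next I would bound each term $\mathbb{E}|M_v^{(k)} - \mu_M^{(k)}(y)|$. This is where the $C$ and $V$ assumptions enter. Write $M_v^{(k)}$ as a weighted average $\sum_i \alpha_i X_{w_i}$ of i.i.d.-within-group samples bounded by $C$ with variances bounded by $V$, where $\sum_i \alpha_i = 1$ and each $\alpha_i \in \{1/N, 2/N\}$ up to normalization. Conditionally on the graph realization (the sets $\mathcal{N}_v(\tilde y, \tilde t)$), the conditional mean of $M_v^{(k)}$ is $\sum_{\tilde y, \tilde t} (\text{empirical weight}) \mu_X^{(k-1)}(\tilde y)$, which equals $\mu_M^{(k)}(y)$ exactly only in expectation over the graph. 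So there are two sources of deviation: (a) the fluctuation of the averaged $X_{w_i}$'s around their conditional mean, controlled via $\mathbb{E}|\cdot| \le \sqrt{\mathrm{Var}(\cdot)} \le \sqrt{V \sum_i \alpha_i^2} \le \sqrt{2V/N}$; and (b) the fluctuation of the empirical neighbor-weights around the population relative-connectivity weights $\mathcal{P}_{yt}$, which is a deviation living in a set of points of diameter $\le 2C$. The split $\min(2C, \sqrt{2V/N})$ type bound, optimized over the effective sample size $N$, is what produces the exponent pattern $C^{1/3}V^{1/3}$: balancing a term like $\sqrt{V/N}$ against a term like $C/N^{?}$ (or more precisely using that the relevant cross term scales like $C^{2/3}V^{1/3}$ after optimizing), one lands on $\mathcal{O}(C^{1/3}V^{1/3})$. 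Concretely I expect the bound to come from writing $W_1 \le \inf_{\lambda > 0} \big(\lambda \cdot \text{something} + \tfrac{1}{\lambda}\cdot\text{something}\big)$ and choosing $\lambda \sim (V/C)^{1/3}$ or similar, exploiting that a bounded, low-variance variable must put most mass within $O(\sqrt{V})$ of its mean but deviations are capped at $O(C)$.

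The main obstacle is handling source (b): the dependence of the ego-graph composition on $t$ versus $t'$. Assumption~3 (separability $\mathcal{P}_{yt}(\tilde y, \tilde t) = f(y,t)g(y,\tilde y,|\tilde t - t|)$) and the reweighting in \PMP's Definition~\ref{def:env1} are precisely designed so that, after doubling the $\bold{T}_t^{\text{single}}$ side, the effective label-mixture $\sum_{\tilde t} \mathcal{P}_{yt}(\tilde y, \tilde t)$ becomes $t$-independent — this is the content of Theorem~\ref{thm:pmp}'s sketch. So at the level of population means the $t$-dependence is killed; what remains is that for finite ego-graphs the \emph{realized} mixture fluctuates, and the size of the ego-graph (hence the effective $N$) genuinely depends on $t$ (test nodes near $t_{max}$ see fewer neighbors). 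I would handle this by lower-bounding the effective sample size or, more robustly, by noting that even in the worst case $N \ge 1$ gives the crude bound $\mathbb{E}|M_v^{(k)} - \mu| \le 2C$, and then the claimed $\mathcal{O}(C^{1/3}V^{1/3})$ bound should be read as holding up to absolute constants after the optimization; if a uniform lower bound on ego-graph size is needed it would be folded into the hidden constant or an additional standing assumption. I would defer the detailed bookkeeping of the \PMP reweighting and the measure-theoretic modeling to the appendix referenced as \ref{apdx:pmp_modeling} and \ref{apdx:pmp_theory}, and in the main text only present the triangle-inequality reduction, the $W_1 \le \mathbb{E}|Z-\mu|$ step, and the variance/boundedness optimization yielding the exponent $1/3$.
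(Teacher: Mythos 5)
Your high-level reduction is the same as the paper's: \PMP forces $m_{yt}^{(k)}$ and $m_{yt'}^{(k)}$ to share the mean $\mu_{M}^{(k)}(y)$, and $W_1$ is then controlled by the expected deviation of each message from that common mean (the paper phrases this as $W_1\le \mathbb{E}\,|M_v^{(k)}-M_{v'}^{(k)}|$ for the independent coupling, which is your triangle-inequality-through-$\delta_\mu$ step in disguise). The gap is in the one step that actually produces the exponent $1/3$. The optimization you propose, $\inf_{\lambda>0}\left(\lambda\cdot a+\lambda^{-1}\cdot b\right)$, evaluates to $2\sqrt{ab}$ and can only ever yield square-root exponents. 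The correct mechanism (the paper's Lemma~\ref{lem:lem1} followed by a truncation) is: Chebyshev around the common mean gives $P\left(|M_v^{(k)}-M_{v'}^{(k)}|>\epsilon\right)\le 8V/\epsilon^2$; splitting the expectation at the threshold $\epsilon$ and capping the tail contribution by the almost-sure bound $2C$ times that probability gives $\epsilon+16CV/\epsilon^2$; minimizing this expression in $\epsilon$ (at $\epsilon=(32CV)^{1/3}$, not at $\lambda\sim(V/C)^{1/3}$) yields $3(4CV)^{1/3}$. It is the shape $\epsilon+b/\epsilon^{2}$ --- linear cost of the truncation level against a quadratically decaying tail --- that produces $C^{1/3}V^{1/3}$; your $\lambda+b/\lambda$ shape cannot, and the "cross term $C^{2/3}V^{1/3}$" you mention does not match the target rate.

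The second problem is the detour through the ego-graph: decomposing $M_v^{(k)}$ into a sampling fluctuation of order $\sqrt{V/N}$ plus a composition fluctuation of diameter $2C$, with an effective neighborhood size $N$. This is both unnecessary and unworkable here. The theorem's standing hypotheses bound the message itself --- $|M_v^{(k)}|\le C$ almost surely and $\mathrm{var}(M_v^{(k)})\le V$ --- so there is nothing to re-derive from the aggregation structure, and the statement contains no quantity $N$ to optimize over; your own fallback ($N\ge 1$ recovers only the trivial $2C$) concedes that this route does not reach $C^{1/3}V^{1/3}$ without assumptions the theorem does not make. As a side remark, your reduction was actually one line from a complete proof: once $W_1\le \mathbb{E}|M_v^{(k)}-\mu|+\mathbb{E}|M_{v'}^{(k)}-\mu|$, Cauchy--Schwarz bounds each term by $\sqrt{V}$, which is already at most $(CV)^{1/3}$ whenever $V\le C^2$ (and the trivial diameter bound $2C$ covers the remaining case) --- but that is not the line you wrote.
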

\vspace{-5pt}
\begin{theorem}\label{thm:thm2}
    If $m_{yt}^{(k)}$ is sub-Gaussian with constant $\tau$, then $W_1(m_{yt}^{(k)}, m_{yt'}^{(k)})\le \mathcal{O}(\tau\sqrt{\log C})$.
\end{theorem}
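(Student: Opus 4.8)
The plan is to reduce this two-sample Wasserstein bound to a one-sample estimate by exploiting the first-moment alignment guaranteed by \PMP. By Theorem \ref{thm:pmp} together with the standing assumption $\mathbb{E}[M_v^{(k)}]=\hat{\mathbb{E}}[M_v^{(k)}]=\mu_{M}^{(k)}(y)$, both $m_{yt}^{(k)}$ and $m_{yt'}^{(k)}$ have the \emph{same} mean $\mu:=\mu_M^{(k)}(y)$, which does not depend on $t$. Routing the $W_1$ triangle inequality through the Dirac mass at $\mu$,
\begin{align}
W_1\big(m_{yt}^{(k)},m_{yt'}^{(k)}\big)\;\le\; W_1\big(m_{yt}^{(k)},\delta_\mu\big)+W_1\big(\delta_\mu,m_{yt'}^{(k)}\big)\;=\;\E_{M\sim m_{yt}^{(k)}}\!\big[|M-\mu|\big]+\E_{M\sim m_{yt'}^{(k)}}\!\big[|M-\mu|\big],
\end{align}
so it suffices to prove $\E\big[|M-\mu|\big]=\mathcal{O}\!\big(\tau\sqrt{\log C}\big)$ for either law. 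This step is exactly where \PMP is used: without first-moment alignment the two means could differ by up to the support diameter $2C$, and no $\tau\sqrt{\log C}$ bound could hold.

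\textbf{Main estimate.} I would bound $\E\big[|M-\mu|\big]$ by truncating at level $R:=c\,\tau\sqrt{\log C}$ for a universal constant $c$ to be fixed:
\begin{align}
\E\big[|M-\mu|\big]\;=\;\E\big[|M-\mu|\,\mathbbm{1}\{|M-\mu|\le R\}\big]+\E\big[|M-\mu|\,\mathbbm{1}\{|M-\mu|> R\}\big]\;\le\;R+\E\big[|M-\mu|\,\mathbbm{1}\{|M-\mu|> R\}\big].
\end{align}
The first term is already $\mathcal{O}(\tau\sqrt{\log C})$. For the tail term I would invoke the sub-Gaussian hypothesis in the form $\Pr(|M-\mu|>s)\le 2e^{-s^2/(2\tau^2)}$ together with $\E[(M-\mu)^2]\le 4\tau^2$, and estimate the tail either by Cauchy--Schwarz, $\E\big[|M-\mu|\,\mathbbm{1}\{|M-\mu|>R\}\big]\le\big(\E[(M-\mu)^2]\big)^{1/2}\big(\Pr(|M-\mu|>R)\big)^{1/2}$, or directly via $R\,\Pr(|M-\mu|>R)+\int_R^\infty\Pr(|M-\mu|>s)\,ds$; either way this is at most a constant times $\tau\,e^{-R^2/(4\tau^2)}$, up to a factor polynomial in $R/\tau$. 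Choosing $c$ large enough that $e^{-R^2/(4\tau^2)}\le 1/C$ makes the tail contribution $\mathcal{O}\!\big(\tau\sqrt{\log C}/C\big)$, which is dominated by the first term. The almost-sure bound $|M_v^{(k)}|\le C$ enters only to keep the truncation meaningful: in the degenerate regime $\tau\sqrt{\log C}\gtrsim C$ one instead uses $W_1\le 2C$, which already gives the claim. Adding the two one-sample bounds yields $W_1\big(m_{yt}^{(k)},m_{yt'}^{(k)}\big)=\mathcal{O}(\tau\sqrt{\log C})$.

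\textbf{Alternative route and main obstacle.} The same conclusion can also be obtained by feeding an ``effective support'' into Theorem \ref{thm:thm1}: a sub-Gaussian law with parameter $\tau$ places all but an $\mathcal{O}(1/C)$ fraction of its mass in $[\mu-\mathcal{O}(\tau\sqrt{\log C}),\,\mu+\mathcal{O}(\tau\sqrt{\log C})]$, so up to total-variation error $\mathcal{O}(1/C)$ one may replace $C$ by $\mathcal{O}(\tau\sqrt{\log C})$ and $V$ by $\mathcal{O}(\tau^2)$ in the hypotheses of Theorem \ref{thm:thm1}, transporting the residual mass at cost $\mathcal{O}(C)\cdot\mathcal{O}(1/C)=\mathcal{O}(1)$ (which can be made $o(1)$ by enlarging the window slightly). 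I expect the obstacle to be bookkeeping rather than conceptual: pinning down exactly which polynomial-in-$\sqrt{\log C}$ and constant factors are absorbed into the $\mathcal{O}(\cdot)$, and confirming that the common-mean reduction is legitimate in the probability-measure model of \PMP from Appendix \ref{apdx:pmp_modeling} — in particular that $\mathbb{E}[M_v^{(k)}]=\hat{\mathbb{E}}[M_v^{(k)}]$ holds there as an exact identity and not merely an approximation, since the whole argument hinges on the two laws being \emph{exactly} centered at the same point.
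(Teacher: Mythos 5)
Your proof is correct and follows essentially the same route as the paper's: both hinge on the common mean guaranteed by \PMP, truncate at level $\sim\tau\sqrt{\log C}$, and kill the tail with the sub-Gaussian concentration so that the exponential factor becomes $1/C$. The only cosmetic difference is that you pass through the Dirac mass at $\mu$ and bound each one-sample term $\E\left[|M-\mu|\right]$, whereas the paper bounds $\E\left[|M_v^{(k)}-M_{v'}^{(k)}|\right]$ directly under the independent coupling and controls the tail event with the almost-sure bound $2C$ rather than Cauchy--Schwarz with the second moment.
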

\vspace{-12pt}

Without \PMP, we can only guarantee $W_1 (m_{yt}^{(k)},m_{yt'}^{(k)}) \le 2C$, or $\mathcal{O}(C)$. However, \PMP gives a tighter upper bound $\mathcal{O}(C^{1/3}V^{1/3})$. Furthermore, with  additional assumption of sub-Gaussians, \PMP gives a more significant upper bound $\mathcal{O}(\tau\sqrt{\log C})$.
\vspace{-2pt}
\begin{theorem}\label{thm:thm3}
    If $\forall y\in \bold{Y}, \forall t, t'\in \bold{T}$, $W_1(m_{yt}^{(k)},m_{yt'}^{(k)})\le W,$ then for $\forall y\in \bold{Y}, \forall t, t'\in \bold{T}$, $W_1(m_{yt}^{(k+1)},m_{yt'}^{(k+1)})\le {G\over G^{(k)}}W$ where $G^{(k)}>1$ is a constant only depending on the layer $k$.
\end{theorem}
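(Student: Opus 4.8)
The plan is to track how a single \PMP layer transforms the Wasserstein-1 gap between the time-conditioned message distributions $m_{yt}^{(k)}$ and $m_{yt'}^{(k)}$. The $(k{+}1)$-th layer representation $M_v^{pmp(k+1)}$ is, by Definition \ref{def:env1}, a normalized weighted average of i.i.d. draws $X_w \sim x_{\tilde y \tilde t}^{(k)}$, where the per-group weights are $2$ for $\tilde t \in \bold{T}_t^{\text{single}}$ and $1$ for $\tilde t \in \bold{T}_t^{\text{double}}$, and the group sizes are $|\mathcal{N}_v(\tilde y, \tilde t)|$. The key structural fact, already used in the sketch of Theorem \ref{thm:pmp}, is that after this reweighting the effective mixing weights over $(\tilde y, \tau)$ with $\tau = |\tilde t - t|$ become proportional to $g(y,\tilde y,\tau)$ and hence are \emph{independent of $t$} — this is exactly where Assumption 3 enters. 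So the only $t$-dependence that survives in $m_{yt}^{(k+1)}$ comes through the input distributions $x_{\tilde y \tilde t}^{(k)}$, i.e. through $m_{\tilde y \tilde t}^{(k)}$, which by hypothesis differ across times by at most $W$ in $W_1$.

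First I would set up a coupling: given two target times $t, t'$, I couple $M_v^{pmp(k+1)} \sim m_{yt}^{(k+1)}$ and $M_v^{pmp(k+1)'} \sim m_{yt'}^{(k+1)}$ by coupling, group-by-group, the i.i.d. input samples drawn from $x_{\tilde y \tilde t}^{(k)}$ and $x_{\tilde y \tilde t'}^{(k)}$ so that each matched pair realizes the optimal $W_1$-coupling (cost $\le W$ in expectation). Because \PMP is an affine (convex-combination) functional of the inputs with the \emph{same} normalized weights on both sides — this is the payoff of the $t$-independence above — the induced coupling of the outputs has expected cost at most a weighted average of the per-sample costs, i.e. at most $W$. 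This already gives $W_1(m_{yt}^{(k+1)}, m_{yt'}^{(k+1)}) \le W$; the improvement to ${G\over G^{(k)}} W$ with $G^{(k)}>1$ must come from the fact that a normalized average of $n$ terms contracts fluctuations, combined with the $G$-Lipschitz semantic aggregation $f^{(k)}$ applied afterward (which contributes the factor $G$), and the constant $G^{(k)}$ should be read off as roughly the effective ``sample size'' / normalization constant $\sum_{\tilde y}\sum_{\tau\ge 0} g(y,\tilde y,\tau)$-type denominator appearing in \PMP, so that dividing by it strictly shrinks the bound. I would make this precise by bounding the $W_1$ distance of a normalized sum via its defining infimum over couplings and exploiting that the normalizer exceeds $1$.

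The main obstacle I anticipate is pinning down exactly what $G^{(k)}$ is and proving $G^{(k)}>1$ cleanly: this requires being careful about how the random group sizes $|\mathcal{N}_v(\tilde y,\tilde t)|$ enter — they are themselves $t$-dependent, so the ``same weights on both sides'' claim is only true at the level of the \emph{approximate} (first-moment) model of \PMP from Appendix \ref{apdx:pmp_modeling}, not the raw random graph. So I would work entirely within that probability-measure model of \PMP, where the mixing weights are deterministically the normalized $g(y,\tilde y,\tau)$, invoke Theorems \ref{thm:pmp}–\ref{thm:thm2} for the first- and second-moment control of the inputs, and then the contraction factor $1/G^{(k)}$ falls out of the normalization constant while the factor $G$ is the Lipschitz constant of $f^{(k)}$. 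A secondary subtlety is ensuring the coupling argument respects the almost-sure bound $|M_v^{(k)}|\le C$ and the variance bound $\le V$ so that the recursion composes with Theorems \ref{thm:thm1} and \ref{thm:thm2} across layers; this should only require that the \PMP average of bounded, bounded-variance inputs stays bounded with controlled variance, which is immediate from convexity and independence.
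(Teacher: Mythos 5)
Your overall architecture matches the paper's: work in the probability-measure model of \PMP (Appendix \ref{apdx:pmp_modeling}), where $m_{yt}^{(k+1)}=\sum_{\tilde y,\tilde t}\lambda_{yt\tilde y\tilde t}\,x_{\tilde y\tilde t}^{(k)}$ is a convex combination of the previous-layer representation measures, use Assumption 3 to cancel $f(y,t)$ so that the group sums $\sum_{\tilde t}\lambda_{yt\tilde y\tilde t}=\rho_{y\tilde y}$ are independent of $t$, and push the hypothesis through the $G$-Lipschitz map $f^{(k)}$ (Lemma \ref{lem:lem2}) to get $W_1(x_{\tilde y\tilde t}^{(k)},x_{\tilde y\tilde t'}^{(k)})\le GW$ within each label group. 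Your transport/coupling reformulation of the final mixture comparison is a legitimate substitute for the paper's CDF-integral computation. But there are two genuine problems. First, a slip: the group-by-group coupling gives $W_1(m_{yt}^{(k+1)},m_{yt'}^{(k+1)})\le GW$, not $\le W$ as you claim at the intermediate step --- the components being mixed are the $x$'s, which are only $GW$-close, since the hypothesis $\le W$ lives at the level of the $m$'s.

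Second, and more seriously, you misidentify the mechanism that produces $G^{(k)}>1$. It is not the normalization denominator (after normalization the $\lambda$'s sum to $1$, so ``dividing by the normalizer'' buys nothing beyond the trivial bound $GW$), and it is not a variance-contraction-of-averages effect (the measure-level model is a mixture of measures, not the law of a sample mean, so the $W_1$ distance between the mixtures is not shrunk by an effective sample size). The paper's gain comes from Lemma \ref{lem:lem3}: within each label group $\tilde y$, the two weight vectors $(\lambda_{yt\tilde y\tilde t})_{\tilde t}$ and $(\lambda_{yt'\tilde y\tilde t})_{\tilde t}$ are strictly positive and share the same total mass $\rho_{y\tilde y}$, so the positive part of their difference has mass $\rho_{y\tilde y}-\epsilon_{y\tilde y tt'}$ with $\epsilon_{y\tilde y tt'}>0$; only the non-overlapping part of the weights has to be transported, while the overlapping part stays in place at cost zero. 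Summing over $\tilde y$ and minimizing the overlap over $y,t,t'$ yields $W_1(m_{yt}^{(k+1)},m_{yt'}^{(k+1)})\le G(1-\epsilon)W$ with $G^{(k)}=1/(1-\epsilon)>1$. Your proposal as written never isolates this overlap, so the strict inequality $G^{(k)}>1$ does not follow from the steps you describe; to repair it, your coupling should retain the common mass $\min(\lambda_{yt\tilde y\tilde t},\lambda_{yt'\tilde y\tilde t})$ on each component and pay the $GW$ transport cost only for the residual.
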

\vspace{-10pt}
This theorem involves two steps. First, $W_1(m_{yt}^{(k)},m_{yt'}^{(k)})\le W$ gives $W_1(x_{yt}^{(k)}, x_{yt'}^{(k)})\le GW$. Second, $W_1(x_{yt}^{(k)}, x_{yt'}^{(k)})\le GW$ gives $W_1(m_{yt}^{(k+1)},m_{yt'}^{(k+1)})\le {G\over G^{(k)}}W$ for $G^{(k)}>1$, a constant only depending on the layer $k$. The strength of this inequality is that the denominator $G^{(k)}$ is larger than 1. For example, if we assume 1-Lipschitz property of aggregation functions, the upper bound of $W_1$ distance decreases layer by layer. The following corollary formulates this interpretation.
\vspace{-2pt}
\begin{corollary}\label{thm:cor3}
    Under previous assumptions, for $\forall y\in \bold{Y}, \forall t \in \bold{T}$, $W_1(m_{yt}^{(K)},m_{yt_{max}}^{(K)})\le {G^{K-1}\over {G^{(1)}G^{(2)}\dots G^{(K-1)}}}\mathcal{O}(\min\{C^{1/3}V^{1/3},\tau\sqrt{\log C}\})$
\end{corollary}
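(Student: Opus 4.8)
\textbf{Proof plan for Corollary \ref{thm:cor3}.}

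The plan is to combine Theorem \ref{thm:thm3} iteratively with the base estimates from Theorems \ref{thm:thm1} and \ref{thm:thm2}. The starting point is the observation that at some layer the two distributions $m_{yt}^{(k)}$ and $m_{yt'}^{(k)}$ are close: Theorems \ref{thm:thm1} and \ref{thm:thm2} together give $W_1(m_{yt}^{(1)}, m_{yt'}^{(1)}) \le \mathcal{O}(\min\{C^{1/3}V^{1/3}, \tau\sqrt{\log C}\})$ (taking the better of the two bounds, since both hold simultaneously under the stated assumptions, and one should be careful that the sub-Gaussian estimate requires the sub-Gaussianity hypothesis — if it is not assumed the $\min$ simply collapses to the first term). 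Call this quantity $W_1^{(1)} =: W_0$. This serves as the initial value $W$ fed into the recursion.

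Next I would unwind the recursion. Theorem \ref{thm:thm3} says that if $W_1(m_{yt}^{(k)}, m_{yt'}^{(k)}) \le W$ for all $y, t, t'$, then $W_1(m_{yt}^{(k+1)}, m_{yt'}^{(k+1)}) \le \frac{G}{G^{(k)}} W$. Applying this successively from $k = 1$ up to $k = K-1$, and telescoping the multiplicative factors, yields
\begin{align}
W_1(m_{yt}^{(K)}, m_{yt'}^{(K)}) \le \left(\prod_{k=1}^{K-1} \frac{G}{G^{(k)}}\right) W_0 = \frac{G^{K-1}}{G^{(1)} G^{(2)} \cdots G^{(K-1)}} W_0 .
\end{align}
Specializing $t' = t_{max}$ and substituting $W_0 = \mathcal{O}(\min\{C^{1/3}V^{1/3}, \tau\sqrt{\log C}\})$ gives exactly the claimed bound. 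One must check that the hypotheses of Theorem \ref{thm:thm3} (the bounds $|M_v^{(k)}| \le C$ a.s., $\mathrm{var}(M_v^{(k)}) \le V$, the $G$-Lipschitz semantic aggregation, and the 1st-moment-alignment property of \PMP) propagate to every layer $k \in \{1, \dots, K-1\}$, so that the recursion is legitimately applicable at each step; this is implicit in the standing assumptions of Section \ref{sec:analysis} but should be stated.

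The main obstacle I anticipate is not the telescoping itself, which is a one-line induction, but rather making the base case fully rigorous: Theorems \ref{thm:thm1} and \ref{thm:thm2} are stated for a generic layer $k$, so I need to argue that the constants $C$, $V$ (and $\tau$, in the sub-Gaussian case) can be taken uniform across layers — or else carry layer-dependent constants $C^{(k)}, V^{(k)}$ through the product, which would change the statement. Assuming the uniform-constant reading intended by the paper, the remaining subtlety is purely bookkeeping: confirming that $G^{(k)} > 1$ for every $k$ so that each factor $G/G^{(k)}$ is a genuine contraction when $G \le 1$ (or at least that the product is controlled), and that the $\mathcal{O}(\cdot)$ constants absorbed along the way do not depend on $K$. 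Once these are pinned down, the corollary follows immediately by induction on the layer index.
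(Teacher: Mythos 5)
Your proposal is correct and follows exactly the argument the paper intends: use Theorems \ref{thm:thm1} and \ref{thm:thm2} to bound $W_1(m_{yt}^{(1)},m_{yt'}^{(1)})$ by $\mathcal{O}(\min\{C^{1/3}V^{1/3},\tau\sqrt{\log C}\})$, then iterate Theorem \ref{thm:thm3} from $k=1$ to $k=K-1$ and telescope the factors $G/G^{(k)}$. The paper treats the corollary as immediate and gives no separate proof; your added caveats about the uniformity of $C$, $V$, $\tau$ across layers and the conditional nature of the sub-Gaussian branch are reasonable points of rigor that the paper leaves implicit.
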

\vspace{-6pt}

Therefore, we ensured that the $W_1$ distance between train and test distributions of final representations are bounded when \PMP is applied in multi-layer GNNs. In Section \ref{sec:DA}, we have previously introduced that the generalization error can be upper bounded by the $W_1$ distance. Hence, we have established a theoretical upper bound of the generalization error when \PMP method is applied.

\subsection{Generalized \PMP: \GenPMP}

Here, we note that the first moment is a good approximation for the mean only when the number of nodes with specific time label are similar to each other. Hence, if the dataset has a substantial difference among the number of nodes with specific time label, duplicating the single nodes as \PMP will still adjust the 1st moment, but this will not be a good approximation for the mean. Therefore, we propose the Generalized PMP (\GenPMP) for such datasets.

For the target node $v$ with time $t$, consider the time $\tilde{t}$ of some neighboring node. 
Instead of $\mathbf{T}_{t}^{\text{double}}$ and $\mathbf{T}_{t}^{\text{single}}$, 
we define $\mathbf{T}_{t}^{\Delta} = \{\tilde{t} \in \mathbf{T} \mid |\tilde{t} - t| = \Delta \}$ 
for $0 \leq \Delta \leq |t_{\max} - t|$. 
By collecting the nodes, we can get a discrete probability distribution $P_{s}$, where 
$P_s(\tau)$ is attained by adding $| \mathbf{T}_{s}^{\tau} |$ for all nodes with time label $s$, 
and then normalizing so that $\sum_{\tau \geq 0} P_s(\tau) = 1$.

\begin{definition}\label{def:env4}
The \emph{generalized probabilistic message passing} (\GenPMP) from the $k$-th layer 
to the $(k+1)$-th layer of target node $v$ is defined as:
\begin{align}
    M_{v}^{\text{gpmp}(k+1)} = \sum_{\tilde{y} \in \mathbf{Y}} 
    \sum_{\Delta \geq 0} 
    \sum_{\tilde{t} \in \mathbf{T}_{t}^{\Delta}} 
    \sum_{w \in \mathcal{N}_{v}(\tilde{y}, \tilde{t})} 
    \frac{P_{t_{\max}}(\Delta)}{P_{\tilde{t}}(\Delta)} X_w
\end{align}
\end{definition}

Here, we are giving a relative weight to nodes $w$ in $\mathcal{N}_{v}\left(\tilde{y}, \tilde{t}\right)$ by generating nodes with a ratio of $\frac{P_{t_{\text{max}}}(\Delta)}{P_{\tilde{t}}(\Delta)}$
. Unlike \PMP which distinguishes neighbor nodes into only two classes, this method explicitly counts the nodes and adjusts the shape of distributions among train and test data.
However, \GenPMP has reduced adaptability. Unlike \PMP, which can be implemented by simply adding or removing edges in the graph, \GenPMP requires modifying the model to reflect real-valued edge weights during the message passing process. Moreover, when the number of nodes per timestamp is equal, \GenPMP behaves similarly to \PMP. Theoretically, if the ratio $\frac{P_{t_{\text{max}}}(\Delta)}{P_{\tilde{t}}(\Delta)}$
 is too large for a fixed $\Delta$, the variance of aggregated message will increase by a factor roughly proportional to the square of the ratio by definition. Therefore, $V$ defined in Section \ref{sec:analysis} will increase substantially, and hence the theoretical upper bound of generalization error will also increase substantially since there is a factor $V^{1/3}$ in the bound. In conclusion, \GenPMP is a method for exceptional usage on graph data which shows large differences of node numbers with specific timestamps.

\section{Second moment alignment methods}
\label{sec:secondorder}
While 1st order alignment methods like \PMP and \MMP preserve the invariance of the 1st moment of the aggregated message, they do not guarantee such property for the 2nd moment. Let's suppose that the 1st moment of the previous layer's representation $X$ is invariant with node's time $t$, and 2nd moment of the initial feature is invariant. That is, $\forall\tilde{y} \in\bold{Y}, \forall\tilde{t}\in\bold{T}$, $\mathbb{E}[X]=\mu_{X}^{pmp(k)}(\tilde{y})$ for $X\sim {x_{\tilde{y} \tilde{t}}^{pmp(k)}}$ and $\Sigma_{XX}^{pmp(0)}\left(\tilde{y},\tilde{t}\right)=
\Sigma_{XX}^{pmp(0)}\left(\tilde{y},t_{max}\right)$ where $\Sigma_{XX}^{pmp(k)}\left(\tilde{y},\tilde{t}\right)=
\text{var}({X})=\mathbb{E}\left[(X-\mu_{X}^{pmp(k)}\left(\tilde{y},\tilde{t}\right))(X-\mu_{X}^{pmp(0)}\left(\tilde{y},\tilde{t}\right))^{\top}\right]$ for $X\sim {x_{\tilde{y} \tilde{t}}^{pmp(k)}}$. Given that the invariance of 1st moment is preserved after message passing by \PMP or \MMP, one naive idea for aligning the 2nd moment is to calculate the covariance matrix of the aggregated message $M_{v}^{pmp(k+1)}$ for each time $t$ of node $v$ and adjust for the differences. However, when $t=t_{max}$, we cannot directly estimate $\text{var}(M_{v}^{pmp (k+1)})$ since the labels are unknown for nodes in the test set. We introduce \PNY and \JJnorm, the methods for adjusting the aggregated message obtained using the \PMP to achieve invariant property even for the 2nd moment, when the invariance for 1st moment is preserved.

\textbf{\textit{The second moment of aggregated message}}. The \textit{approximate} variance of $M_{v}^{pmp(k+1)}$ can also be calculated rigorously by using the definition of approximate variance in Appendix \ref{apdx:PMP}, as:
\vspace{0pt}
{\scriptsize
\begin{align}
\hat{\text{var}}(M_{v}^{pmp(k+1)}) = {\sum_{\tilde{y}\in \bold{Y}}\left(\sum_{\tilde{t}\in\bold{T}_{t}^{\text{single}}}4\mathcal{P}_{yt}\left(\tilde{y}, \tilde{t}\right)+\sum_{\tilde{t}\in\bold{T}_{t}^{\text{double}}}\mathcal{P}_{yt}\left(\tilde{y}, \tilde{t}\right)\right)\Sigma_{XX}^{pmp(k)}\left(\tilde{y},\tilde{t}\right)
\over
\left(\sum_{\tilde{y}\in \bold{Y}}\sum_{\tilde{t}\in\bold{T}_{t}^{\text{single}}}2\mathcal{P}_{yt}\left(\tilde{y}, \tilde{t}\right)+\sum_{\tilde{y}\in \bold{Y}}\sum_{\tilde{t}\in\bold{T}_{t}^{\text{double}}}\mathcal{P}_{yt}\left(\tilde{y}, \tilde{t}\right)\right)^2|\mathcal{N}_{yt}|}
\end{align}
}%

\vspace{-5pt}
Hence, we can write $\hat{\text{var}}(M_{v}^{pmp(k+1)})$=$\Sigma^{pmp(k+1)}_{MM}(y,t)$. Since $\Sigma^{pmp(k+1)}_{MM}(y,t)$ is a covariance matrix, it is positive semi-definite, orthogonally diagonalized as $\Sigma^{pmp(k+1)}_{MM}(y,t) = U_{y t}\Lambda_{y t}U_{y t}^{-1}$.

\subsection{Persistent Numerical Yield: \PNY}
\vspace{-5pt}

If we can specify $\mathcal{P}_{y t}(\tilde{y}, \tilde{t})$ for $\forall y, \tilde{y} \in \bold{Y}, \forall t, \tilde{t} \in \bold{T}$, transformation of covariance matrix during the \PMP process could be calculated. \PNY numerically estimates the transformation of the covariance matrix during the \PMP process, and determines an affine transformation to correct this variation.

\begin{definition}\label{def:env1}
The \PNY from the $k$-th layer to the $k+1$-th layer of target node $v$ is defined as:

For affine transformation matrix $A_{t} = U_{y t_{max}} \Lambda_{y t_{max}}^{1/2} \Lambda_{y t}^{-1/2} U_{y t}^{\top}$, \\

\vspace{-18pt}
\begin{align}
M_v^{PNY(k+1)} = A_{t}(M_v^{pmp (k+1)}-\mu_{M}^{pmp(k+1)}(y))+\mu_{M}^{pmp(k+1)}(y)
\end{align}

\vspace{-3pt}
\end{definition}

Note that $M_v^{pmp (k+1)}$ is a random vector defined as \ref{eqn:pmp}, so $M_v^{PNY(k+1)}$ is also a random vector.

\begin{theorem}\label{thm:pny}
The 1st and 2nd moments of aggregated message after \PNY transform is invariant, if the 1st and 2nd moments of previous representations are invariant.\\

\vspace{-8pt}

\textbf{Sketch of proof} $\hat{\mathbb E}\left[{M_{ v}^{PNY(k+1)}}\right]$=$\mu_{M}^{pmp(k+1)}(y)$, $\hat{\text{var}}(M_{v}^{pmp(k+1)})$=$\Sigma^{pmp(k+1)}_{MM}(y,t_{max})$ holds,

so the 1st and 2nd moments of representations are invariant with $t$. See Appendix \ref{apdx:PNY} for details.
\end{theorem}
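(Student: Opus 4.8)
The plan is to reduce the statement to the elementary behaviour of the approximate expectation $\hat{\mathbb E}$ and the approximate variance $\hat{\text{var}}$ under a deterministic affine map, followed by one short spectral cancellation. The first step is to record, directly from the definitions of $\hat{\mathbb E}$ and $\hat{\text{var}}$ in Appendix~\ref{apdx:PMP}, the transformation rules
\begin{align}
\hat{\mathbb E}[A_t X + b] = A_t\,\hat{\mathbb E}[X] + b, \qquad \hat{\text{var}}(A_t X + b) = A_t\,\hat{\text{var}}(X)\,A_t^{\top}
\end{align}
valid for any fixed (sample-independent) matrix $A_t$ and vector $b$. These are not facts about a genuine probability measure but must be read off the construction: since $\hat{\mathbb E}[M_v^{pmp(k+1)}]$ is a convex combination of the per-class first moments $\mu_X^{pmp(k)}(\tilde y)$ and $\hat{\text{var}}(M_v^{pmp(k+1)})$ is the analogous weighted combination of the per-class covariances $\Sigma_{XX}^{pmp(k)}(\tilde y,\tilde t)$ scaled by $1/|\mathcal N_{yt}|$, linearity in the underlying first moments and the quadratic rule in the underlying second moments follow term by term.

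Next I would prove the first-moment claim. The \PNY transform is an affine function of $M_v^{pmp(k+1)}$, so the first rule together with Theorem~\ref{thm:pmp} (which, given that the previous layer's first moment is $t$-invariant, yields $\hat{\mathbb E}[M_v^{pmp(k+1)}] = \mu_M^{pmp(k+1)}(y)$) gives
\begin{align}
\hat{\mathbb E}\!\left[M_v^{PNY(k+1)}\right] = A_t\!\left(\mu_M^{pmp(k+1)}(y) - \mu_M^{pmp(k+1)}(y)\right) + \mu_M^{pmp(k+1)}(y) = \mu_M^{pmp(k+1)}(y),
\end{align}
which carries no dependence on $t$; the centring built into \PNY is exactly what makes $A_t$ invisible to the first moment. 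For the second-moment claim, apply the quadratic rule and substitute $A_t = U_{y t_{max}}\Lambda_{y t_{max}}^{1/2}\Lambda_{y t}^{-1/2}U_{y t}^{\top}$ together with the orthogonal diagonalisation $\Sigma^{pmp(k+1)}_{MM}(y,t) = U_{y t}\Lambda_{y t}U_{y t}^{\top}$ (orthogonality of $U_{yt}$ gives $U_{yt}^{-1} = U_{yt}^{\top}$ and $U_{yt}^{\top}U_{yt} = I$); the inner factors $U_{yt}^{\top}U_{yt}$ and $\Lambda_{yt}^{-1/2}\Lambda_{yt}\Lambda_{yt}^{-1/2}$ telescope to identities, leaving
\begin{align}
\hat{\text{var}}\!\left(M_v^{PNY(k+1)}\right) = A_t\,\Sigma^{pmp(k+1)}_{MM}(y,t)\,A_t^{\top} = U_{y t_{max}}\Lambda_{y t_{max}}U_{y t_{max}}^{\top} = \Sigma^{pmp(k+1)}_{MM}(y,t_{max}),
\end{align}
again independent of $t$. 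Together the two displays give that the aggregated message after \PNY has $t$-invariant first and second moments. The hypothesis that the previous layer's second moment is $t$-invariant enters by guaranteeing that both $A_t$ and the target covariance $\Sigma^{pmp(k+1)}_{MM}(y,t_{max})$ are actually formable from historical ($t < t_{max}$) statistics through the closed form for $\hat{\text{var}}(M_v^{pmp(k+1)})$, in which the unknown neighbour covariances at $t_{max}$ are replaced by the invariant $\Sigma_{XX}^{pmp(k)}(\tilde y)$.

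The part I expect to need the most care is not deep but bookkeeping: (i) deriving the two affine-map identities rigorously from the appendix definitions, since $\hat{\mathbb E}$ and $\hat{\text{var}}$ are constructed mixture-style moments rather than moments of a single measure, so the usual $\mathbb E[A_tX+b]$ and $\Var(A_tX+b)$ formulas cannot simply be cited but must be re-established (in particular $\hat{\text{var}}$ must be shown to transform genuinely quadratically in the vector, not merely in the scalar special case); and (ii) the well-posedness of $\Lambda_{yt}^{-1/2}$ and hence of the telescoping --- if $\Sigma^{pmp(k+1)}_{MM}(y,t)$ is only positive semidefinite one must either add the mild assumption that it is nonsingular for every $t$, or replace the inverse square root by a Moore--Penrose pseudo-inverse on $\mathrm{range}(\Sigma^{pmp(k+1)}_{MM}(y,t))$ and argue that the zero-variance directions carry no mass, so that $A_t\,\Sigma^{pmp(k+1)}_{MM}(y,t)\,A_t^{\top} = \Sigma^{pmp(k+1)}_{MM}(y,t_{max})$ still holds.
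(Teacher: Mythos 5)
Your proposal is correct and follows essentially the same route as the paper's own proof in Appendix~\ref{apdx:PNY}: the affine transformation rules for $\hat{\mathbb E}$ and $\hat{\text{var}}$ make the centring kill $A_t$ in the first moment, and $A_t\,\Sigma^{pmp(k+1)}_{MM}(y,t)\,A_t^{\top}$ telescopes to $\Sigma^{pmp(k+1)}_{MM}(y,t_{max})$ via the orthogonal diagonalisation. Your added caveat about the well-posedness of $\Lambda_{yt}^{-1/2}$ for a merely positive semidefinite covariance is a legitimate refinement the paper leaves implicit, but it does not change the argument.
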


\subsection{Junction and Junction normalization: \JJnorm}
A drawback of \PNY is its complexity in handling covariance matrices, requiring computation of covariance matrices and diagonalization for each label and time of nodes, leading to high computational overhead. Additionally, estimation of $\mathcal{P}_{yt}\left(\tilde{y},\tilde{t}\right)$ when $t$ or $\tilde{t}$ is $t_{max}$, necessitates solving overdetermined nonlinear systems of equations as Appendix \ref{apdx:rel_con}, making it difficult to analyze.

Assuming the function $g\left(y, \tilde{y}, |\tilde{t}-t|\right)$ to be consistent to $y$ and $\tilde{y}$ significantly simplifies the alignment of the 2nd moment. Here, we introduce \JJnorm as a practical implementation of this idea.

\vspace{-13pt}
\begin{align}\label{assumption4}
    \textit{Assumption 4}: g\left(y, \tilde{y}, \Delta \right) = g\left(y', \tilde{y}', \Delta \right) , \forall y, \tilde{y}, y', \tilde{y}' \in \bold{Y}, \forall\Delta \in \{|t_2 -t_1 | \mid t_1, t_2\in \bold{T}\}
\end{align}

\vspace{-2pt}

Moreover, we will only consider GNNs with linear semantic aggregation functions. Formally,

\vspace{-10pt}

\begin{align}
&M_v^{pmp(k+1)} \leftarrow \text{\PMP}(X_w^{pmp(k)},w\in \mathcal{N}_v)\\
&X_v^{pmp(k+1)} \leftarrow A^{(k+1)}M_v^{pmp(k+1)}, \ \forall k<K, v\in \bold{V}
\end{align}

\vspace{-5pt}

\begin{lemma}\label{lem:jj}
$\forall t \in \bold{T}$, there exists a constant $\alpha_{t}^{(k+1)}$>$0$ only depending on $t$ and layer $k+1$ such that

\vspace{-8pt}
\begin{align}
    \left(\alpha_{t}^{(k+1)}\right)^2\Sigma^{pmp(k+1)}_{MM}(y,t)=\Sigma^{pmp(k+1)}_{MM}(y,t_{max}), \forall y \in \bold{Y}
\end{align}
\vspace{-10pt}

The covariance matrix of the aggregated message differs only by a constant factor depending on the layer $k$ and time $t$. Proof of this lemma is in Appendix \ref{apdx:JJnormlemma}.

\end{lemma}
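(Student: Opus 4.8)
The plan is to track the covariance matrix $\Sigma^{pmp(k+1)}_{MM}(y,t)$ explicitly through the \PMP aggregation, using Assumption 4 to strip away all $y,\tilde y$ dependence from the weighting coefficients, so that the only remaining $t$-dependence is a scalar. First I would start from the closed-form expression for $\hat{\text{var}}(M_v^{pmp(k+1)})=\Sigma^{pmp(k+1)}_{MM}(y,t)$ displayed just before Section 5.1, namely the ratio whose numerator is $\sum_{\tilde y}\big(\sum_{\tilde t\in\bold{T}_t^{\text{single}}}4\mathcal{P}_{yt}(\tilde y,\tilde t)+\sum_{\tilde t\in\bold{T}_t^{\text{double}}}\mathcal{P}_{yt}(\tilde y,\tilde t)\big)\Sigma^{pmp(k)}_{XX}(\tilde y,\tilde t)$ and whose denominator is $\big(\sum_{\tilde y}\sum_{\tilde t\in\bold{T}_t^{\text{single}}}2\mathcal{P}_{yt}(\tilde y,\tilde t)+\sum_{\tilde y}\sum_{\tilde t\in\bold{T}_t^{\text{double}}}\mathcal{P}_{yt}(\tilde y,\tilde t)\big)^2|\mathcal{N}_{yt}|$. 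Under Assumption 3, $\mathcal{P}_{yt}(\tilde y,\tilde t)=f(y,t)\,g(y,\tilde y,|\tilde t-t|)$, and under Assumption 4, $g(y,\tilde y,\Delta)=g(\Delta)$ is independent of the labels. Substituting, the weight attached to a neighbor with label $\tilde y$ at distance $\Delta=|\tilde t-t|$ becomes $f(y,t)\,g(\Delta)$ (with the extra factor $2$ or $4$ for $\tilde t\in\bold{T}_t^{\text{single}}$), so the factor $f(y,t)$ cancels between numerator and denominator, leaving coefficients that depend only on $\Delta$ and on whether $\tilde t$ is "single" or "double" for the given $t$.

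Next I would use the inductive hypothesis that the 2nd moment of the previous layer is invariant, i.e. $\Sigma^{pmp(k)}_{XX}(\tilde y,\tilde t)=\Sigma^{pmp(k)}_{XX}(\tilde y,t_{max})$ for all $\tilde t$ — this is exactly the base case $\Sigma^{pmp(0)}_{XX}(\tilde y,\tilde t)=\Sigma^{pmp(0)}_{XX}(\tilde y,t_{max})$ stated at the start of Section 5, propagated upward because linear semantic aggregation $X_v^{pmp(k+1)}=A^{(k+1)}M_v^{pmp(k+1)}$ multiplies the covariance by $A^{(k+1)}\big(\cdot\big)A^{(k+1)\top}$ and a scalar factor $\big(\alpha_t^{(k+1)}\big)^{-2}$ passes through unchanged. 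With $\Sigma^{pmp(k)}_{XX}(\tilde y,\tilde t)$ now depending only on $\tilde y$, write $\Sigma^{pmp(k)}_{XX}(\tilde y):=\Sigma^{pmp(k)}_{XX}(\tilde y,t_{max})$, so the numerator factors as $\sum_{\tilde y} c_t(\tilde y)\,\Sigma^{pmp(k)}_{XX}(\tilde y)$ where $c_t(\tilde y)=\big(\sum_{\Delta:\text{single}}4\,g(\Delta)+\sum_{\Delta:\text{double}}g(\Delta)\big)$ — and because the inner sum over $\Delta$ does not involve $\tilde y$ at all once $g$ is label-free, $c_t(\tilde y)$ is actually independent of $\tilde y$: call it $c_t$. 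Then $\Sigma^{pmp(k+1)}_{MM}(y,t)=\dfrac{c_t\sum_{\tilde y}\Sigma^{pmp(k)}_{XX}(\tilde y)}{d_t^{2}\,|\mathcal N_{yt}|}$ where $d_t$ is the analogous label-free denominator sum; the ratio $c_t/(d_t^{2}|\mathcal N_{yt}|)$ is a positive scalar depending only on $t$ and the layer, while $\sum_{\tilde y}\Sigma^{pmp(k)}_{XX}(\tilde y)$ is a fixed matrix independent of both $y$ and $t$. Defining $\big(\alpha_t^{(k+1)}\big)^2$ to be the ratio of this scalar at $t_{max}$ to its value at $t$ gives $\big(\alpha_t^{(k+1)}\big)^2\Sigma^{pmp(k+1)}_{MM}(y,t)=\Sigma^{pmp(k+1)}_{MM}(y,t_{max})$ for every $y$, and positivity of $\alpha_t^{(k+1)}$ follows since $c_t,d_t,|\mathcal N_{yt}|>0$.

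The main obstacle I anticipate is the bookkeeping around $|\mathcal N_{yt}|$ and $d_t$: one must be careful that $|\mathcal N_{yt}|$, the expected degree, and the normalizing constant $d_t=\sum_{\tilde y}\sum_{\tilde t}(2\text{ or }1)\mathcal{P}_{yt}(\tilde y,\tilde t)$ are genuinely independent of $y$ under Assumptions 3–4 — this needs the observation that $\sum_{\tilde y}g(\tilde t)$ telescopes to $|\bold Y|\,g(|\tilde t - t|)$ and that $f(y,t)$, being the normalizer that makes $\mathcal{P}_{yt}$ a probability distribution, is then itself forced to be independent of $y$, so any residual $y$-dependence collapses. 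A secondary subtlety is confirming that the set partition $\bold{T}=\bold{T}_t^{\text{single}}\sqcup\bold{T}_t^{\text{double}}$ interacts cleanly with the change of summation variable from $\tilde t$ to $\Delta=|\tilde t-t|$ (each $\Delta\in\{1,\dots,|t_{max}-t|\}$ corresponds to two values of $\tilde t$, and $\Delta=0$ or $\Delta>|t_{max}-t|$ to one) — but this is exactly the combinatorial identity already used in the sketch of Theorem \ref{thm:pmp}, so it can be invoked rather than re-derived. Once those two points are pinned down, the scalar $\alpha_t^{(k+1)}$ simply reads off as $\sqrt{c_{t_{max}}d_t^{2}|\mathcal N_{yt}|\big/\big(c_t d_{t_{max}}^{2}|\mathcal N_{y,t_{max}}|\big)}$ and the lemma follows.
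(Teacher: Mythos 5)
Your overall route is the same as the paper's: write out the closed-form expression for $\hat{\text{var}}(M_v^{pmp(k+1)})$, use Assumption 3 to factor $\mathcal{P}_{yt}(\tilde y,\tilde t)=f(y,t)g(y,\tilde y,|\tilde t-t|)$, use Assumption 4 to strip the label dependence from $g$ (and hence from $f$ and the normalizers), and read off $\alpha_t^{(k+1)}$ as a ratio of label-free scalars. The secondary points you flag (label-independence of $f(y,t)$ and of the denominator, the single/double change of variables to $\Delta$) are exactly the bookkeeping the paper does.

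However, there is a genuine gap in your induction hypothesis. You assume that at layer $k$ the representation covariances are \emph{invariant}, i.e.\ $\Sigma^{pmp(k)}_{XX}(\tilde y,\tilde t)=\Sigma^{pmp(k)}_{XX}(\tilde y,t_{max})$ for all $\tilde t$, and justify propagating this upward by saying a scalar factor ``passes through unchanged.'' This is self-contradictory and false for $k\ge 1$: the very conclusion of the lemma at layer $k$ is that $\Sigma^{pmp(k)}_{MM}(y,\tilde t)$ differs from its $t_{max}$ counterpart by a nontrivial scalar (if it were $1$, \JJnorm would be unnecessary), and linear aggregation $X^{pmp(k)}=A^{(k)}M^{pmp(k)}$ transfers that same scalar to $\Sigma^{pmp(k)}_{XX}(\tilde y,\tilde t)$. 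The correct induction hypothesis, as in the paper's Appendix proof, is the weaker statement $\Sigma^{pmp(k)}_{XX}(\tilde y,\tilde t)=\beta^{(k)}_{\tilde t}\,\Sigma^{pmp(k)}_{XX}(\tilde y,t_{max})$ for some label-free scalars $\beta^{(k)}_{\tilde t}$ (base case $\beta^{(0)}_{\tilde t}=1$). The repair is mechanical but necessary: in your numerator the coefficient attached to a neighbour at time $\tilde t$ becomes $g(|\tilde t-t|)\,\beta^{(k)}_{\tilde t}$ rather than $g(|\tilde t-t|)$ alone, so $c_t$ must be redefined as a weighted sum over $\Delta$ involving the $\beta^{(k)}_{\tilde t}$; it remains independent of $y$ and $\tilde y$ under Assumption 4, which is all the lemma needs, and the induction closes by setting $\beta^{(k+1)}_t=\gamma^{(k)}_t/\lambda_t^2$ in the paper's notation. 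As written, your argument only establishes the case $k+1=1$, whereas the lemma (and the definition of the JJ constant $\alpha_t=\alpha_t^{(K)}$) requires the statement at the final layer $K$.
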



\begin{definition}\label{lem:def1}
We define the constant of the final layer $\alpha_{t}$=$\alpha_{t}^{(K)} > 0$ as the JJ constant of node $v$. 

\end{definition}

\begin{definition}\label{lem:def2}
The \JJnorm is a normalization of the aggregated message to node $v \in \bold{V}\setminus\bold{V}_{\cdot,t_{max}}$ after the final layer of \PMP defined as:

\vspace{-8pt}

\begin{align}
M_v^{JJ} = \alpha_{t}\left(M_v^{pmp(K)}-\mu_M^{JJ}(y, t)\right)+\mu_M^{JJ}(y, t)
\end{align}
\vspace{-8pt}

where $\bold{V}_{y,t} = \{u \in \bold{V} \mid u \; \text{has label} \; y \; \text{and time} \; t\}$, $\bold{V}_{\cdot,t} = \{u \in \bold{V} \mid u \; \text{has time} \; t\}$, $\mu_M^{JJ}(y, t)={1\over {\mid \bold{V}_{y,t} \mid}}\sum_{w\in \bold{V}_{y,t}}M_w^{pmp(K)}$, $\mu_M^{JJ}(\cdot, t)={1\over {\mid \bold{V}_{\cdot,t} \mid}}\sum_{w\in \bold{V}_{\cdot,t}}M_w^{pmp(K)}$, and $\alpha_t$ is the JJ constant.

\vspace{-6pt}

\end{definition}

The 1st and 2nd moments of aggregated message processed by \JJnorm, $\hat{\hat{\mathbb E}}\left[M_v^{JJ}\right]$ and $\hat{\hat{\text{var}}}\left(M_v^{JJ}\right)$, are defined differently from the definition above. Refer to Appendix \ref{apdx:moments_in_jjnorm} for details.

\begin{theorem}\label{thm:jj}
The 1st and 2nd moments of aggregated message processed by \JJnorm is invariant.

\vspace{-5pt}

\textbf{Sketch of proof} We can calculate $\hat{\hat{\mathbb E}}\left[M_v^{JJ}\right]=\mu_{M}^{pmp(K)}(y)$, and $\hat{\hat{\text{var}}}\left(M_v^{JJ}\right)=\Sigma^{pmp(K)}_{MM}(y,t_{max})$.


So the 1st and 2nd moments of aggregated messages are invariant. See Appendix \ref{apdx:moments_in_jjnorm} for details.

\end{theorem}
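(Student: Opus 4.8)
The plan is to reduce Theorem~\ref{thm:jj} to two ingredients that are already available: (i) the first-moment invariance of \PMP carried through all $K$ layers of a linear-semantic-aggregation GNN (iterate Theorem~\ref{thm:pmp} and use that each linear map $A^{(k+1)}$ acts identically across time), so that $M_v^{pmp(K)}$ has approximate mean $\mu_M^{pmp(K)}(y)$ and approximate covariance $\Sigma_{MM}^{pmp(K)}(y,t)$ for a node $v$ with label $y$ and time $t$; and (ii) Lemma~\ref{lem:jj}, which under Assumption~4 produces a single positive scalar $\alpha_t = \alpha_t^{(K)}$ with $\alpha_t^2\,\Sigma_{MM}^{pmp(K)}(y,t) = \Sigma_{MM}^{pmp(K)}(y,t_{max})$ for every $y \in \bold{Y}$. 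Everything else is just the propagation of a scalar affine map through the (double-hatted) moments.

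First I would fix the bookkeeping of $\hat{\hat{\mathbb{E}}}$ and $\hat{\hat{\text{var}}}$ from Appendix~\ref{apdx:moments_in_jjnorm}: these refine the single-hat moments by treating the empirical centering statistics $\mu_M^{JJ}(y,t)$ and $\mu_M^{JJ}(\cdot,t)$ as deterministic constants that already coincide, to the order of approximation in use, with the true layer-$K$ mean, i.e.\ $\mu_M^{JJ}(y,t) \approx \mu_M^{pmp(K)}(y)$. With that convention the only properties I need are linearity, $\hat{\hat{\mathbb{E}}}[aZ+b] = a\,\hat{\hat{\mathbb{E}}}[Z] + b$, and affine covariance, $\hat{\hat{\text{var}}}(aZ+b) = a^2\,\hat{\hat{\text{var}}}(Z)$, for a scalar $a$ and a constant vector $b$; these transfer verbatim from the corresponding properties of the single-hat moments used for \PMP. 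I would also note that $\hat{\hat{\mathbb{E}}}$ and $\hat{\hat{\text{var}}}$ agree with $\hat{\mathbb{E}}$ and $\hat{\text{var}}$ on $M_v^{pmp(K)}$ itself (no centering statistic is involved there), so $\hat{\hat{\mathbb{E}}}[M_v^{pmp(K)}] = \mu_M^{pmp(K)}(y)$ and $\hat{\hat{\text{var}}}(M_v^{pmp(K)}) = \Sigma_{MM}^{pmp(K)}(y,t)$.

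The computation is then two lines. For the mean, apply linearity to $M_v^{JJ} = \alpha_t(M_v^{pmp(K)} - \mu_M^{JJ}(y,t)) + \mu_M^{JJ}(y,t)$ to get $\hat{\hat{\mathbb{E}}}[M_v^{JJ}] = \alpha_t\,\hat{\hat{\mathbb{E}}}[M_v^{pmp(K)}] - \alpha_t\,\mu_M^{JJ}(y,t) + \mu_M^{JJ}(y,t)$; substituting $\hat{\hat{\mathbb{E}}}[M_v^{pmp(K)}] = \mu_M^{pmp(K)}(y) \approx \mu_M^{JJ}(y,t)$ cancels the two $\alpha_t$-terms and leaves $\mu_M^{pmp(K)}(y)$, which is free of $t$. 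For the covariance, affine covariance removes the constant shift, so $\hat{\hat{\text{var}}}(M_v^{JJ}) = \alpha_t^2\,\hat{\hat{\text{var}}}(M_v^{pmp(K)}) = \alpha_t^2\,\Sigma_{MM}^{pmp(K)}(y,t)$, and Lemma~\ref{lem:jj} rewrites the right-hand side as $\Sigma_{MM}^{pmp(K)}(y,t_{max})$, again free of $t$. The boundary case $t = t_{max}$ is automatic, since there $\alpha_{t_{max}} = 1$ and \JJnorm is the identity. This shows both moments are invariant.

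The main obstacle is not in these manipulations but in justifying the convention that underlies them: that the sample averages $\mu_M^{JJ}(y,t)$ over $\bold{V}_{y,t}$ may legitimately stand in for the population mean $\mu_M^{pmp(K)}(y)$ — either a concentration/large-sample statement or a definitional choice to be pinned down in Appendix~\ref{apdx:moments_in_jjnorm}. A secondary point is consistency of the centering used for nodes whose label is withheld (time $t_{max}$): there \JJnorm reduces to the identity, but if the label-agnostic statistic $\mu_M^{JJ}(\cdot,t)$ enters the definition of the double-hatted moments one must invoke the first-moment invariance once more so that the label-averaged mean agrees with the label-$y$ mean at the order of approximation used. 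Once those points are settled, the theorem follows from the affine computation above together with Lemma~\ref{lem:jj}.
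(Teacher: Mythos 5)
Your proposal is correct and follows essentially the same route as the paper's Appendix~\ref{apdx:moments_in_jjnorm}: propagate the scalar affine map through the double-hatted moments, use the layer-$K$ first-moment invariance of \PMP to collapse the mean, and invoke Lemma~\ref{lem:jj} to rewrite $\alpha_t^2\,\Sigma_{MM}^{pmp(K)}(y,t)$ as $\Sigma_{MM}^{pmp(K)}(y,t_{max})$. The one bookkeeping point you flag as an ``obstacle'' is resolved in the paper exactly as you anticipate — by definitional fiat in the construction of $\hat{\hat{\mathbb E}}$ and $\hat{\hat{\text{var}}}$ (the sample mean $\mu_M^{JJ}(y,t)$ is expanded and each $\mathbb{E}[M_w^{pmp(K)}]$ replaced by $\mu_M^{pmp(K)}(y)$ for the first moment, and treated as a constant for the second) — and the $t=t_{max}$ case never arises since \JJnorm is only applied to $v\in\bold{V}\setminus\bold{V}_{\cdot,t_{max}}$.
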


\vspace{-2pt}
We further present an unbiased estimator $\hat{\alpha}_{t}$ of $\alpha_{t}$. Refer to Appendix \ref{apdx:JJnorm} for derivation.
\vspace{2pt}
{\scriptsize
\begin{align}
\hat{\alpha}_t = { {1\over \mid{\bold{V}_{\cdot,t_{max}}}\mid-1}\sum\limits_{i\in \bold{V}_{\cdot,t_{max}}}(M_v^{pmp(K)}-\mu_M^{JJ}(\cdot,t_{max}))^2  -{1\over \mid{\bold{V}_{\cdot,t}}\mid-1} \sum\limits_{y\in \bold{Y}}\sum\limits_{i \in \bold{V}_{y,t}}(\mu_M^{JJ}(y,t) -\mu_M^{JJ}(\cdot,t) )^2  \over{1\over \mid{\bold{V}_{\cdot,t}}\mid-1} \sum_{y\in\bold{Y}} \sum_{i \in \bold{V}_{y,t}}(M_v^{JJ}-\mu_M^{JJ}(y,t))^2}
\end{align}
}%
\vspace{-2pt}


\section{Experiments}
\subsection{Synthetic chronological split dataset}
\textbf{Temporal Stochastic Block Model(TSBM).} To assess the robustness and generalizability of proposed \IMPaCT methods on graphs satisfying assumptions 1, 2, and 3, we conducted experiments on synthetic graphs. In order to create repeatable and realistic chronological graphs, we defined the Temporal Stochastic Block Model(TSBM) as our graph generation algorithm. TSBM can be regarded as a special case of the Stochastic Block Model(SBM) that incorporates temporal information of nodes \citep{holland1983stochastic,deshpande2018contextual}. In the SBM, the probability matrix $\bold{P}_{y \tilde y}$ represents the probability of a connection between two nodes $i$ and $j$, where $y$ and $\tilde y$ denote the communities to which the nodes belong. Our study extends this concept to account for temporal information, differentiating communities based on both node labels and time. In the TSBM, the connection probability is represented by a 4-dimensional tensor $\bold{P}_{t \tilde t y \tilde y}$. We ensured that assumptions 1, 2, and 3 were satisfied. Specifically, the feature assigned to each node $\bold{x} \in \mathbb{R}^f$ was sampled from distributions depending solely on the label, defined as $\bold{x}_i = \mu(y) + k_{y} Z_i$. Here, $Z_i \in \mathbb{R}^f$ is an IID standard normal noise and $k_{y}$ represents the variance of features. To satisfy assumption 2, the time and label of each node were determined independently. To satisfy assumption 3, we first considered the possible forms of $g(y, \tilde y, |t - \tilde t|)$ and then determined $\bold{P}_{t \tilde t y \tilde y}$ accordingly. We used an exponentially decaying function with decay factor $\gamma_{y, \tilde y}$, defined as:

\vspace{-12pt}
\begin{align}
    g(y, \tilde y, |t - \tilde t|) = \gamma_{y, \tilde y}^{|t - \tilde t|} g(y , \tilde y, 0),\ \forall |t - \tilde t|>0
\end{align}
\vspace{-2pt}

\vspace{-10pt}
\textbf{Experimental Setup.} For our experiments, we employed the fundamental decoupled GNN, Simple Graph Convolution (SGC) \citep{SGC}, as the baseline model. Additionally, we investigated whether the methods proposed in this study could improve the performance of general spatial GNNs. Hence, we used a 2-layer GCN\citep{GCN} that performs averaging message passing as another baseline model. We applied the \MMP, \PMP, \PMP+\PNY, and \PMP+\JJnorm methods to the baselines. Since the semantic aggregation of GCN is nonlinear, layer-wise \JJnorm was applied, i.e. \JJnorm could not be applied only to the aggregated message in the last layer but was applied to the aggregated message in each layer.  To test the generalizability of \JJnorm which is based on assumption 4, we experimented on graphs that both satisfy and do not satisfy assumption 4. Furthermore, for cases where Assumption 4 was satisfied, common decay factor $\gamma$ can be defined. A smaller $\gamma$ corresponds to a graph where the connection probability decreases drastically. We also compared the trends in the performance of each \IMPaCT method by varying the value of $\gamma$. Detailed settings are provided in Appendix \ref{apdx:synthetic_setup}. The results are presented in Table \ref{table:synthetic} and Figure \ref{fig:synthetic}.


\vspace{3pt}
\subsection{Real world chronological split dataset}
To evaluate the performance of \IMPaCT on real-world data, we used the ogbn-mag, ogbn-arxiv, and ogbn-papers100m datasets from the OGB benchmark \citep{OGB}, which include temporal information with chronological splits. The datasets used are summarized in Appendix \ref{apdx:experiment_setting}. While these datasets cover different tasks, they share the common goal of predicting the labels of the most recent papers in graphs with paper-type nodes, all framed as multi-class classification problems.

\textbf{ogbn-mag} The ogbn-mag dataset, which contains a balanced number of nodes from different time periods, is well-suited for applying \PMP and \JJnorm. Given the large scale of the graph, scalable approaches are essential, and many studies have utilized decoupled GNNs with linear semantic aggregation to tackle this challenge \citep{SeHGNN, RpHGNN, CLGNN}. This makes the IMPaCT method directly applicable. Consequently, ogbn-mag is a primary focus in our study. More details on this dataset can be found in Appendix \ref{apdx:toy_experiment}. We adopted LDHGNN \citep{LDHGNN} as the baseline model, which is built on RpHGNN \citep{RpHGNN} and incorporates a curriculum learning approach. RpHGNN, a decoupled GNN, balances the trade-off between information retention and computational efficiency in message passing by utilizing a random projection squashing technique. Since RpHGNN’s overall semantic aggregation is linear, \PMP and \JJnorm can be seamlessly integrated. However, due to the graph's immense size, \PNY was not applicable. Therefore, we applied \MMP, \PMP, and \PMP+\JJnorm to the baseline and compared their performance. Each experiment was repeated 8 times, with hyperparameters set according to Wong et al. \citep{LDHGNN}. The details of the computing resources used for these experiments are described in Appendix \ref{apdx:experiment_setting}.

\textbf{ogbn-arxiv and ogbn-papers100m:} Both of these graph datasets exhibit significant variability in node counts across different time periods, making it challenging to directly improve performance using \IMPaCT methods. Instead, we applied \GenPMP to assess the generalizability of our approach. For ogbn-arxiv, since most studies utilize non-linear semantic aggregation, we modified the model to support \GenPMP. Specifically, we utilized a linearized version of RevGAT \citep{RevGAT}, replacing the GAT convolution layer with a Graph Convolution layer that performs linear topological aggregation. In the case of ogbn-papers100m, we selected the decoupled GNN, GAMLP+RLU, as our baseline. Additionally, in both experiments, we used GIANT embeddings \citep{GIANT} as initial features. We then applied \GenPMP to these baselines to observe changes in test accuracy.
\begin{table}[]
\small
\center
\caption{Experimental results on real-world datasets. Bolded values represent SOTA.}
\vspace{-10pt}
\begin{tabular}{llll}
\hline
Dataset & Model                       & Test Acc. $\pm$ Std. & Valid Acc. $\pm$ Std. \\ \hline
\multirow{4}{*}{ogbn-mag} & LDHGNN (baseline)              & 0.8789 $\pm$ 0.0024 & 0.8836 $\pm$ 0.0028 \\
                          & LDHGNN+MMP                     & 0.8945 $\pm$ 0.0018 & 0.8996 $\pm$ 0.0015 \\
                          & LDHGNN+PMP                     & 0.9093 $\pm$ 0.0040 & 0.9173 $\pm$ 0.0027 \\
                          & \textbf{LDHGNN+PMP+JJnorm}     & \textbf{0.9178 $\pm$ 0.0019} & \textbf{0.9236 $\pm$ 0.0030} \\ \hline
\multirow{3}{*}{ogbn-arxiv} & Linear-RevGAT+GIANT (baseline) & 0.7065 $\pm$ 0.0010 & 0.7287 $\pm$ 0.0009 \\
                          & Linear-RevGAT+GIANT+GenPMP     & 0.7468 $\pm$ 0.0006 & 0.7568 $\pm$ 0.0010 \\
                          & \textbf{RevGAT+SimTeG+TAPE}    & \textbf{0.7803 $\pm$ 0.0007} & \textbf{0.7846 $\pm$ 0.0004} \\ \hline
\multirow{3}{*}{ogbn-papers100m} & GAMLP+GIANT+RLU (baseline) & 0.6967 $\pm$ 0.0006 & 0.7305 $\pm$ 0.0004 \\
                          & GAMLP+GIANT+RLU+GenPMP         & 0.6976 $\pm$ 0.0010 & 0.7330 $\pm$ 0.0006 \\
                          & \textbf{GAMLP+GLEM+GIANT}      & \textbf{0.7037 $\pm$ 0.0002} & \textbf{0.7354 $\pm$ 0.0001} \\ \hline
\end{tabular}
\label{table:results}
\vspace{-12pt}
\end{table}

\subsection{Results}
\begin{figure}[!hbt]
    \vspace{-5pt}
	\centering
	\includegraphics[width=\textwidth]{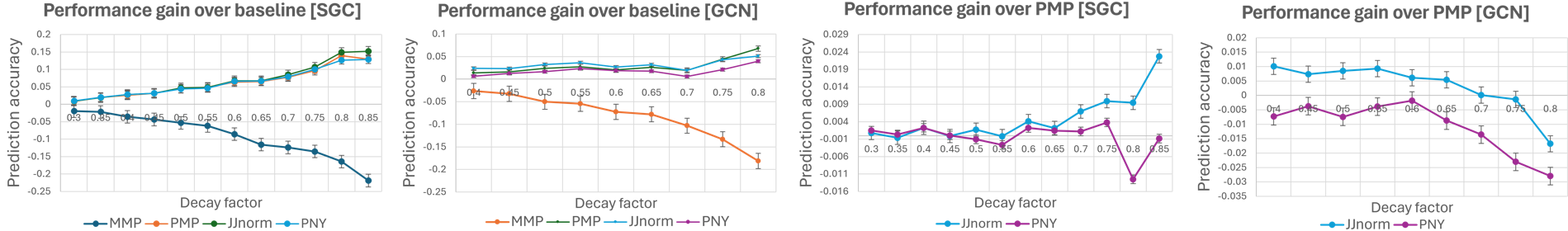}
    \vspace{-12pt}	
 \caption{ The left graphs show the performance gain of \IMPaCT over the baseline. The right graphs illustrate the gain of 2nd moment alignment methods over the 1st moment alignment method \PMP.}
	 \label{fig:synthetic}
  \vspace{-6pt}
\end{figure}

\begin{table}[!h]
    \fontsize{7}{8.4}\selectfont
    \centering
    \vspace{-10pt}
    \caption{Prediction accuracy and training time on synthetic graphs generated by TSBM. "Fixed $\gamma_{y_i y_j}$" represents scenarios satisfying Assumption 4, while "Random $\gamma_{y_i y_j}$" represents scenarios that do not. Time is reported for the entire training process over 200 epochs.}
    \vspace{-7pt}
    \begin{tabular}{l|lllll|lllll}
\hline
           & \multicolumn{5}{l|}{SGC}                  & \multicolumn{5}{l}{GCN}                   \\ \hline
           & Baseline & MMP   & PMP   & PNY   & JJnorm & Baseline & MMP   & PMP   & PNY   & JJnorm \\ \hline
Fixed $\gamma_{y_i y_j}$  & 0.2243 & 0.15   & 0.2653 & 0.2758 & \textbf{0.2777} & 0.2035 & 0.1439 & 0.2245 & 0.2178          & \textbf{0.2311} \\ \hline
Random $\gamma_{y_i y_j}$ & 0.1331 & 0.1063 & 0.1854 & 0.1832 & \textbf{0.1862} & 0.1298 & 0.1022 & 0.1565 & \textbf{0.1613} & 0.1609          \\ \hline
Time (sec) & 0.325    & 0.315 & 0.303 & 1.178 & 0.538  & 0.771    & 0.728 & 0.773 & 268.5 & 448.81 \\ \hline
\end{tabular}

    \label{table:synthetic}
    \vspace{-5pt}
\end{table}

\textbf{Experimental results.} \IMPaCT methods on both real and synthetic graphs showed superior performances. In the synthetic graph experiments, \PMP provided a performance gain of 4.7\% with SGC and 2.4\% with GCN over their respective baselines. 2nd moment alignment methods generally performed better than 1st moment alignment methods alone. \JJnorm mostly performed better than \PNY, except in cases where Assumption 4 did not hold and the baseline model was GCN. Experimental results further support the generalizability of our methods. Even with general spatial GNNs as the baseline, \IMPaCT yielded significant performance gains. Furthermore, \JJnorm improved performance over \PMP even when Assumption 4 was violated.

On the real-world ogbn-mag dataset, applying \PMP+\JJnorm to LDHGNN showed a significant 3.8\% performance improvement over state-of-the-art methods. Further experiments on the ogbn-arxiv and ogbn-papers100m datasets confirmed the generality of our assumptions across different datasets. On ogbn-arxiv, applying \GenPMP to Linear-RevGAT+GIANT improved test accuracy by 4.0\%, while on ogbn-papers100m, applying \GenPMP to baseline resulted in a 0.09\% improvement. All improvements were statistically significant. Based on these results, we offer guidelines for selecting \IMPaCT models for chronological split datasets in Appendix \ref{apdx:scalability}.

\begin{table}[]
    \vspace{-15pt}
    \fontsize{7}{8.4}\selectfont

    \center
    \caption{Scalability of \IMPaCT methods. "Graph." indicates method is applied by graph reconstruction. $N_c=|\bold{Y}||\bold{T}|$, $R$ is number of training epochs, and \JJnorm\textdagger indicates layer-wise \JJnorm.}

    \begin{tabular}{lllll}
    \hline
     &
      \multicolumn{2}{l}{{ Decoupled GNN}} &
      \multicolumn{2}{l}{{ General spatial GNN}} \\ \cline{2-5} 
    \multirow{-2}{*}{Method} &
      { Graph.} &
      { Message passing} &
      { Graph.} &
      { Message passing} \\ \hline
    \MMP &
      { $\mathcal{O}(|E|)$} &
      { $\mathcal{O}(|E|fK)$} &
      { $\mathcal{O}(|E|)$} &
      { $\mathcal{O}(R|E|fK)$} \\ \hline
    \PMP &
      { $\mathcal{O}(|E|)$} &
      { $\mathcal{O}(|E|fK)$} &
      { $\mathcal{O}(|E|)$} &
      { $\mathcal{O}(R|E|fK)$} \\ \hline
     \GenPMP &
      { $\mathcal{O}(|E|)$} &
      { $\mathcal{O}(|E|fK)$} &
      { $\mathcal{O}(|E|)$} &
      { $\mathcal{O}(R|E|fK)$} \\ \hline
    \PNY &
      { -} &
      { $\mathcal{O}(Kf^2(N_cf+N_c^2+N)+|E|f)$} &
      { -} &
      { $\mathcal{O}(RKf^2(N_cf+N_c^2+N)+R|E|f)$} \\ \hline
    \JJnorm &
      { -} &
      { $\mathcal{O}(Nf)$} &
      { -} &
      { $\mathcal{O}(RNf)$} \\
    \JJnorm \textdagger&
      { -} &
      { $\mathcal{O}(NfK)$} &
      { -} &
      { $\mathcal{O}(RNfK)$} \\ \hline
    \end{tabular}

    \vspace{3pt}
    \label{table:scalability}
    \vspace{-25pt}

\end{table}
\textbf{Scalability.} The time complexity of \IMPaCT methods is summarized in Table \ref{table:scalability}. Detailed analyses can be found in Appendix \ref{apdx:scalability}. All methods exhibit linear complexity with respect to the number of nodes and edges. In particular, 1st order alignment methods can be implemented solely through graph modification, with no additional computational cost during the training process. Both \MMP and \PMP can be realized by simply adding or removing edges in the graph, while \GenPMP is implemented by assigning weights to individual edges. Furthermore, when used in decoupled GNNs, \PNY can be applied by adjusting the aggregated feature at each layer, and \JJnorm can correct the final representation. Hence, all \IMPaCT operations occur during preprocessing, offering not only high scalability but also adaptability.

However, when applied to general spatial GNNs, operations of 2nd order alignment methods are multiplied by the number of epochs, making it challenging to maintain scalability for \PNY and \JJnorm. While parallelization could speed up the computations, \PNY requires eigenvalue decomposition of the covariance matrix, making parallelization difficult. In contrast, \JJnorm can be parallelized by precomputing the mean feature values for each community (based on labels and time), allowing for efficient calculation of the scale factor $\alpha_t$ and feature updates. With efficient implementations, computation times can be reduced to an affordable level.

\section{Conclusions}
In this study, we addressed the domain adaptation challenges in graph data induced by chronological splits by proposing invariant message passing functions, \IMPaCT. We analyzed and tackled the domain adaptation problem in graph datasets with chronological splits, presenting robust and realistic assumptions based on observable properties in real-world graphs. Based on these assumptions, we proposed \IMPaCT, which preserves the invariance of both the 1st and 2nd moments of aggregated messages during the message passing step. We demonstrated its adaptability and scalability through experiments on both real-world citation graphs and synthetic graphs. Notably, on the ogbn-mag citation graph, we achieved substantial performance improvements over previous state-of-the-art methods, with a 3.0\% increase in accuracy using the 1st moment alignment method and a 3.8\% improvement when combining 1st and 2nd moment alignment. Furthermore, we validated the generality of our assumptions through experiments on the ogbn-arxiv and ogbn-papers100m.

\textbf{Limitations.} From an experimental standpoint, further investigation is needed to demonstrate the robustness and generalizability of \IMPaCT across a wider variety of baseline models. Additionally, exploring the parallel implementation of \JJnorm to assess how efficiently it can be applied to general spatial GNNs could be a topic for future work. From a theoretical perspective, our discussions were limited to spatial GNNs and assumed that the semantic aggregation satisfies the G-Lipschitz condition. Future work could focus on deriving tighter bounds under more realistic constraints.



\bibliographystyle{impact} 
\bibliography{impact}

\begin{thebibliography}{36}
\providecommand{\natexlab}[1]{#1}
\providecommand{\url}[1]{\texttt{#1}}
\expandafter\ifx\csname urlstyle\endcsname\relax
  \providecommand{\doi}[1]{doi: #1}\else
  \providecommand{\doi}{doi: \begingroup \urlstyle{rm}\Url}\fi

\bibitem[Arjovsky et~al.(2017)Arjovsky, Chintala, and Bottou]{arjovsky2017wasserstein}
Martin Arjovsky, Soumith Chintala, and L{\'e}on Bottou.
\newblock Wasserstein generative adversarial networks.
\newblock In \emph{International conference on machine learning}, pp.\  214--223. PMLR, 2017.

\bibitem[Bai et~al.(2020)Bai, Zhang, Wu, and Nie]{bai2020temporal}
Ting Bai, Youjie Zhang, Bin Wu, and Jian-Yun Nie.
\newblock Temporal graph neural networks for social recommendation.
\newblock In \emph{2020 IEEE International Conference on Big Data (Big Data)}, pp.\  898--903. IEEE, 2020.

\bibitem[Barabasi \& Oltvai(2004)Barabasi and Oltvai]{barabasi2004network}
Albert-Laszlo Barabasi and Zoltan~N Oltvai.
\newblock Network biology: understanding the cell's functional organization.
\newblock \emph{Nature reviews genetics}, 5\penalty0 (2):\penalty0 101--113, 2004.

\bibitem[Ben-David et~al.(2006)Ben-David, Blitzer, Crammer, and Pereira]{ben2006analysis}
Shai Ben-David, John Blitzer, Koby Crammer, and Fernando Pereira.
\newblock Analysis of representations for domain adaptation.
\newblock \emph{Advances in neural information processing systems}, 19, 2006.

\bibitem[Ben-David et~al.(2010)Ben-David, Blitzer, Crammer, Kulesza, Pereira, and Vaughan]{ben2010theory}
Shai Ben-David, John Blitzer, Koby Crammer, Alex Kulesza, Fernando Pereira, and Jennifer~Wortman Vaughan.
\newblock A theory of learning from different domains.
\newblock \emph{Machine learning}, 79:\penalty0 151--175, 2010.

\bibitem[Chien et~al.(2022)Chien, Chang, Hsieh, Yu, Zhang, Milenkovic, and Dhillon]{GIANT}
Eli Chien, Wei~Cheng Chang, Cho~Jui Hsieh, Hsiang~Fu Yu, Jiong Zhang, Olgica Milenkovic, and Inderjit~S Dhillon.
\newblock Node feature extraction by self-supervised multi-scale neighborhood prediction.
\newblock In \emph{10th International Conference on Learning Representations, ICLR 2022}, 2022.

\bibitem[Deshpande et~al.(2018)Deshpande, Sen, Montanari, and Mossel]{deshpande2018contextual}
Yash Deshpande, Subhabrata Sen, Andrea Montanari, and Elchanan Mossel.
\newblock Contextual stochastic block models.
\newblock \emph{Advances in Neural Information Processing Systems}, 31, 2018.

\bibitem[Feng et~al.(2019)Feng, You, Zhang, Ji, and Gao]{HGNN}
Yifan Feng, Haoxuan You, Zizhao Zhang, Rongrong Ji, and Yue Gao.
\newblock Hypergraph neural networks.
\newblock In \emph{Proceedings of the AAAI conference on artificial intelligence}, volume~33, pp.\  3558--3565, 2019.

\bibitem[Gao et~al.(2022)Gao, Wang, He, and Li]{gao2022graph}
Chen Gao, Xiang Wang, Xiangnan He, and Yong Li.
\newblock Graph neural networks for recommender system.
\newblock In \emph{Proceedings of the Fifteenth ACM International Conference on Web Search and Data Mining}, pp.\  1623--1625, 2022.

\bibitem[Germain et~al.(2013)Germain, Habrard, Laviolette, and Morvant]{germain2013pac}
Pascal Germain, Amaury Habrard, Fran{\c{c}}ois Laviolette, and Emilie Morvant.
\newblock A pac-bayesian approach for domain adaptation with specialization to linear classifiers.
\newblock In \emph{International conference on machine learning}, pp.\  738--746. PMLR, 2013.

\bibitem[Germain et~al.(2016)Germain, Habrard, Laviolette, and Morvant]{germain2016new}
Pascal Germain, Amaury Habrard, Fran{\c{c}}ois Laviolette, and Emilie Morvant.
\newblock A new pac-bayesian perspective on domain adaptation.
\newblock In \emph{International conference on machine learning}, pp.\  859--868. PMLR, 2016.

\bibitem[Hamilton et~al.(2017)Hamilton, Ying, and Leskovec]{Graphsage}
Will Hamilton, Zhitao Ying, and Jure Leskovec.
\newblock Inductive representation learning on large graphs.
\newblock \emph{Advances in neural information processing systems}, 30, 2017.

\bibitem[Holland et~al.(1983)Holland, Laskey, and Leinhardt]{holland1983stochastic}
Paul~W Holland, Kathryn~Blackmond Laskey, and Samuel Leinhardt.
\newblock Stochastic blockmodels: First steps.
\newblock \emph{Social networks}, 5\penalty0 (2):\penalty0 109--137, 1983.

\bibitem[Hu et~al.(2023)Hu, Hooi, and He]{RpHGNN}
Jun Hu, Bryan Hooi, and Bingsheng He.
\newblock Efficient heterogeneous graph learning via random projection.
\newblock \emph{arXiv preprint arXiv:2310.14481}, 2023.

\bibitem[Hu et~al.(2020)Hu, Fey, Zitnik, Dong, Ren, Liu, Catasta, and Leskovec]{OGB}
Weihua Hu, Matthias Fey, Marinka Zitnik, Yuxiao Dong, Hongyu Ren, Bowen Liu, Michele Catasta, and Jure Leskovec.
\newblock Open graph benchmark: Datasets for machine learning on graphs.
\newblock \emph{Advances in neural information processing systems}, 33:\penalty0 22118--22133, 2020.

\bibitem[Kipf \& Welling(2017)Kipf and Welling]{GCN}
Thomas~N. Kipf and Max Welling.
\newblock Semi-supervised classification with graph convolutional networks.
\newblock In \emph{International Conference on Learning Representations (ICLR)}, 2017.

\bibitem[Lee et~al.(2019)Lee, Batra, Baig, and Ulbricht]{lee2019sliced}
Chen-Yu Lee, Tanmay Batra, Mohammad~Haris Baig, and Daniel Ulbricht.
\newblock Sliced wasserstein discrepancy for unsupervised domain adaptation.
\newblock In \emph{Proceedings of the IEEE/CVF conference on computer vision and pattern recognition}, pp.\  10285--10295, 2019.

\bibitem[Li et~al.(2021)Li, M{\"u}ller, Ghanem, and Koltun]{RevGAT}
Guohao Li, Matthias M{\"u}ller, Bernard Ghanem, and Vladlen Koltun.
\newblock Training graph neural networks with 1000 layers.
\newblock In \emph{International conference on machine learning}, pp.\  6437--6449. PMLR, 2021.

\bibitem[Liu et~al.(2016)Liu, Wu, Wang, and Tan]{liu2016predicting}
Qiang Liu, Shu Wu, Liang Wang, and Tieniu Tan.
\newblock Predicting the next location: A recurrent model with spatial and temporal contexts.
\newblock In \emph{Proceedings of the AAAI Conference on Artificial Intelligence}, volume~30. Association for the Advancement of Artificial Intelligence (AAAI), 2016.

\bibitem[Pareja et~al.(2020)Pareja, Domeniconi, Chen, Ma, Suzumura, Kanezashi, Kaler, Schardl, and Leiserson]{pareja2020evolvegcn}
Aldo Pareja, Giacomo Domeniconi, Jie Chen, Tengfei Ma, Toyotaro Suzumura, Hiroki Kanezashi, Tim Kaler, Tao Schardl, and Charles Leiserson.
\newblock Evolvegcn: Evolving graph convolutional networks for dynamic graphs.
\newblock In \emph{Proceedings of the AAAI conference on artificial intelligence}, volume~34, pp.\  5363--5370, 2020.

\bibitem[Redko et~al.(2017)Redko, Habrard, and Sebban]{redko2017theoretical}
Ievgen Redko, Amaury Habrard, and Marc Sebban.
\newblock Theoretical analysis of domain adaptation with optimal transport.
\newblock In \emph{Machine Learning and Knowledge Discovery in Databases: European Conference, ECML PKDD 2017, Skopje, Macedonia, September 18--22, 2017, Proceedings, Part II 10}, pp.\  737--753. Springer, 2017.

\bibitem[Rossi et~al.(2020)Rossi, Frasca, Chamberlain, Eynard, Bronstein, and Monti]{rossi2020sign}
Emanuele Rossi, Fabrizio Frasca, Ben Chamberlain, Davide Eynard, Michael Bronstein, and Federico Monti.
\newblock Sign: Scalable inception graph neural networks.
\newblock \emph{arXiv preprint arXiv:2004.11198}, 2020.

\bibitem[Shen et~al.(2018)Shen, Qu, Zhang, and Yu]{shen2018wasserstein}
Jian Shen, Yanru Qu, Weinan Zhang, and Yong Yu.
\newblock Wasserstein distance guided representation learning for domain adaptation.
\newblock In \emph{Proceedings of the AAAI conference on artificial intelligence}, volume~32, 2018.

\bibitem[Shi et~al.(2022)Shi, Cai, Shavit, Mu, Feng, and Zhang]{clustergnn}
Yan Shi, Jun-Xiong Cai, Yoli Shavit, Tai-Jiang Mu, Wensen Feng, and Kai Zhang.
\newblock Clustergnn: Cluster-based coarse-to-fine graph neural network for efficient feature matching.
\newblock In \emph{Proceedings of the IEEE/CVF conference on computer vision and pattern recognition}, pp.\  12517--12526, 2022.

\bibitem[Van~der Maaten \& Hinton(2008)Van~der Maaten and Hinton]{van2008visualizing}
Laurens Van~der Maaten and Geoffrey Hinton.
\newblock Visualizing data using t-sne.
\newblock \emph{Journal of machine learning research}, 9\penalty0 (11), 2008.

\bibitem[Velickovic et~al.(2017)Velickovic, Cucurull, Casanova, Romero, Lio, Bengio, et~al.]{velickovic2017graph}
Petar Velickovic, Guillem Cucurull, Arantxa Casanova, Adriana Romero, Pietro Lio, Yoshua Bengio, et~al.
\newblock Graph attention networks.
\newblock \emph{stat}, 1050\penalty0 (20):\penalty0 10--48550, 2017.

\bibitem[Wang(2024)]{LDHGNN}
Yili Wang.
\newblock Heterogeneous graph neural networks with loss-decrease-aware curriculum learning.
\newblock \emph{arXiv preprint arXiv:2405.06522}, 2024.

\bibitem[Wong et~al.(2024)Wong, Yang, Fu, and Yao]{CLGNN}
Zhen~Hao Wong, Hansi Yang, Xiaoyi Fu, and Quanming Yao.
\newblock Loss-aware curriculum learning for heterogeneous graph neural networks.
\newblock \emph{arXiv preprint arXiv:2402.18875}, 2024.

\bibitem[Wu et~al.(2019)Wu, Souza, Zhang, Fifty, Yu, and Weinberger]{SGC}
Felix Wu, Amauri Souza, Tianyi Zhang, Christopher Fifty, Tao Yu, and Kilian Weinberger.
\newblock Simplifying graph convolutional networks.
\newblock In \emph{International conference on machine learning}, pp.\  6861--6871. PMLR, 2019.

\bibitem[Wu et~al.(2022)Wu, Zhang, Yan, and Wipf]{wuhandling}
Qitian Wu, Hengrui Zhang, Junchi Yan, and David Wipf.
\newblock Handling distribution shifts on graphs: An invariance perspective.
\newblock In \emph{International Conference on Learning Representations}, 2022.

\bibitem[Wu et~al.(2018)Wu, Ramsundar, Feinberg, Gomes, Geniesse, Pappu, Leswing, and Pande]{wu2018moleculenet}
Zhenqin Wu, Bharath Ramsundar, Evan~N Feinberg, Joseph Gomes, Caleb Geniesse, Aneesh~S Pappu, Karl Leswing, and Vijay Pande.
\newblock Moleculenet: a benchmark for molecular machine learning.
\newblock \emph{Chemical science}, 9\penalty0 (2):\penalty0 513--530, 2018.

\bibitem[Yang et~al.(2023)Yang, Yan, Pan, Ye, and Fan]{SeHGNN}
Xiaocheng Yang, Mingyu Yan, Shirui Pan, Xiaochun Ye, and Dongrui Fan.
\newblock Simple and efficient heterogeneous graph neural network.
\newblock In \emph{Proceedings of the AAAI Conference on Artificial Intelligence}, volume~37, pp.\  10816--10824, 2023.

\bibitem[Ying et~al.(2018)Ying, He, Chen, Eksombatchai, Hamilton, and Leskovec]{ying2018graph}
Rex Ying, Ruining He, Kaifeng Chen, Pong Eksombatchai, William~L Hamilton, and Jure Leskovec.
\newblock Graph convolutional neural networks for web-scale recommender systems.
\newblock In \emph{Proceedings of the 24th ACM SIGKDD international conference on knowledge discovery \& data mining}, pp.\  974--983, 2018.

\bibitem[Zeng et~al.(2019)Zeng, Zhou, Srivastava, Kannan, and Prasanna]{zenggraphsaint}
Hanqing Zeng, Hongkuan Zhou, Ajitesh Srivastava, Rajgopal Kannan, and Viktor Prasanna.
\newblock Graphsaint: Graph sampling based inductive learning method.
\newblock In \emph{International Conference on Learning Representations}, 2019.

\bibitem[Zhang et~al.(2022)Zhang, Yin, Sheng, Li, Ouyang, Li, Tao, Yang, and Cui]{GAMLP}
Wentao Zhang, Ziqi Yin, Zeang Sheng, Yang Li, Wen Ouyang, Xiaosen Li, Yangyu Tao, Zhi Yang, and Bin Cui.
\newblock Graph attention multi-layer perceptron.
\newblock In \emph{Proceedings of the 28th ACM SIGKDD Conference on Knowledge Discovery and Data Mining}, pp.\  4560--4570, 2022.

\bibitem[Zhu et~al.(2021)Zhu, Ponomareva, Han, and Perozzi]{shift_robust}
Qi~Zhu, Natalia Ponomareva, Jiawei Han, and Bryan Perozzi.
\newblock Shift-robust gnns: Overcoming the limitations of localized graph training data.
\newblock \emph{Advances in Neural Information Processing Systems}, 34:\penalty0 27965--27977, 2021.

\end{thebibliography}

\appendix
\section{Appendix}

\subsection{Motivation for Assumptions}\label{apdx:assumptions}
In this study, we make three assumptions regarding temporal graphs.
\begin{align}
    \textit{Assumption 1}: &P_{te}(Y) = P_{tr}(Y) \\
    \textit{Assumption 2}: &P_{te}(X\mid Y) = P_{tr}(X\mid Y) \\
    \textit{Assumption 3}: &\mathcal{P}_{y t} (\tilde y ,\tilde t) = f(y, t) g(y, \tilde y, \mid \tilde t-t\mid), \ \forall t, \tilde t \in \bold{T}, y, \tilde y\in \bold{Y}
\end{align}

Combining Assumption 1 and Assumption 2, we derive $P_{te}(X, Y) = P_{tr}(X, Y)$. This is a fundamental assumption in machine learning problems, implying that the initial feature distribution does not undergo significant shifts. However, in the real world graph dataset such as ogbn-mag dataset, features are embeddings derived from the abstracts of papers using a language model. It is crucial to verify whether these features remain constant over time, as per our assumption. Assumption 3 is a specific assumption derived from a close observation of the characteristics of temporal graphs, which requires justification based on actual data. To address these questions, we conducted a visual analysis based on the real-world temporal graph data from the ogbn-mag dataset. Statistics for ogbn-mag are provided in Appendix \ref{apdx:toy_experiment}.

\subsubsection{Invariance of initial features}
First, to verify whether the distribution of features change over time, we calculated the average node features for each community, i.e., for each unique (label, time) pair. Our objective was to demonstrate that the distance between mean features of nodes with the same label but different times is significantly smaller than the distance between mean features of nodes with different labels. Given that the features are high-dimensional embeddings, using simple norms as distances might be inappropriate. Therefore, we employed the unsupervised learning method t-SNE \citep{van2008visualizing} to project these points onto a 2D plane, verifying that nodes with the same label but different times form distinct clusters. 
For the t-SNE analysis, we set the maximum number of iterations to 1000, perplexity to 30, and learning rate to 50.

We computed the mean feature for each community defined by the same (label, time) pair. Points corresponding to communities with the same label are represented in the same color. Thus, there are $|\bold{T}|$ points for each color, resulting in a total of $|\bold{Y}||\bold{T}|$ points in the left part of figure \ref{fig:tsne}.

Given that the number of labels is $|\bold{Y}|=349$, it is challenging to discern trends in a single graph displaying all points. The figure on the right considers only the 15 labels with the most nodes, redrawing the graph for clarity.

\begin{figure}[hbt!]
	\centering
	\includegraphics[width=0.80\textwidth]{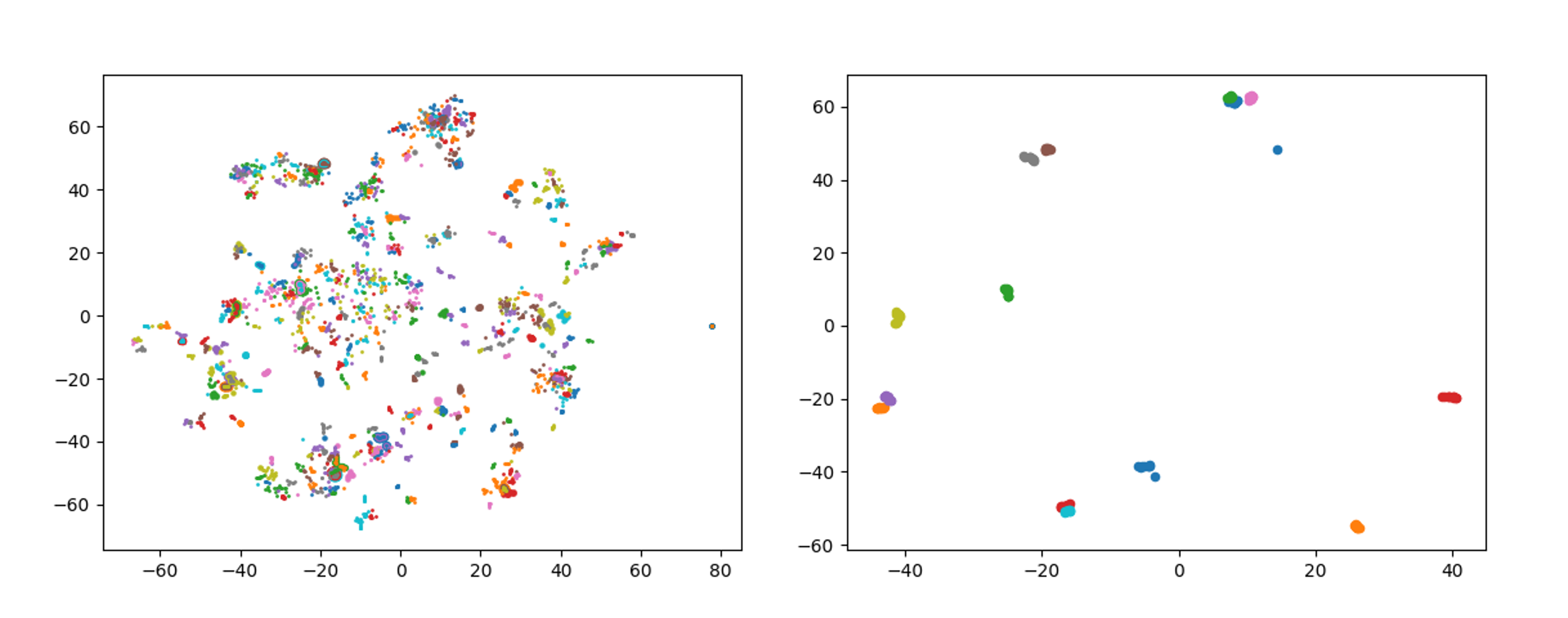}
	\vspace{-0.1in}
	\caption{2D projection of each community's mean feature by t-SNE. Points corresponding to communities with the same label are represented in the same color. [Left] Plot for all 349 labels, [Right] Plot for 15 labels with the most nodes.}
	 \label{fig:tsne}
\end{figure}

The clusters of nodes with the same color are clearly identifiable. While this analysis only consider 1st moment of initial feature of nodes, and does not confirm invariance for statistics other than the mean, it does show that the distance between mean features of nodes with the same label but different times is much smaller than the distance between mean features of nodes with different labels.

\subsubsection{Motivation for Assumption 3}

Assumption 3 posits the separability of relative connectivity. Verifying this hypothesis numerically without additional assumptions about the connection distribution is challenging. Therefore, we aim to motivate Assumption 3 through a visualization of relative connectivity.

Consider fixing $ y $ and $ \tilde y $, and then examining the estimated relative connectivity $\mathcal{P}_{y, t} (\tilde y, \tilde t)$ as a function of $ t $ and $ \tilde t $. Since $\mathcal{P}_{y, t} (\tilde y, \tilde t) = f(y, t) g(y, \tilde y, \mid \tilde t - t \mid)$, the graph of $\mathcal{P}_{y, t} (\tilde y, \tilde t)$ for different $ t $ should have similar shapes, differing only by a scaling factor determined by $ f(y, t) $. In other words, by appropriately adjusting the scale, graphs for different $ t $ should overlap.

Given $ |\bold{Y}| = 349 $, plotting this for all label pairs $ y, \tilde y $ is impractical. Therefore we plotted graphs for few labels connected by a largest number of edges, plotting their relative connectivity. Although the ogbn-mag dataset is a directed graph, we treated it as undirected for defining neighboring nodes.

Graphs in different colors represent different target node times $ t $, with the X-axis showing the relative time $ \tilde t - t $ for neighboring nodes. Nodes with times $ t = 2018 $ and $ t = 2019 $ were excluded since they belong to the validation and test datasets, respectively. The data presented in graph \ref{fig:ERC} are the unscaled relative connectivity.

While plotting these graphs for all label pairs is infeasible, we calculated the weighted average relative connectivity for cases where $ y = \tilde y $ and $ y \neq \tilde y $ to understand the overall distribution. Specifically, for each $ t $, we plotted the following values:
\begin{figure}[hbt!]
	\centering
	\includegraphics[width=0.99\textwidth]{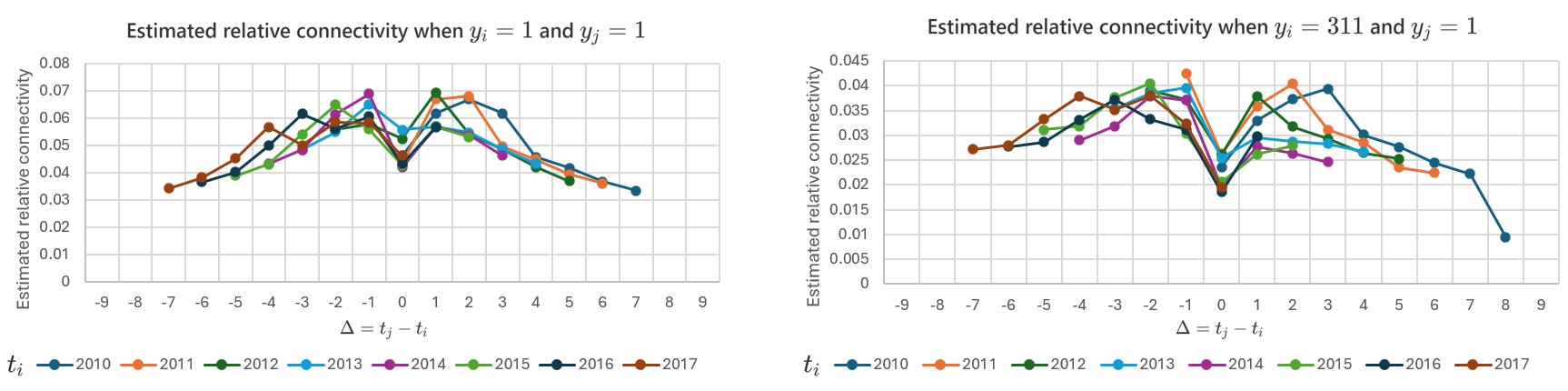}
	\vspace{-0.1in}
	\caption{Estimated relative connectivity. [Left] when $y=1$ and $\tilde y=1$, [Right] when $y=311$ and $\tilde y=1$.}
	 \label{fig:ERC}
\end{figure}

\begin{align}
P_{\text{Same label}}(t, \tilde t) = \frac{|\{(u, v) \in E \mid u,v \text{ have same label}, u\text{ has time } t, v \text{ has time } \tilde t\}|}{|\{(u, v) \in E \mid u,v \text{ have same label}\}|}
\end{align}

\begin{align}
P_{\text{Diff label}}(t, \tilde t) = \frac{|\{(u, v) \in E \mid u,v \text{ have different label}, u\text{ has time } t, v\text{ has time }\tilde t\}|}{|\{(u, v) \in E \mid u,v \text{ have different label}\}|}
\end{align}

These statistics represent the weighted average relative connectivity for each $ y, t $ pair, weighted by the number of communities defined by each $( y, t )$ pair. Data for $ t = 2018 $ and $ t = 2019 $ were excluded, and no scaling corrections were applied. 
\begin{figure}[hbt!]
	\centering
	\includegraphics[width=0.99\textwidth]{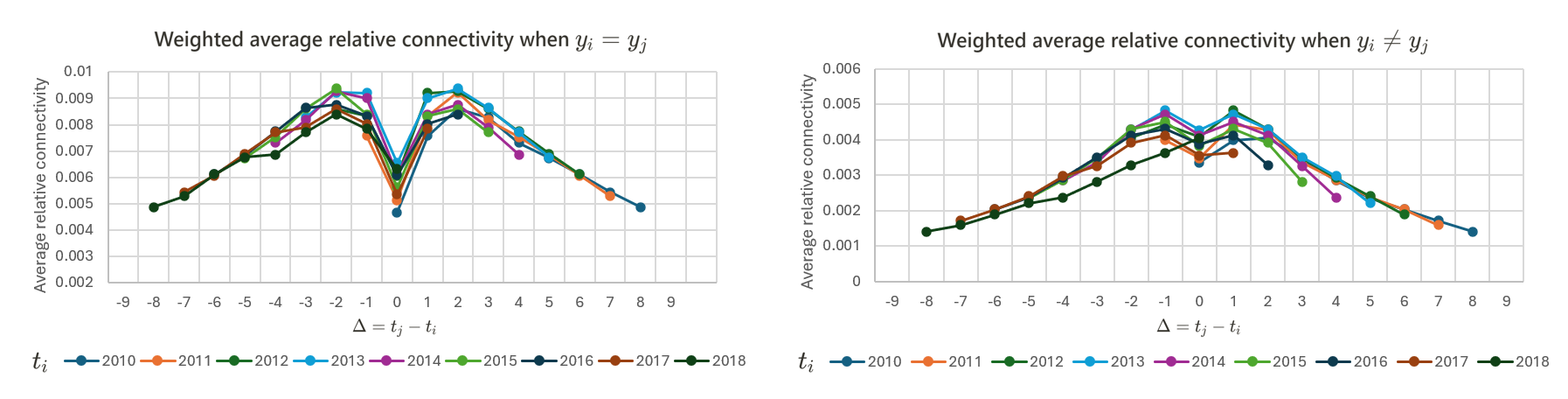}
	\vspace{-0.1in}
	\caption{Weighted average relative connectivity. [Left] when $y=\tilde y$, [Right] when $y\neq \tilde y$.}
	 \label{fig:WARC}
\end{figure}

The graphs \ref{fig:ERC}, \ref{fig:WARC} reveal that the shape of graphs for different $ t $ are similar and symmetric, supporting Assumption 3. Although this analysis is not a formal proof, it serves as a necessary condition that supports the validity of the separability assumption.

\subsection{Toy experiment}
\label{apdx:toy_experiment}
The purpose of toy experiment was to compare test accuracy obtained when dataset was split chronologically and split randomly regardless of time information. We further investigated whether incorporating temporal information in the form of time positional encoding significantly influences the distribution of neighboring nodes.

We conduct this toy experiment on ogbn-mag, a chronological heterogeneous graph within the Open Graph Benchmark \citep{OGB}, comprising Paper, Author, Institution, and Fields of study nodes. Only paper nodes feature temporal information. Detailed node and edge statistics of ogbn-mag dataset are provided in table \ref{table:edge} and \ref{table:node}. In this graph, paper nodes are divided into train, validation, and test nodes based on publication year, with the objective of classifying test and validation nodes into one of 349 labels. The performance metric is accuracy, representing the proportion of correctly labeled nodes among all test nodes. 

Initial features were assigned only to paper nodes. In the chronological split dataset, nodes from the year 2019 were designated as the test set, while nodes from years earlier than 2018 were assigned to the training set. Time positional embedding was implemented using sinusoidal signals with 20 floating-point numbers, and these embeddings were concatenated with the initial features.

\begin{table}[!h]
\small
    \vspace{-10pt}
    \centering
    \caption{Type and number of nodes in ogbn-mag.}

    \begin{tabular}{llll}
    \hline
    \multicolumn{1}{l|}{Node   type}                        & \#Train nodes                 & \#Validation nodes  & \#Test nodes \\ \hline
    \multicolumn{1}{l|}{Paper}         & {59,965} & {64,879} & 41,939 \\ \hline
    \multicolumn{1}{l|}{Author}      & {1,134,649} & {0} & 0               \\ \hline
    \multicolumn{1}{l|}{Institution} & {8,740}     & {0} & 0               \\ \hline
    \multicolumn{1}{l|}{Field of study} & {59,965} & {0}      & 0      \\ \hline
    \end{tabular}
    \label{table:node}

\end{table}
\begin{table}[!h]
\small
    \vspace{-10pt}
    \centering
    \caption{Type and number of edges in obgn-mag. * indicates the type of edges connect nodes with temporal information.}

    \begin{tabular}{cccc}
    \hline
    {Source   type} & {Edges type}     & \multicolumn{1}{c|}{Destination   type} & \#Edges   \\ \hline
    {Author} & {affiliated with} & \multicolumn{1}{c|}{Institution} & 1,043,998 \\ 
    {Author} & {writes}          & \multicolumn{1}{c|}{Paper}       & 7,145,660 \\ 
    Paper         & cites*         & \multicolumn{1}{c|}{Paper}              & 5,416,271 \\ 
    Paper         & has a topic of & \multicolumn{1}{c|}{Fields of study}    & 7,505,078 \\ \hline
    \end{tabular}
    \label{table:edge}

\end{table}

SeHGNN \citep{SeHGNN} was employed as baseline model for experimentation. The rationale for employing SeHGNN lies in its ability to aggregate semantics from diverse metapaths, thereby ensuring expressiveness, while also enabling fast learning due to neighbor aggregation operations being performed only during preprocessing. Each experiment was conducted four times using different random seeds. The hyperparameters and settings used in the experiments were identical to those presented by Yang et al. \citep{SeHGNN}.

Preprocessing was performed on a 48 core 2X Intel Xeon Platinum 8268 CPU machine with 768GB of RAM. Training took place on a NVIDIA Tesla P100 GPU machine with 28 Intel Xeon E5-2680 V4 CPUs and 128GB of RAM.

\subsection{Settings for experiments on real-world datasets.}\label{apdx:experiment_setting}
\begin{table}[]
\center
\caption{Summary of real-world graph datasets used in the experiments.}
    \begin{tabular}{lllll}
    \hline
    Dataset & Type          & Task                                         & \#Nodes    & \#Edges      \\ \hline
    ogbn-mag        & Heterogeneous & Classify 349 venues for each paper           & 1,939,743   & 21,111,007   \\
    ogbn-arxiv      & Homogeneous   & Classify 40 primary categories for arXiv papers & 169,343     & 1,166,243    \\
    ogbn-papers100m & Homogeneous   & Classify 172 subject areas for arXiv papers   & 111,059,956 & 1,615,685,872 \\ \hline
    \end{tabular}
\label{table:graph_summary}
\end{table}
Summary of statistics on ogbn-mag, ogbn-arxiv, and ogbn-papers100m are shown in table \ref{table:dataset_summary}.

The experiments on the ogbn-papers100m dataset, due to its large size, were conducted on a Tesla A100 GPU machine with 88 Intel Xeon 2680 CPUs and 1007 GB of RAM. On average, training one model took 5 hours and 10 minutes.

For the experiments on ogbn-mag and ogbn-arxiv, we used a Tesla P100 GPU machine with 28 Intel Xeon 2680 CPUs and 128 GB of RAM. Training on the ogbn-mag dataset, using LDHGNN as the baseline, took an average of 5 hours and 40 minutes per run. For the ogbn-arxiv dataset, experiments using the Linearized RevGAT + GIANT-XRT baseline took approximately 8 hours per run.

\subsection{First and Second moment of averaging message passing}\label{apdx:firstmm}
\subsubsection{First moment as approximate of expectation}
We define the first moment of averaging message as the following steps: \\

(a) Take the expectation of the averaged message. \\
(b) Approximate $|\mathcal{N}_{v}\left(\tilde{y}, \tilde{t}\right)|$ as $P_{yt}\left(\tilde{y}, \tilde{t}\right)|\mathcal{N}_{v}|$ until the discrete values, i.e., the number of elements terms $|\mathcal{N}_{v}\left(\tilde{y}, \tilde{t}\right)|$ and $|\mathcal{N}_{v}|$ disappear. \\

Because of step (b), we are defining the "approximate of expectation" as the first moment of a message. Denote the first moment of averaged message ${M_{ v}^{(k+1)}}$ as $\hat{\mathbb E}[{M_{ v}^{(k+1)}}]$. Deliberate calculations are as follows:
\begin{align}
    \hat{\mathbb E}[{M_{ v}^{(k+1)}}] &\overset{(a)}{=} {\mathbb E}\left[{\sum_{\tilde{y}\in \bold{Y}}\sum_{\tilde{t}\in\bold{T}}\sum_{w\in\mathcal{N}_{v}\left(\tilde{y}, \tilde{t}\right)}X_w \over \sum_{\tilde{y}\in \bold{Y}}\sum_{\tilde{t}\in\bold{T}} |\mathcal{N}_{v}\left(\tilde{y}, \tilde{t}\right)|}\right]\\&\overset{(b)}{=}
    \mathbb{E}\left[{\sum_{\tilde{y}\in \bold{Y}}\sum_{\tilde{t}\in\bold{T}}\sum_{w\in\mathcal{N}_{v}\left(\tilde{y}, \tilde{t}\right)}X_w \over \sum_{\tilde{y}\in \bold{Y}}\sum_{\tilde{t}\in\bold{T}} P_{yt}\left(\tilde{y}, \tilde{t}\right)|\mathcal{N}_{v}|}\right]\\&=
    {1 \over |\mathcal{N}_{v}|}\mathbb{E}\left[{\sum_{\tilde{y}\in \bold{Y}}\sum_{\tilde{t}\in\bold{T}}\sum_{w\in\mathcal{N}_{v}\left(\tilde{y}, \tilde{t}\right)}X_w}\right]\\&=
    {1 \over |\mathcal{N}_{v}|}\sum_{\tilde{y}\in \bold{Y}}\sum_{\tilde{t}\in\bold{T}}\mathbb{E}\left[\sum_{w\in\mathcal{N}_{v}\left(\tilde{y}, \tilde{t}\right)}X_w\right]\\&=
    {1 \over |\mathcal{N}_{v}|}\sum_{\tilde{y}\in \bold{Y}}\sum_{\tilde{t}\in\bold{T}}|\mathcal{N}_{v}\left(\tilde{y}, \tilde{t}\right)|\mu_{X}^{(k)}\left(\tilde{y}\right)\\&=
    \sum_{\tilde{y}\in \bold{Y}}\sum_{\tilde{t}\in\bold{T}}{|\mathcal{N}_{v}\left(\tilde{y}, \tilde{t}\right)| \over |\mathcal{N}_{v}|}\mu_{X}^{(k)}\left(\tilde{y}\right)\\&\overset{(b)}{=}
    \sum_{\tilde{y}\in \bold{Y}}\sum_{\tilde{t}\in\bold{T}}\mathcal{P}_{y t}\left(\tilde{y}, \tilde{t}\right)\mu_{X}^{(k)}(\tilde{y})
\end{align}

The final term of the equation above does not incorporate any discrete values $|\mathcal{N}_{v}\left(\tilde{y}, \tilde{t}\right)|$ and $|\mathcal{N}_{v}|$, so the step ends.

Note that we can calculate the first moment reversely as follows:
\begin{align}
    M_{ v}^{(k+1)} &= {\sum_{\tilde{y}\in \bold{Y}}\sum_{\tilde{t}\in\bold{T}}\sum_{w\in\mathcal{N}_{v}\left(\tilde{y}, \tilde{t}\right)}X_w \over \sum_{\tilde{y}\in \bold{Y}}\sum_{\tilde{t}\in\bold{T}} |\mathcal{N}_{v}\left(\tilde{y}, \tilde{t}\right)|} \\&=   {\sum_{\tilde{y}\in\bold{Y}}\sum_{\tilde{t}\in\bold{T}}{|\mathcal{N}_{v}\left(\tilde{y}, \tilde{t}\right)| \over |\mathcal{N}_{v}|}\sum_{w\in\mathcal{N}_{v}\left(\tilde{y}, \tilde{t}\right)}{X_w \over |\mathcal{N}_{v}\left(\tilde{y}, \tilde{t}\right)|} \over \sum_{\tilde{y}\in \bold{Y}}\sum_{\tilde{t}\in\bold{T}} {|\mathcal{N}_{v}\left(\tilde{y}, \tilde{t}\right)| \over |\mathcal{N}_{v}|}} \\&\simeq
    {\sum_{\tilde{y}\in\bold{Y}}\sum_{\tilde{t}\in\bold{T}}{\mathcal{P}_{y t}\left(\tilde{y}, \tilde{t}\right)}\sum_{w\in\mathcal{N}_{v}\left(\tilde{y}, \tilde{t}\right)}{X_w \over |\mathcal{N}_{v}\left(\tilde{y}, \tilde{t}\right)|} \over \sum_{\tilde{y}\in \bold{Y}}\sum_{\tilde{t}\in\bold{T}} {\mathcal{P}_{y t}\left(\tilde{y}, \tilde{t}\right)}} \\&=
    \sum_{\tilde{y}\in\bold{Y}}\sum_{\tilde{t}\in\bold{T}}\left({\mathcal{P}_{y t}\left(\tilde{y}, \tilde{t}\right)}\sum_{w\in\mathcal{N}_{v}\left(\tilde{y}, \tilde{t}\right)}{X_w \over |\mathcal{N}_{v}\left(\tilde{y}, \tilde{t}\right)|}\right)
\end{align}
Take the expectation on both sides to derive
\begin{align}
    \mathbb{E}\left[M_{ v}^{(k+1)}\right] &\simeq
	\mathbb{E}\left[\sum_{\tilde{y}\in\bold{Y}}\sum_{\tilde{t}\in\bold{T}}\left({\mathcal{P}_{y t}\left(\tilde{y}, \tilde{t}\right)}\sum_{w\in\mathcal{N}_{v}\left(\tilde{y}, \tilde{t}\right)}{X_w \over |\mathcal{N}_{v}\left(\tilde{y}, \tilde{t}\right)|}\right)\right]\\
	&=\sum_{\tilde{y}\in\bold{Y}}\sum_{\tilde{t}\in\bold{T}}\left({\mathcal{P}_{y t}\left(\tilde{y}, \tilde{t}\right)}\sum_{w\in\mathcal{N}_{v}\left(\tilde{y}, \tilde{t}\right)}{\mathbb{E}[X_w] \over |\mathcal{N}_{v}\left(\tilde{y}, \tilde{t}\right)|}\right)\\
	&=\sum_{\tilde{y}\in\bold{Y}}\sum_{\tilde{t}\in\bold{T}}\left({\mathcal{P}_{y t}\left(\tilde{y}, \tilde{t}\right)}\mu_X^{(k)}(\tilde y)\right)
\end{align}

\subsubsection{Second moment as approximate of variance}\label{apdx:secondmm}

We define the second moment of averaging message as the following steps:\\

(a) Take the variance of the averaged message.\\
(b) Approximate $|\mathcal{N}_v\left(\tilde{y}, \tilde{t}\right)|$ as $\mathcal{P}_{yt}(\tilde{y}, \tilde{t})|\mathcal{N}_v|$ until the discrete value terms $|\mathcal{N}_v\left(\tilde{y}, \tilde{t}\right)|$ disappear.\\

Because of step (b), we are defining the "approximate of variance" as the second moment of a message. Denote the second moment of averaged message $M_v^{(k+1)}$ as $\hat{\text{var}} (M_v^{(k+1)})$. Deliberate calculations are as follows:

\begin{align}
\hat{\text{var}}(M_v^{(k+1)})&\overset{(a)}=\text{var}\left({\sum_{\tilde y \in \bold{Y}}\sum_{\tilde t \in \bold{T}}\sum_{w\in \mathcal{N}_v(\tilde y,\tilde t)}X_w \over \sum_{\tilde y\in \bold{Y}}\sum_{\tilde t\in \bold{T}}|\mathcal{N}_v(\tilde y, \tilde t)|}\right)\\
&\overset{(b)}=\text{var}\left({\sum_{\tilde y \in \bold{Y}}\sum_{\tilde t \in \bold{T}}\sum_{w\in \mathcal{N}_v(\tilde y,\tilde t)}X_w \over \sum_{\tilde y\in \bold{Y}}\sum_{\tilde t\in \bold{T}}\mathcal{P}_{yt}(\tilde y, \tilde t)|\mathcal{N}_v|}\right)\\
&={\sum_{\tilde y \in \bold{Y}}\sum_{\tilde t \in \bold{T}}\sum_{w\in \mathcal{N}_v(\tilde y,\tilde t)}\text{var}(X_w) \over \left(\sum_{\tilde y\in \bold{Y}}\sum_{\tilde t\in \bold{T}}\mathcal{P}_{yt}(\tilde y, \tilde t)|\mathcal{N}_v|\right)^2}\\
&={1\over |\mathcal{N}_v|^2}{\sum_{\tilde y \in \bold{Y}}\sum_{\tilde t \in \bold{T}}\sum_{w\in \mathcal{N}_v(\tilde y,\tilde t)}\text{var}(X_w)}
\end{align}

If we assume $\text{var}(X_w)=\Sigma_{XX}^{(k)}(\tilde y)$ for $\forall w \in \mathcal{N}_v(\tilde y, \tilde t)$,

\begin{align}
\hat{\text{var}}(M_v^{(k+1)})&={1\over |\mathcal{N}_v|^2}{\sum_{\tilde y \in \bold{Y}}\sum_{\tilde t \in \bold{T}}\sum_{w\in \mathcal{N}_v(\tilde y,\tilde t)}\Sigma_{XX}^{(k)}(\tilde y)}\\
&={1\over |\mathcal{N}_v|^2}{\sum_{\tilde y \in \bold{Y}}\sum_{\tilde t \in \bold{T}} |\mathcal{N}_v(\tilde y,\tilde t)|\Sigma_{XX}^{(k)}(\tilde y)}\\
&\overset{(b)}={1\over |\mathcal{N}_v|^2}{\sum_{\tilde y \in \bold{Y}}\sum_{\tilde t \in \bold{T}} |\mathcal{N}_v|\mathcal{P}_{yt}(\tilde y,\tilde t)\Sigma_{XX}^{(k)}(\tilde y)}\\
&={1\over |\mathcal{N}_v|}{\sum_{\tilde y \in \bold{Y}}\sum_{\tilde t \in \bold{T}} \mathcal{P}_{yt}(\tilde y,\tilde t)\Sigma_{XX}^{(k)}(\tilde y)}
\end{align}

\subsection{Explanation of \PMP}
\label{apdx:PMP}
\subsubsection{1st moment of aggregated message obtained by \PMP layer.}
We define the 1st moment of \PMP with the identical steps of the 1st moment of averaging message passing, as in Appendix \ref{apdx:firstmm}.

\subsubsection{Proof of Theorem \ref{thm:pmp}}

From now on, we will denote $y$ and $t$ as the label and time belongs to target node, $v$, if there are no other specifications.

Suppose that the 1st moment of the representations from the previous layer is invariant. In other words, $\mu_{X}^{(k)}(y,t)=\mu_{X}^{(k)}(y,t_{max}),\ \forall t\in\bold{T}$. 

Formally, when defined as $\mathcal{N}^{\text{single}}_v$=$\{u\in \mathcal{N}_v \big| u \text{ has time in } \bold{T}^{\text{single}}_v \}$, and $\mathcal{N}^{\text{double}}_v$=$\{u\in \mathcal{N}_v\big|u \text{ has time in } \bold{T}^{\text{double}}_v\}$, the message passing mechanism of \PMP can be expressed as:

\begin{align}
	M_{ v}^{pmp(k+1)} &= {\sum_{\tilde{y}\in \bold{Y}}\sum_{\tilde{t}\in\bold{T}_{t}^{\text{single}}}\sum_{w\in\mathcal{N}_{v}\left(\tilde{y}, \tilde{t}\right)}2X_w+\sum_{\tilde{y}\in \bold{Y}}\sum_{\tilde{t}\in\bold{T}_{t}^{\text{double}}}\sum_{w\in\mathcal{N}_{v}\left(\tilde{y}, \tilde{t}\right)}X_w \over \sum_{\tilde{y}\in \bold{Y}}\sum_{\tilde{t}\in\bold{T}_{t}^{\text{single}}}2|\mathcal{N}_{v}\left(\tilde{y}, \tilde{t}\right)|+\sum_{\tilde{y}\in \bold{Y}}\sum_{\tilde{t}\in\bold{T}_{t}^{\text{double}}}|\mathcal{N}_{v}\left(\tilde{y}, \tilde{t}\right)|}
\end{align}

The representations from the previous layer are invariant, i.e., $\mathbb{E}_{X\sim {x_{yt}^{(k)}}}\left[X\right]=\mu_{X}^{(k)}(y)$. Here, $x_{yt}^{(k)}$ indicates the distribution of representation for the nodes in previous layer, whose label is $y$ and time is $t$. The first moment of the aggregated message is as follows. This first moment is calculated rigorously as shown in Appendix \ref{apdx:firstmm}.

\begin{align}
\hat{\mathbb E}\left[{M_{ i}^{pmp(k+1)}}\right] &= {\sum_{\tilde y\in \bold{Y}}\sum_{\tilde t\in\bold{T}_{t}^{\text{single}}}2\mathcal{P}_{y t}(\tilde y, \tilde t) \mu_{X}^{(k)}(\tilde y)+\sum_{\tilde y\in \bold{Y}}\sum_{\tilde t\in\bold{T}_{t}^{\text{double}}}\mathcal{P}_{y t}(\tilde y, \tilde t) \mu_{X}^{(k)}(\tilde y)\over \sum_{\tilde y\in \bold{Y}}\sum_{\tilde t\in\bold{T}_{t}^{\text{single}}}2\mathcal{P}_{y t}(\tilde y, \tilde t)+\sum_{\tilde y\in \bold{Y}}\sum_{\tilde t\in\bold{T}_{t}^{\text{double}}}\mathcal{P}_{y t}(\tilde y, \tilde t)}\\&={\sum_{\tilde y\in \bold{Y}}\left(\sum_{\tilde t\in\bold{T}_{t}^{\text{single}}}2\mathcal{P}_{y t}(\tilde y, \tilde t)+\sum_{\tilde t\in\bold{T}_{t}^{\text{double}}}\mathcal{P}_{y t}(\tilde y,\tilde t)\right)\mu_{X}^{(k)}(\tilde y)\over\sum_{\tilde y\in \bold{Y}}\left(\sum_{\tilde t\in\bold{T}_{t}^{\text{single}}}2\mathcal{P}_{y t}(\tilde y, \tilde t)+\sum_{\tilde t\in\bold{T}_{t}^{\text{double}}}\mathcal{P}_{y t}(\tilde y, \tilde t)\right)}
\end{align}

By assumption 3,

\begin{align}
&\sum_{\tilde t\in\bold{T}_{t}^{\text{single}}}2\mathcal{P}_{y t}(\tilde y, \tilde t)+\sum_{\tilde t\in\bold{T}_{t}^{\text{double}}}\mathcal{P}_{y t}(\tilde y, \tilde t) \\&=f(y, t )\left(\sum_{\tilde t\in\bold{T}_{t}^{\text{single}}}2g(y, \tilde y, |\tilde t - t|)+\sum_{\tilde t\in\bold{T}_{t}^{\text{double}}}g(y, \tilde y, |t - t|)\right)\\&=f(y, t )\left(2g(y, \tilde y, 0)+2\sum_{\tau>|t_{max}-t |}g(y, \tilde y,\tau)+\sum_{0<\tau\le|t_{max}-t|}g(y, \tilde y, \tau)\right) \\&= 2f(y, t )\sum_{\tau\ge 0}g(y, \tilde y, \tau)
\end{align}

Substituting this into the previous expression yields,

\begin{align}
\hat{\mathbb E}\left[{M_{ i}^{pmp(k+1)}}\right]={\sum_{\tilde y\in \bold{Y}}\sum_{\tau\ge 0}g(y, \tilde y, \tau)\mu_{X}^{(k)}(\tilde y)\over\sum_{\tilde y\in \bold{Y}}\sum_{\tau\ge 0}g(y, \tilde y, \tau)}
\end{align}

Since there is no $t$ term in this expression, the mean of this aggregated message is invariant with respect to the target node's time.

\begin{algorithm}
    \caption{\name Persistent Message Passing as neighbor aggregation}
    \label{alg:agg}
        \SetKwInOut{Input}{Input}\SetKwInOut{Output}{Output}
        \Input{~Undirected graph $\mathcal{G}(\bold{V},\bold{E})$; input features $X_v, \forall v\in \bold{V}$; number of layers $K$; node time function $time:\bold{V}\rightarrow \mathrm{R}$; maximum time value $t_{\max}$; minimum time value $t_{\min}$; aggregate functions $\textsc{agg}$; combine functions $\textsc{combine}$; multisets of neighborhood $\mathcal{N}_v, \forall v \in \bold{V}$}
        \BlankLine
        \Output{~Final embeddings $\bold{z}_v, \forall v \in \bold{V}$}
        \BlankLine
        \BlankLine
        ${h}^0_v \leftarrow {X}_v, \forall v \in \bold{V}$\;
        \For{$k=0...K-1$}{
            \For{$v \in \bold{V}$}{
                $\mathcal{N'}(v) \leftarrow \mathcal{N}(v)$\;
                \uIf{$|time(u)-time(v)|>\min(t_{\max}-time(v),time(v)-t_{\min})$}{
                    $\mathcal{N'}(v).\text{insert}(u)$ \;
                }
                ${M}^{(k+1)}_{v} \leftarrow \textsc{agg}(\{{h}_u^{(k)}, \forall u \in \mathcal{N'}(v)\})$\;
                ${X}^{(k+1)}_{v} \leftarrow \textsc{combine}(\{{X}^{(k)}_v, {M}^{(k+1)}_{v}\})$\;
            }
        }
        ${z}_v\leftarrow {X}^{K}_v, \forall v \in \bold{V}$ \;
    \end{algorithm}

\begin{algorithm}
\caption{\name Persistent Message Passing as graph reconstruction}
\label{alg:recon}
    \SetKwInOut{Input}{Input}\SetKwInOut{Output}{Output}
    \Input{~Undirected graph $\mathcal{G}(\bold{V},\bold{E})$; adjacency matrix $A^{\mathcal{G}} \in \mathbb{R}^{N\times N}$; node time function $time:\bold{V}\rightarrow \mathbb{R}$; maximum time value $t_{\max}$; minimum time value $t_{\min}$}
    \Output{~New directed graph  $\mathcal{G'}(\bold{V},\bold{E'})$; new adjacency matrix $A^{\mathcal{G'}}$}
    \BlankLine
    $A^{\mathcal{G'}}\leftarrow A^{\mathcal{G}}$ \;
    \For{$(u,v) \in \bold{V}^2$}{
        \uIf{$|time(u)-time(v)|>\min(t_{\max}-time(v),time(v)-t_{\min})$}{
            $A^{\mathcal{G'}}_{uv} \leftarrow  2A^{\mathcal{G'}}_{uv}$ \;
        }
    }
\end{algorithm}

\subsubsection{2nd moment of aggregated message obtained by PMP layer}

We define the 2nd moment of PMP incorporating the steps of the 2nd moment of averaging message passing as in Appendix \ref{apdx:firstmm}, and define an additional step as: \\

(c) Consider $|\mathcal{N}_v|$  as a value only dependent to $y$ and $t$, namely $|\mathcal{N}_{yt}|$. \\

Background of step (c) is that in practice, $|\mathcal{N}_v|$ can vary for each node, but they will follow a distribution determined by the node's label $y$ and time $t$. For simplicity in our discussion, we will use the expectation of these values within each community as in step (c).

This 2nd moment is calculated rigorously as shown in Appendix \ref{apdx:firstmm}.

\vspace{-15pt}

\begin{align}
\hat{\text{var}}(M_{v}^{pmp(k+1)}) = {\sum_{\tilde{y}\in \bold{Y}}\left(\sum_{\tilde{t}\in\bold{T}_{t}^{\text{single}}}4\mathcal{P}_{yt}\left(\tilde{y}, \tilde{t}\right)+\sum_{\tilde{t}\in\bold{T}_{t}^{\text{double}}}\mathcal{P}_{yt}\left(\tilde{y}, \tilde{t}\right)\right)\Sigma_{XX}^{pmp(k)}(\tilde{y})
\over
\left(\sum_{\tilde{y}\in \bold{Y}}\sum_{\tilde{t}\in\bold{T}_{t}^{\text{single}}}2\mathcal{P}_{yt}\left(\tilde{y}, \tilde{t}\right)+\sum_{\tilde{y}\in \bold{Y}}\sum_{\tilde{t}\in\bold{T}_{t}^{\text{double}}}\mathcal{P}_{yt}\left(\tilde{y}, \tilde{t}\right)\right)^2|\mathcal{N}_{yt}|}
\end{align}

\vspace{-5pt}

Therefore, we can write $\hat{\text{var}}(M_{v}^{pmp(k+1)})$=$\Sigma^{pmp(k+1)}_{MM}(y,t)$.

\subsection{Explanation of \MMP}
\label{apdx:MMP}
\subsubsection{1st moment of aggregated message obtained by \MMP layer.}
We define the 1st moment of \MMP with the identical steps of the 1st moment of averaging message passing, as in Appendix \ref{apdx:firstmm}.

\subsubsection{Proof of Theorem \ref{thm:mmp}}
Suppose that the 1st moment of the representations from the previous layer is invariant. In other words, $\mu_{X}^{(k)}(y,t)=\mu_{X}^{(k)}(y,t_{max}),\ \forall t\in\bold{T}$. The message passing mechanism of \PMP can be expressed as follows:

\begin{figure}[hbt!]
	\vspace{0.15in}
	\centering
	\includegraphics[width=0.20\textwidth]{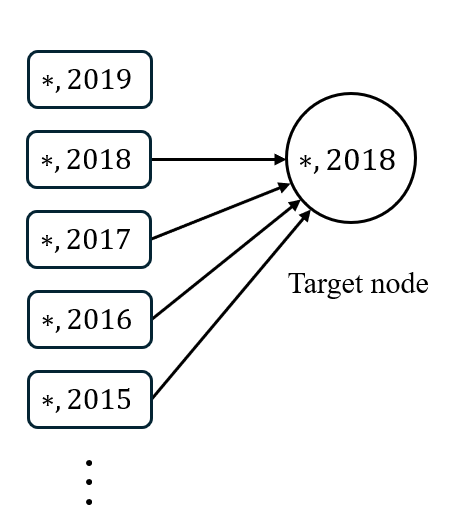}
	\vspace{-0.1in}
	\caption{Graphical explanation of Mono-directional Message Passing(\MMP).}
	 \label{fig:MMP}
	 \vspace{0.5in}
\end{figure}

\begin{align}
    M_{v}^{mmp(k+1)} = {\sum_{\tilde{y}\in \bold{Y}}\sum_{\tilde{t}\le t}\sum_{v\in\mathcal{N}_{v}(\tilde y,\tilde t)}X_w \over \sum_{\tilde{y}\in \bold{Y}}\sum_{\tilde{t}\le t}|\mathcal{N}_{v}(\tilde y, \tilde t)|}
\end{align}

Applying assumption 3 as in \PMP, the expectation is as follows. This expectation is calculated rigorously as shown in Appendix \ref{apdx:firstmm}.

\begin{align}
\hat{\mathbb E}\left[{M_{ i}^{mmp(k+1)}}\right] = {\sum_{\tilde y\in \bold{Y}}\sum_{\tilde t\le t }\mathcal{P}_{y t}(\tilde y, \tilde t) \mu_{X}^{(k)}(\tilde y)\over \sum_{\tilde y\in \bold{Y}}\sum_{\tilde t\le t }\mathcal{P}_{y t}(\tilde y, \tilde t)}={\sum_{\tilde y\in \bold{Y}}\sum_{\tau\ge 0}g(y, \tilde y,\tau) \mu_{X}^{(k)}(\tilde y)\over \sum_{\tilde y\in \bold{Y}}\sum_{\tau\ge 0}g(y, \tilde y,\tau)}
\end{align}

This also lacks the $t$ term, thus it is invariant.

\subsection{Mathematical modeling of \PMP.}\label{apdx:pmp_modeling}
Let $\mathcal{M}^{(k)}$ as the space of messages at $k$-th layer, and $\mathcal{X}^{(k)}$ as the space of representations at $k$-th layer, and let us define probability measure spaces $(\mathcal{M}^{(k)},\sum_{\mathcal{M}^{(k)}},m_{yt}^{(k)})$, $(\mathcal{X}^{(k)},\sum_{\mathcal{X}^{(k)}},x_{yt}^{(k)})$ where $\sum_{\mathcal{M}^{(k)}}$ and $\sum_{\mathcal{X}^{(k)}}$are $\sigma$-algebras with probability measures $m_{yt}^{(k)}$ and $x_{yt}^{(k)}$, respectively.

That is, $m_{yt}^{(k)}$ is the probability measure of the message of node with label $y$ and time $t$, and $x_{yt}^{(k)}$ is the probability measure of the representation of node with label $y$ and time $t$, as defined previously.

\subsubsection{$m_{yt}^{(k)}$ to $x_{yt}^{(k)}$}

$f^{(k)}$ is the function which transfers the message $M_v^{(k)}\in \mathcal{M}^{(k)}$ to the $k$-th layer representation $X_v^{(k)}\in \mathcal{X}^{(k)}$.

Hence, $f^{(k)}:\mathcal{M}^{(k)}\rightarrow \mathcal{X}^{(k)}$ gives a pushforward of measure as $x_{yt}^{(k)}=(f_*^{(k)})(m_{yt}^{(k)}):\sum_{\mathcal{X}^{(k)}}\rightarrow[0, 1]$, given by $\left((f_*^{(k)})(m_{yt}^{(k)})\right)(B)=m_{yt}^{(k)}\left((f^{(k)})^{-1}(B)\right),\text{ for }\forall B\in \sum_{\mathcal{X}^{(k)}}$

Here, we assume $f^{(k)}$ is G-Lipschitz for $\forall k\in\{1,2,\dots,K\}$.

\subsubsection{$x_{yt}^{(k)}$ to $m_{yt}^{(k+1)}$} 
This is given as the message passing function of \PMP. That is,

\begin{align}
m_{yt}^{(k+1)}={\sum_{\tilde y\in \bold{Y}}\sum_{\tilde t\in \bold{T}_t^{single}}2\mathcal{P}_{yt}(\tilde y, \tilde t)x_{\tilde y \tilde t}^{(k)}+ \sum_{\tilde y\in \bold{Y}}\sum_{\tilde t\in \bold{T}_t^{double}}\mathcal{P}_{yt}(\tilde y, \tilde t)x_{\tilde y \tilde t}^{(k)} \over \sum_{\tilde y\in \bold{Y}}\sum_{\tilde t\in \bold{T}_t^{single}}2\mathcal{P}_{yt}(\tilde y, \tilde t)+ \sum_{\tilde y\in \bold{Y}}\sum_{\tilde t\in \bold{T}_t^{double}}\mathcal{P}_{yt}(\tilde y, \tilde t)}
\end{align}

\subsection{Theoretical analysis of \PMP when applied in multi-layer GNNs.}\label{apdx:pmp_theory}
\subsubsection{Lemmas}
\begin{lemma}\label{lem:lem1}
\begin{align}
    \forall \epsilon >0, P(|M_v^{(k)}-M_{v'}^{(k)}|>\epsilon)\le {8V \over \epsilon^2}\text{ for }M_v^{(k)}\sim m_{yt}^{(k)},M_{v'}^{(k)}\sim m_{yt'}^{(k)}
\end{align}
\end{lemma}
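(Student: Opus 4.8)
The plan is to reduce the claim to a two-term application of Chebyshev's inequality, using the first-moment invariance that \PMP provides. Recall from Section~\ref{sec:analysis} that, because we are working with the 1st-moment alignment method \PMP, we may take $\mathbb{E}[M_v^{(k)}] = \hat{\mathbb{E}}[M_v^{(k)}] = \mu_M^{(k)}(y)$ for every node of label $y$, \emph{independently of its time}. Hence $M_v^{(k)} \sim m_{yt}^{(k)}$ and $M_{v'}^{(k)} \sim m_{yt'}^{(k)}$ have the common mean $\mu := \mu_M^{(k)}(y)$. This is the only place where Theorem~\ref{thm:pmp} (and the standing assumption of that section) is invoked.

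Next I would split the deviation through the common mean: by the triangle inequality $|M_v^{(k)} - M_{v'}^{(k)}| \le |M_v^{(k)} - \mu| + |M_{v'}^{(k)} - \mu|$, so the event $\{\,|M_v^{(k)} - M_{v'}^{(k)}| > \epsilon\,\}$ is contained in the union $\{\,|M_v^{(k)} - \mu| > \epsilon/2\,\} \cup \{\,|M_{v'}^{(k)} - \mu| > \epsilon/2\,\}$. A union bound then reduces everything to controlling the two one-sided deviations separately.

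Finally, since $\mathrm{var}(M_v^{(k)}) \le V$ and $\mathrm{var}(M_{v'}^{(k)}) \le V$ by assumption (ii) of Section~\ref{sec:analysis}, Chebyshev's inequality gives $P(|M_v^{(k)} - \mu| > \epsilon/2) \le \mathrm{var}(M_v^{(k)})/(\epsilon/2)^2 \le 4V/\epsilon^2$, and identically for $M_{v'}^{(k)}$. Adding the two contributions yields $P(|M_v^{(k)} - M_{v'}^{(k)}| > \epsilon) \le 8V/\epsilon^2$. I would remark that this argument deliberately does \emph{not} assume $M_v^{(k)}$ and $M_{v'}^{(k)}$ are independent; independence would already give the sharper $\mathrm{var}(M_v^{(k)} - M_{v'}^{(k)}) \le 2V$ and hence $2V/\epsilon^2$, but the looser factor-$8$ form is all that is needed in the subsequent Wasserstein estimates (Theorems~\ref{thm:thm1}--\ref{thm:thm2}).

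The lemma carries no genuine difficulty — it is essentially a repackaging of Chebyshev's inequality — so the only thing demanding care is the appeal to first-moment invariance: one must be sure the identification $\mathbb{E}[M_v^{(k)}] = \mu_M^{(k)}(y)$ is actually in force, since otherwise $M_v^{(k)} - M_{v'}^{(k)}$ would carry a bias $\mu_M^{(k)}(y,t) - \mu_M^{(k)}(y,t')$ and the bound would need an extra term to absorb it. Note also that assumption~(i) ($|M_v^{(k)}| \le C$ a.s.) is not used in this lemma; it enters only in the later steps that turn this concentration estimate into a Wasserstein bound.
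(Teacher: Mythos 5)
Your proposal is correct and matches the paper's proof essentially step for step: both pass through the common mean $\mu_M^{(k)}(y)$ (guaranteed by the first-moment invariance of \PMP), apply the triangle inequality and a union bound to reduce to two one-sided deviations at level $\epsilon/2$, and invoke Chebyshev with $\mathrm{var}\le V$ to get $4V/\epsilon^2$ for each term. Your added remarks (that independence is not assumed and that the boundedness assumption $|M_v^{(k)}|\le C$ is not used here) are accurate but do not change the argument.
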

\begin{proof}
By chebyshev’s inequality, 

$P(|M_v^{(k)}-\mu_M^{(k)}(y)|>{\epsilon\over 2})\le {4V \over \epsilon^2}$, $P(|M_{v'}^{(k)}-\mu_M^{(k)}(y)|>{\epsilon\over 2})\le {4V \over \epsilon^2}$. 

Therefore,
\begin{align}
P(|M_v^{(k)}-M_{v'}^{(k)}|&>{\epsilon})\\
&\le P(|M_v^{(k)}-\mu_{M}{(y)}|+|M_{v'}^{(k)}-\mu_{M}{(y)}|>\epsilon) \ \because\text{Triangle inequality}\\
&\le P(|M_v^{(k)}-\mu_{M}{(y)}|>{\epsilon\over 2}\text{ or }|M_{v'}^{(k)}-\mu_{M}{(y)}|>{\epsilon\over 2})\\
&\le P(|M_v^{(k)}-\mu_{M}{(y)}|>{\epsilon\over 2})+P(|M_{v'}^{(k)}-\mu_{M}{(y)}|>{\epsilon\over 2})\le {8V\over \epsilon^2}
\end{align}
\end{proof}

\begin{lemma}\label{lem:lem2}
\begin{align}
W_1 (x_{yt}^{(k)}, x_{yt'}^{(k)})\le G\ W_1(m_{yt}^{(k)},m_{yt'}^{(k)})
\end{align}
\end{lemma}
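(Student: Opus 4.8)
The plan is to use the structural fact, set up in Appendix~\ref{apdx:pmp_modeling}, that the $k$-th layer representation measure is the pushforward of the message measure under the semantic aggregation map: $x_{yt}^{(k)} = (f_*^{(k)})(m_{yt}^{(k)})$ and $x_{yt'}^{(k)} = (f_*^{(k)})(m_{yt'}^{(k)})$, where $f^{(k)}$ is $G$-Lipschitz. The inequality is then the standard statement that a Lipschitz map contracts $W_1$ by at most its Lipschitz constant, specialized to this situation.

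Concretely, I would argue through the Kantorovich (coupling) formulation of $W_1$. Fix $\epsilon > 0$ and choose an $\epsilon$-optimal coupling $\gamma \in \Pi(m_{yt}^{(k)}, m_{yt'}^{(k)})$, i.e. one with $\int |a - b|\, d\gamma(a,b) \le W_1(m_{yt}^{(k)}, m_{yt'}^{(k)}) + \epsilon$; using an $\epsilon$-optimal coupling rather than an exact minimizer avoids any appeal to compactness or existence of optimal plans. Let $\gamma' = (f^{(k)} \times f^{(k)})_* \gamma$ on $\mathcal{X}^{(k)} \times \mathcal{X}^{(k)}$. Its two marginals are $f_*^{(k)} m_{yt}^{(k)} = x_{yt}^{(k)}$ and $f_*^{(k)} m_{yt'}^{(k)} = x_{yt'}^{(k)}$, so $\gamma' \in \Pi(x_{yt}^{(k)}, x_{yt'}^{(k)})$ and
\begin{align}
W_1(x_{yt}^{(k)}, x_{yt'}^{(k)}) &\le \int |x - x'|\, d\gamma'(x, x') = \int \big| f^{(k)}(a) - f^{(k)}(b) \big|\, d\gamma(a, b) \\
&\le G \int |a - b|\, d\gamma(a, b) \le G\left( W_1(m_{yt}^{(k)}, m_{yt'}^{(k)}) + \epsilon \right).
\end{align}
The first inequality holds because $\gamma'$ is a feasible (not necessarily optimal) transport plan, the middle equality is the change-of-variables formula for pushforward measures, and the second inequality is the $G$-Lipschitz bound on $f^{(k)}$. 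Sending $\epsilon \to 0$ yields $W_1(x_{yt}^{(k)}, x_{yt'}^{(k)}) \le G\, W_1(m_{yt}^{(k)}, m_{yt'}^{(k)})$.

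An equivalent route is via Kantorovich--Rubinstein duality: for scalar representations one has $W_1(x_{yt}^{(k)}, x_{yt'}^{(k)}) = \sup_{h : \mathrm{Lip}(h) \le 1}\big( \int h\, dx_{yt}^{(k)} - \int h\, dx_{yt'}^{(k)} \big) = \sup_{h : \mathrm{Lip}(h) \le 1}\big( \int (h \circ f^{(k)})\, dm_{yt}^{(k)} - \int (h \circ f^{(k)})\, dm_{yt'}^{(k)} \big)$, and since $\mathrm{Lip}(h \circ f^{(k)}) \le G$ whenever $\mathrm{Lip}(h) \le 1$, the supremum is at most $G\, W_1(m_{yt}^{(k)}, m_{yt'}^{(k)})$. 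I do not expect a genuine obstacle here; the only mild points of care are that $f^{(k)}$ is measurable (automatic since Lipschitz implies continuous) and that one should avoid invoking existence of an exact optimal coupling, both of which are handled by the $\epsilon$-argument above.
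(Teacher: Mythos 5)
Your proof is correct and is essentially the argument the paper intends: the paper's own proof of Lemma~\ref{lem:lem2} is the one-line remark that the bound ``follows directly from the $G$-Lipschitz property of $f^{(k)}$ and the definition of pushforward measures,'' and your coupling argument (pushing an $\epsilon$-optimal plan forward through $f^{(k)} \times f^{(k)}$ and applying the Lipschitz bound) is precisely the standard proof of that fact, spelled out with the details the paper omits. The duality route you sketch is an equally valid alternative; no gap here.
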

\begin{proof}
    Follows directly from G-Lipshitz property of $f^{(k)}$ and definition of pushforward measures.
\end{proof}

\begin{lemma}\label{lem:lem3}
$\mu_1,\dots,\mu_n$ are distributions with cumulative distribution functions $F_1, \dots, F_n$. If $W_1(\mu_i, \mu_j)\le D,\ \forall i,j$,  

For arbitrary real numbers satisfying $0<\eta_i, \nu_i<S,\ s.t. \ \eta_1 + \dots+\eta_n=\nu_1+\dots+\nu_n=S$, 

\begin{align}
W_1 (\eta_1\mu_1+\dots +\eta_n \mu_n,\nu_1 \mu_1+\dots+\nu_n\mu_n)<(S-\delta)D
\end{align}

for some positive real number $\delta$.
\end{lemma}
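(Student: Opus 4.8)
The plan is to run the argument through the one–dimensional Kantorovich identity $W_1(\mu,\nu)=\int_{\mathbb{R}}\lvert F_\mu(x)-F_\nu(x)\rvert\,dx$, which holds verbatim for finite measures of equal total mass (here the common mass $S$), with $F_\mu,F_\nu$ the associated unnormalized distribution functions. Under this identity the hypothesis becomes $\int\lvert F_i-F_j\rvert\le D$ for all $i,j$, the mixture $\sum_i\eta_i\mu_i$ has distribution function $\sum_i\eta_iF_i$, and the quantity to bound is $\int_{\mathbb{R}}\bigl\lvert\sum_i(\eta_i-\nu_i)F_i(x)\bigr\rvert\,dx$.

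First I would peel off the common part of the two weight vectors. Writing $m_i=\min(\eta_i,\nu_i)$, $p_i=(\eta_i-\nu_i)^+$, $q_i=(\nu_i-\eta_i)^+$, so that $\eta_i=m_i+p_i$ and $\nu_i=m_i+q_i$ with at most one of $p_i,q_i$ nonzero for each $i$, the common mixture $\sum_i m_i\mu_i$ cancels in the difference of distribution functions:
\begin{align}
&W_1\Bigl(\textstyle\sum_i\eta_i\mu_i,\ \sum_i\nu_i\mu_i\Bigr)=W_1\Bigl(\textstyle\sum_i p_i\mu_i,\ \sum_i q_i\mu_i\Bigr).
\end{align}
Both measures on the right have total mass $P:=\sum_i p_i=\sum_i q_i$ (the two sums agree because $\sum_i\eta_i=\sum_i\nu_i=S$); if $P=0$ the original mixtures coincide and the bound is trivial, so assume $P>0$ and normalize to probability mixtures $\bar\alpha=\sum_i(p_i/P)\mu_i$, $\bar\beta=\sum_i(q_i/P)\mu_i$, giving $W_1(\sum_i p_i\mu_i,\sum_i q_i\mu_i)=P\,W_1(\bar\alpha,\bar\beta)$.

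Next I would bound $W_1(\bar\alpha,\bar\beta)\le D$ by exhibiting a coupling: take any coupling $\pi$ of the index distributions $(p_i/P)_i$ and $(q_j/P)_j$ on $\{1,\dots,n\}$, and conditionally on $(i,j)$ use an optimal $W_1$-coupling of $\mu_i$ and $\mu_j$; the resulting transport cost is $\sum_{i,j}\pi_{ij}W_1(\mu_i,\mu_j)\le D\sum_{i,j}\pi_{ij}=D$. (Equivalently, decompose $\sum_i(\eta_i-\nu_i)F_i=\sum_{i,j}c_{ij}(F_i-F_j)$ with $c_{ij}\ge 0$ and $\sum_{i,j}c_{ij}=P$, then apply the triangle inequality under the integral.) This gives $W_1(\sum_i\eta_i\mu_i,\sum_i\nu_i\mu_i)\le PD=\tfrac12\bigl(\sum_i\lvert\eta_i-\nu_i\rvert\bigr)D$.

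Finally I would convert $\tfrac12\sum_i\lvert\eta_i-\nu_i\rvert$ into a strict improvement on $S$. Since $n$ is finite and each $\eta_i,\nu_i$ is a fixed strictly positive number, $\epsilon_0:=\min_i\min(\eta_i,\nu_i)>0$, and $\lvert\eta_i-\nu_i\rvert=\eta_i+\nu_i-2\min(\eta_i,\nu_i)\le\eta_i+\nu_i-2\epsilon_0$; summing yields $\sum_i\lvert\eta_i-\nu_i\rvert\le 2S-2n\epsilon_0$, hence $W_1\le(S-n\epsilon_0)D$, and any $\delta\in(0,n\epsilon_0)$ gives the asserted $W_1<(S-\delta)D$. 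The one genuinely delicate point is exactly this last step: the positive gap $\delta$ must be manufactured from the finiteness of $n$ together with the strict positivity of every weight, and it cannot be taken uniform as the $\eta_i$ or $\nu_i$ approach the boundary of the scaled simplex — which is why the statement requires $\eta_i,\nu_i\in(0,S)$ rather than $[0,S]$.
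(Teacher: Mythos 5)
Your proof is correct and follows essentially the same route as the paper's: both pass through the representation $W_1=\int_{\mathbb{R}}\lvert\sum_i(\eta_i-\nu_i)F_i\rvert\,dx$, reduce the bound to $D$ times the mass $\sum_{\{i:\eta_i\ge\nu_i\}}(\eta_i-\nu_i)$ of the positive part of the weight difference, and extract the strict gap from the strict positivity of the weights. Your version is in fact slightly sharper, since it produces an explicit $\delta=n\min_i\min(\eta_i,\nu_i)$ where the paper only asserts the existence of some positive $\delta$ from the strict inequality.
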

\begin{proof}
\begin{align}
&\int_{\mathbb{R}}\big|\sum_{i=1}^{n}(\eta_i-\nu_i)F_i(x) \big|dx\\
&=\int_{\mathbb{R}}\big|\sum_{i=1}^{n}\delta_iF_i(x)\big|dx\text{, where }\delta_i=\eta_i-\nu_i\\
&=\int_{\mathbb{R}}\big|\sum_{\{i|\delta_i\ge 0\}}\delta_iF_i(x)+\sum_{\{j|\delta_j< 0\}}\delta_jF_j(x)\big|dx\\
&=\int_{\mathbb{R}}\big|\sum_{\{i|\delta_i\ge 0\}}\delta_i \big(\delta_{i,1}( F_i(x)-F_{i,1}(x))+\cdots+\delta_{i,n(i)}( F_i(x)-F_{i,n(i)}(x))\big) \big|dx
\end{align}
for some $\delta_{i,1},\dots,\delta_{i,n(i)}>0,\ s.t.\ \delta_{i,1}+\cdots+\delta_{i,n(i)}=1$.

\begin{align}
&\int_{\mathbb{R}}\big|\sum_{\{i|\delta_i\ge 0\}}\delta_i \big(\delta_{i,1}( F_i(x)-F_{i,1}(x))+\cdots+\delta_{i,n(i)}( F_i(x)-F_{i,n(i)}(x))\big) \big|dx\\
&\le \int_{\mathbb{R}}\sum_{\{i|\delta_i\ge 0\}}\delta_i \big(\delta_{i,1}|F_i(x)-F_{i,1}(x)|+\cdots+\delta_{i,n(i)}| F_i(x)-F_{i,n(i)}(x)|\big) dx\\
&\le \sum_{\{i|\delta_i\ge 0\}}\delta_i \big(\delta_{i,1}+\cdots+\delta_{i,n(i)}\big)D\\
&=\sum_{\{i|\delta_i\ge 0\}}\delta_i D\\
&=\sum_{\{i|\eta_i-\nu_i\ge 0\}}(\eta_i-\nu_i)D\\
&<\sum_{\{i|\eta_i-\nu_i\ge 0\}}(\eta_i)D\\
&<SD
\end{align}
\end{proof}

\subsubsection{Proof of Theorem \ref{thm:thm1}.}
{\scriptsize
\begin{align}
\mathbb{E}[|M_v^{(k)}-M_{v'}^{(k)}|]=\mathbb{E}\left[|M_v^{(k)}-M_{v'}^{(k)}|\mathds{1}_{\{|M_v^{(k)}-M_{v'}^{(k)}|\le \epsilon\}}\right]+\mathbb{E}\left[|M_v^{(k)}-M_{v'}^{(k)}|\mathds{1}_{\{|M_v^{(k)}-M_{v'}^{(k)}|> \epsilon\}}\right]\le\epsilon+{16CV\over \epsilon^2}
\end{align}
}%

since $\mathbb{E}\left[|M_v^{(k)}-M_{v'}^{(k)}|\mathds{1}{\{|M_v^{(k)}-M_{v'}^{(k)}|\le \epsilon\}}\right]\le \epsilon$, and \\
$\mathbb{E}\left[|M_v^{(k)}-M_{v'}^{(k)}|\mathds{1}{\{|M_v^{(k)}-M_{v'}^{(k)}|> \epsilon\}}\right]\le 2 C\ P(|M_v^{(k)}-M_{v'}^{(k)}|>\epsilon)\le {16CV\over \epsilon^2}$ by Lemma \ref{lem:lem1}.

Plugging in $2(4CV)^{1/3}$ to $\epsilon$ gives us, $\mathbb{E}[|M_v^{(k)}-M_{v'}^{(k)}|]\le 3(4CV)^{1/3}$.

\begin{align}
\therefore W_1 (m_{yt}^{(k)},m_{yt'}^{(k)})\le\mathbb{E}[|M_v^{(k)}-M_{v'}^{(k)}|]\le \mathcal{O}(C^{1/3}V^{1/3})
\end{align}

\subsubsection{Proof of Theorem \ref{thm:thm2}}
By Hoeffding’s inequality, $P(|M_v^{(k)}-M_{v'}^{(k)}|>{\epsilon\over 2})\le 2 \exp(-{\epsilon^2\over{8\tau^2}})$.

So with the same steps of Theorem \ref{thm:thm1}, $\mathbb{E}[|M_v^{(k)}-M_{v'}^{(k)}|]\le \epsilon + 4C\exp(-{\epsilon^2\over{8\tau^2}})$.

Plug in $(8\tau^2 \log C)^{1/2}$ to $\epsilon$. Then $\mathbb{E}[|M_v^{(k)}-M_{v'}^{(k)}|]\le (8\tau^2 \log C)^{1/2} + 4$.

\begin{align}
\therefore W_1 (m_{yt}^{(k)},m_{yt'}^{(k)})\le \mathcal O(\tau\sqrt{\log C})
\end{align}

\subsubsection{Proof of Theorem \ref{thm:thm3}}
{\scriptsize
\begin{align}
m_{yt}^{(k+1)}&={\sum_{\tilde y\in \bold{Y}}\sum_{\tilde t\in \bold{T}_t^{single}}2\mathcal{P}_{yt}(\tilde y, \tilde t)x_{\tilde y \tilde t}^{(k)}+ \sum_{\tilde y\in \bold{Y}}\sum_{\tilde t\in \bold{T}_t^{double}}\mathcal{P}_{yt}(\tilde y, \tilde t)x_{\tilde y \tilde t}^{(k)} \over \sum_{\tilde y\in \bold{Y}}\sum_{\tilde t\in \bold{T}_t^{single}}2\mathcal{P}_{yt}(\tilde y, \tilde t)+ \sum_{\tilde y\in \bold{Y}}\sum_{\tilde t\in \bold{T}_t^{double}}\mathcal{P}_{yt}(\tilde y, \tilde t)}\\
&={\sum_{\tilde y\in \bold{Y}}\sum_{\tilde t\in \bold{T}_t^{single}}2f(y,t)g(y,\tilde y,|\tilde t-t|)x_{\tilde y \tilde t}^{(k)}+ \sum_{\tilde y\in \bold{Y}}\sum_{\tilde t\in \bold{T}_t^{double}}f(y,t)g(y,\tilde y,|\tilde t-t|)x_{\tilde y \tilde t}^{(k)} \over \sum_{\tilde y\in \bold{Y}}\sum_{\tilde t\in \bold{T}_t^{single}}2f(y,t)g(y,\tilde y,|\tilde t-t|)+ \sum_{\tilde y\in \bold{Y}}\sum_{\tilde t\in \bold{T}_t^{double}}f(y,t)g(y,\tilde y,|\tilde t-t|)}\\
&={\sum_{\tilde y\in \bold{Y}}\sum_{\tilde t\in \bold{T}_t^{single}}2g(y,\tilde y,|\tilde t-t|)x_{\tilde y \tilde t}^{(k)}+ \sum_{\tilde y\in \bold{Y}}\sum_{\tilde t\in \bold{T}_t^{double}}g(y,\tilde y,|\tilde t-t|)x_{\tilde y \tilde t}^{(k)} \over \sum_{\tilde y\in \bold{Y}}\sum_{\tilde t\in \bold{T}_t^{single}}2g(y,\tilde y,|\tilde t-t|)+ \sum_{\tilde y\in \bold{Y}}\sum_{\tilde t\in \bold{T}_t^{double}}g(y,\tilde y,|\tilde t-t|)}\\
&\overset{let}= \sum_{\tilde y\in\bold{Y}}\sum_{\tilde t\in \bold{T}}\lambda_{yt\tilde y \tilde t}x_{\tilde y \tilde t}^{(k)}
\end{align}
}%

Where $0<\lambda_{yt\tilde y \tilde t}<1$ is effective message passing weight in \PMP, hence satisfying $\sum_{\tilde y\in \bold{Y}}\sum_{\tilde t\in \bold{T}}\lambda_{yt\tilde y \tilde t}=1$.

Furthermore, since $\sum_{\tilde t\in \bold{T}_t^{single}}2g(y,\tilde y,|\tilde t-t|)+\sum_{\tilde t\in \bold{T}_t^{double}}g(y,\tilde y,|\tilde t-t|)=2\sum_{\tau\le 0}g(y,\tilde y,\tau)$, the following relation holds:

\begin{align}
\sum_{\tilde t\in \bold{T}}\lambda_{yt\tilde y \tilde t}={\sum_{\tau\ge 0}g(y,\tilde y,\tau)\over\sum_{y'\in \bold{Y}}\sum_{\tau\ge 0}g(y,y',\tau)}
\end{align}

Thus, $\sum_{\tilde t\in \bold{T}}\lambda_{yt\tilde y \tilde t}=\sum_{\tilde t\in \bold{T}}\lambda_{yt'\tilde y \tilde t}, \ \forall t,t'\in \bold{T}$. We can let $\sum_{\tilde t\in \bold{T}}\lambda_{yt\tilde y \tilde t}=\rho_{y\tilde y}$.

\begin{align}
W_1(m_{yt}^{(k+1)},m_{yt_{max}}^{(k+1)})&=W_1 \left( \sum_{\tilde y\in \bold{Y}}\sum_{\tilde t\in \bold{T}}\lambda_{yt\tilde y\tilde t}x_{\tilde y\tilde t}^{(k)},\sum_{\tilde y\in \bold{Y}}\sum_{\tilde t\in \bold{T}}\lambda_{yt'\tilde y\tilde t}x_{\tilde y\tilde t}^{(k)}\right)\\
&=\int_{\mathbb{R}}\big| \sum_{\tilde y\in \bold{Y}}\sum_{\tilde t\in \bold{T}}\lambda_{yt\tilde y\tilde t}F_{\tilde y\tilde t}^{(k)}(x)-\sum_{\tilde y\in \bold{Y}}\sum_{\tilde t\in \bold{T}}\lambda_{yt'\tilde y\tilde t}F_{\tilde y\tilde t}^{(k)}(x)\big|dx\\
&=\int_{\mathbb{R}}\big| \sum_{\tilde y\in \bold{Y}}\sum_{\tilde t\in \bold{T}}(\lambda_{yt\tilde y\tilde t}-\lambda_{yt'\tilde y\tilde t})F_{\tilde y\tilde t}^{(k)}(x)\big|dx
\end{align}

Where $F_{\tilde y \tilde t}^{(k)}$ is the cumulative distribution function of $x_{\tilde y \tilde t}^{(k)}$.

By Lemma \ref{lem:lem2} and Lemma \ref{lem:lem3}, 

\begin{align}
\int_{\mathbb{R}}\big| \sum_{\tilde t\in \bold{T}}(\lambda_{yt\tilde y\tilde t}-\lambda_{yt'\tilde y\tilde t})F_{\tilde y\tilde t}^{(k)}(x)\big|dx\le (\rho_{y\tilde y}-\epsilon_{y\tilde y t t'})GW
\end{align}

For some $0<\epsilon_{y\tilde y t t'}<\rho_{y\tilde y}$.

\begin{align}
\therefore W_1(m_{yt}^{(k+1)},m_{yt_{max}}^{(k+1)})&\le \int_{\mathbb{R}}\sum_{\tilde y\in \bold{Y}}\sum_{\tilde t\in \bold{T}}\big| (\lambda_{yt\tilde y\tilde t}-\lambda_{yt'\tilde y\tilde t})F_{\tilde y\tilde t}^{(k)}(x)\big|dx\\
&\le \sum_{\tilde y\in \bold{Y}}(\rho_{y\tilde y}-\epsilon_{y\tilde y t t'})GW\\
&=G(1-\sum_{\tilde y\in \bold{Y}}\epsilon_{y\tilde y t t'})W
\end{align}

Let $\epsilon_{ytt'}=\sum_{\tilde y\in \bold{Y}}\epsilon_{y\tilde y t t'}$ and $\min_{y\in \bold{Y}, t,t'\in \bold{T}}\epsilon_{ytt'}=\epsilon$. 

Then, $W_1(m_{yt}^{(k+1)},m_{yt'}^{(k+1)})\le G(1-\epsilon)W$.

Let $G^{(k)}={1\over {1-\epsilon}}>1$.

Then, $\forall y,t,t', \ W_1 (m_{yt}^{(k+1)},m_{yt_{max}}^{(k+1)})\le {G\over G^{(k)}}W$

\subsection{Estimation of relative connectivity}
\label{apdx:rel_con}
When $t \neq t_{max}$ and $\tilde t \neq t_{max}$, $\mathcal{P}_{y t} (\tilde y ,\tilde t)$ has the following best unbiased estimator:
\begin{align}
\hat{\mathcal{P}}_{y t} (\tilde y ,\tilde t)={\sum_{u\in \{u'\in \bold{V} | u'\text{ has label }y, u'\text{ has time }t\}}| \mathcal{N}_u(\tilde y, \tilde t) |\over \sum _{u\in \{u'\in \bold{V} | u'\text{ has label }y, u'\text{ has time } t\}}|\mathcal{N}_u|} , \ \forall t, \tilde t \neq t_{max}
\end{align}

We can regard this problem as a nonlinear overdetermined system $\hat{\mathcal{P}}_{y t} \left(\tilde{y}, \tilde{t}\right) = f(y, t) g\left(y, \tilde{y}, | \tilde{t}-t|\right), \ \forall y, \tilde{y} \in \bold{Y}, \forall t, \tilde{t} \in \bold{T}$, with the constraint of $\sum_{\tilde{y} \in \bold{Y}}\sum_{\tilde{t} \in \bold{T}} \hat{\mathcal{P}}_{y t} \left(\tilde{y}, \tilde{t}\right)=1$.\\

When $t=t_{max}$ or $\tilde t=t_{max}$ is not feasible due to the unavailability of labels in the test set, we utilize assumption 3 to compute $\hat{\mathcal{P}}_{y t} (\tilde y ,\tilde t)$ for this cases. Let's first consider the following equation:

\begin{align}
\sum_{\tilde y\in\bold Y}\mathcal{P}_{yt}(\tilde y, t) = \sum_{\tilde y \in \bold{Y}} f(y, t)g(y, \tilde y, 0) =f(y, t)\sum_{\tilde y \in \bold{Y}}g(y, \tilde y, 0)
\end{align}

Earlier, when introducing assumption 3, we defined $\sum_{\tilde y \in \bold{Y}}g(y, \tilde y, 0)=1$. Therefore, when $t<t_{max}$, we can express $f(y, t)$ as follows:

\begin{align}
f(y, t)=\sum_{\tilde y\in\bold Y}\mathcal{P}_{yt}(\tilde y, t)
\end{align}

For any $\Delta \in \{|\tilde t -t | \mid t, \tilde t\in \bold{T}\}$, we have:

\begin{align}
\sum_{t< t_{max}-\Delta}\mathcal{P}_{yt}(\tilde y, t+\Delta) =\sum_{t< t_{max}-\Delta}f(y, t)g(y, \tilde y, \Delta)
\end{align}

\begin{align}
\sum_{t<t_{max}}\mathcal{P}_{yt}(\tilde y, t-\Delta) =\sum_{t<t_{max}}f(y, t)g(y, \tilde y, \Delta)
\end{align}

The reason we consider up to $t= {t_{max}-1-\Delta}$ in the first equation and up to $t = t_{max}-1$ in the second equation is because we assume situations where ${\mathcal{P}}_{y t} (\tilde y ,\tilde t)$ cannot be estimated when $t=t_{max}$ or $\tilde t=t_{max}$. Utilizing both equations aims to construct an estimator using as many measured values as possible when $t\neq t_{max}$.

Thus,

\begin{align}
g(y, \tilde y, \Delta)= {\sum_{t< t_{max}-\Delta}\mathcal{P}_{yt}(\tilde y, t+\Delta)+\sum_{t<t_{max}} \mathcal{P}_{yt}(\tilde y, t-\Delta)\over \sum_{t< t_{max}-\Delta}f(y, t)+\sum_{t<t_{max}}f(y, t)}
\end{align}

Since $f(y, t)=\sum_{\tilde y\in\bold Y}\mathcal{P}_{yt}(\tilde y, t)$,

\begin{align}
g(y, \tilde y, \Delta)= {\sum_{t< t_{max}-\Delta}\mathcal{P}_{yt}(\tilde y, t+\Delta)+\sum_{t<t_{max}} \mathcal{P}_{yt}(\tilde y, t-\Delta)\over \sum_{t< t_{max}-\Delta}\sum_{y'\in\bold Y}\mathcal{P}_{yt}(y', t)+\sum_{t<t_{max}}\sum_{y'\in\bold Y}\mathcal{P}_{yt}(y', t)}
\end{align}

For any $y, \tilde y \in \bold{Y}$ and $\Delta \in \{|\tilde t -t | \mid t, \tilde t\in \bold{T}\}$, we can construct an estimator $\hat{g}(y, \tilde y, \Delta)$ for $g(y, \tilde y, \Delta)$ as follows:

\begin{align}
\hat{g}(y, \tilde y, \Delta)= {\sum_{t< t_{max}-\Delta}\hat{\mathcal{P}}_{yt}(\tilde y, t+\Delta)+\sum_{t<t_{max}} \hat{\mathcal{P}}_{yt}(\tilde y, t-\Delta)\over \sum_{t< t_{max}-\Delta}\sum_{y'\in\bold Y}\hat{\mathcal{P}}_{yt}(y', t)+\sum_{t<t_{max}}\sum_{y'\in\bold Y}\hat{\mathcal{P}}_{yt}(y', t)}
\end{align}

This estimator is designed to utilize as many measured values $\hat{\mathcal{P}}_{y t} (\tilde y ,\tilde t)$ as possible, excluding cases where $t=t_{max}$ or $\tilde t=t_{max}$.

\begin{align}
\mathcal P_{y t}(\tilde y, \tilde t)= {\mathcal P_{y t}(\tilde y, \tilde t)\over \sum_{y'\in \bold{Y}}\sum_{t'\in\bold{T}}\mathcal{P}_{y t}(y', t')}={g(y, \tilde y, |\tilde t-t|)\over \sum_{y'\in \bold{Y}}\sum_{t'\in\bold{T}}g(y, y', |t'-t|)}
\end{align}

Therefore, for all $y, \tilde y \in \bold{Y}$ and $|\tilde t - t |\in\{|\tilde t -t | \mid t, \tilde t\in \bold{T}\}$, we can define the estimator $\hat{\mathcal P}_{y t}(\tilde y, \tilde t)$ of $\mathcal P_{y t}(\tilde y, \tilde t)$ as follows:

\begin{align}
\hat{\mathcal P}_{y t}(\tilde y, \tilde t)={\hat{g}(y, \tilde y, |\tilde t-t|)\over \sum_{y'\in \bold{Y}}\sum_{t'\in\bold{T}}\hat{g}(y, y', |t'-t|)}
\end{align}

\begin{algorithm}
    \caption{\name Estimation of relative connectivity.}
    \label{alg:rel_con}
            \SetKwInOut{Input}{Input}\SetKwInOut{Output}{Output}
        \Input{~ Neighboring node sets $\mathcal{N}_{u},\ \forall u \in \bold{V}$; node time function $time:V\rightarrow \bold{T}$; train, test split $V^{tr}=\{v\mid v\in V, time(v) < t_{\max}\}$ and $V^{te}=\{v\mid v\in \bold{V}, time(v) = t_{\max}\}$; node label function $label:\bold{V}^{tr} \rightarrow \bold{Y}$.}
        \BlankLine
        \Output{~Estimated relative connectivity $\hat{\mathcal{P}}_{y, t}(\tilde y ,\tilde t)$, $\forall y, \tilde y\in \bold{Y},\ t, \tilde t \in \bold{T}$.}
    
        \BlankLine
        \BlankLine
        \textbf{Estimate $\hat{\mathcal{P}}_{y, t} (\tilde y ,\tilde t)$ when $t \neq t_{\max}$ and $\tilde t \neq t_{\max}$.}\\
        \For{$t \in \bold{T}\setminus\{t_{\max}\}$}{
            \For{$\tilde t \in \bold{T}\setminus\{t_{\max}\}$}{
                $\hat{\mathcal{P}}_{y, t} (\tilde y ,\tilde t)\leftarrow{\sum_{u\in \{v\in \bold{V} | v\text{ has label }y, v\text{ has time }t\}}|\{v\in \mathcal{N}_u | v\text{ has label }\tilde y, v\text{ has time }\tilde t\}|\over \sum _{u\in \{v\in \bold{V} | v\text{ has label }y, v\text{ has time }t\}}|\mathcal{N}_u|}$\;
            }
        }
        
        \BlankLine
        \textbf{Estimate $g$ function.}\\
        \For{$y \in \bold{Y}$}{
            \For{$\tilde y \in \bold{Y}$}{
                \For{$\Delta \in \{|\tilde t -t | \mid t, \tilde t\in \bold{T}\}$}{
                    $\hat{g}(y, \tilde y, \Delta)\leftarrow {\sum_{t< t_{\max}-\Delta}\hat{\mathcal{P}}_{y,t}(\tilde y, t+\Delta)+\sum_{t<t_{\max}} \hat{\mathcal{P}}_{y,t}(\tilde y, t-\Delta)\over \sum_{t< t_{\max}-\Delta}\sum_{y'\in\bold Y}\hat{\mathcal{P}}_{y,t}(y', t)+\sum_{t<t_{\max}}\sum_{y'\in\bold Y}\hat{\mathcal{P}}_{y,t}(y', t)}$\;
                }
            }
        }
    
        \BlankLine
        \textbf{Estimate $\hat{\mathcal{P}}_{y, t} (\tilde y ,\tilde t)$ when $t = t_{\max}$ or $\tilde t = t_{\max}$.}\\
        \For{$y \in \bold{Y}$}{
            \For{$\tilde y \in \bold{Y}$}{
                \For{$t \in \bold{T}$}{
                    $\hat{\mathcal P}_{y, t}(\tilde y, t_{\max})\leftarrow{\hat{g}(y, \tilde y, |t_{\max}-t|)\over \sum_{y'\in \bold{Y}}\sum_{t'\in\bold{T}}\hat{g}(y, y', |t'-t|)}$\;
                }
            }
        }
        \For{$y \in \bold{Y}$}{
            \For{$\tilde y \in \bold{Y}$}{
                \For{$\tilde t \in \bold{T}$}{
                    $\hat{\mathcal P}_{y, t_{\max}}(\tilde y, \tilde t)\leftarrow{\hat{g}(y, \tilde y, |\tilde t-t_{\max}|)\over \sum_{y'\in \bold{Y}}\sum_{t'\in\bold{T}}\hat{g}(y, y', |t'-t_{\max}|)}$\;
                }
            }
        }
        
    \end{algorithm}

\subsection{Explanation of \PNY}
\label{apdx:PNY}

\subsubsection{1st and 2nd moment of aggregated message obtained through \PNY transform.}
We define the 1st and 2nd moment of \PNY with the identical steps of the 1st and 2nd moment of averaging message passing, as in Appendix \ref{apdx:secondmm}.

\subsubsection{Proof of Theorem \ref{thm:pny}}
\begin{align}
\hat{\mathbb{E}} [M_v^{PNY(k+1)}] & \overset{(a)}{=} \mathbb{E}[A_t (M_v^{pmp(k+1)}-\mu_M^{pmp(k+1)})] + \mathbb{E}[M_v^{pmp(k+1)}] \\
& = A_t(\mathbb{E}[M_v^{pmp(k+1)}] - \mu_M^{pmp(k+1)}) + \mu_M^{pmp(k+1)} \\
& \overset{(b)}{=} A_t(\mu_M^{pmp(k+1)} - \mu_M^{pmp(k+1)}) + \mu_M^{pmp(k+1)} \\
& = \mu_M^{pmp(k+1)}
\end{align}

\begin{align}
\hat{\text{var}}[M_v^{PNY(k+1)}]&\overset{(a)}=\text{var}\left(A_t(M_v^{pmp(k+1)}-\mu_M^{pmp(k+1)}(y))+\mu_M^{pmp(k+1)}(y)\right)\\ &\overset{(b)}=\mathbb{E}[A_t(M_v^{pmp(k+1)}-\mu_M^{pmp(k+1)}(y))(M_v^{pmp(k+1)}-\mu_M^{pmp(k+1)}(y))^{\top}A_t^{\top}]\\
&=A_t\mathbb{E}[(M_v^{pmp(k+1)}-\mu_M^{pmp(k+1)}(y))(M_v^{pmp(k+1)}-\mu_M^{pmp(k+1)}(y))^{\top}]A_t^{\top}\\
&\overset{(b)}=A_t\hat{\text{var}}(M_v^{pmp(k+1)})A_t^{\top}\\
&=(U_{yt_{max}}\Lambda_{yt_{max}}^{1/2}\Lambda_{yt}^{-1/2}U_{yt}^{\top})\Sigma_{MM}^{pmp(k+1)}(U_{yt}\Lambda_{yt}^{-1/2}\Lambda_{yt_{max}}^{1/2}U_{yt_{max}}^{\top})\\
&=(U_{yt_{max}}\Lambda_{yt_{max}}^{1/2}\Lambda_{yt}^{-1/2}U_{yt}^{\top})(U_{yt}\Lambda_{yt}U_{yt}^{-1})(U_{yt}\Lambda_{yt}^{-1/2}\Lambda_{yt_{max}}^{1/2}U_{yt_{max}}^{\top})\\
&=(U_{yt_{max}}\Lambda_{yt_{max}}^{1/2}\Lambda_{yt}^{-1/2})\Lambda_{yt}(\Lambda_{yt}^{-1/2}\Lambda_{yt_{max}}^{1/2}U_{yt_{max}}^{\top})\\
&=(U_{yt_{max}}\Lambda_{yt_{max}}^{1/2})(\Lambda_{yt_{max}}^{1/2}U_{yt_{max}}^{\top})\\
&=U_{yt_{max}}\Lambda_{yt_{max}}U_{yt_{max}}^{\top}\\
&=\Sigma_{MM}^{pmp(k+1)}(y,t_{max})
\end{align}

\begin{algorithm}
    \caption{\name PNY transformation}
    \label{alg:PNY}
            \SetKwInOut{Input}{Input}\SetKwInOut{Output}{Output}
        \Input{~Previous layer's representation $X_v, \forall v\in \bold{V}$; Aggregated message $M_v, \forall v\in \bold{V}$, obtained from 1st moment alignment message passing; node time function $time:\bold{V}\rightarrow \bold{T}$; train, test split $\bold{V}^{tr}=\{v\mid v\in \bold{V}, time(v) < t_{\max}\}$ and $\bold{V}^{te}=\{v\mid v\in \bold{V}, time(v) = t_{\max}\}$; node label funtion $label:\bold{V}^{tr} \rightarrow \bold{Y}$; Estimated relative connectivity $\hat{\mathcal{P}}_{y, t}(\tilde y ,\tilde t)$, $\forall y, \tilde y\in \bold{Y},\ t, \tilde t \in \bold{T}$.}
        \BlankLine
        \Output{~Modified aggregated message $M_v', \forall v\in \bold{V}$}
        \BlankLine
        \BlankLine
        
        Let $\bold{V}_{y,t} = \{u \in \bold{V} \mid label(u)=y, time(u)=t\}$\;
        Let $\bold{V}_{\cdot,t} = \{u \in \bold{V} \mid time(u)=t\}$\;
        Let $\bold{T}_{\tau}^{\text{single}}= \{t \in \bold{T}\ \big |\  t =\tau \text{ or }t<2\tau-t_{\max}\}$\;
        Let $\bold{T}_{\tau}^{\text{double}}= \{t \in \bold{T}\ \big | \ |t- \tau| \le|t_{\max}-\tau|, t\neq \tau\}$\;
        Let $|\mathcal{N}_{yt}|={1\over{|\bold{V}_{y,t}|}}\sum_{u\in\bold{V}_{y,t}}|\mathcal{N}_u|$\;
        
        \BlankLine
        \textbf{Estimate covariance matrices of previous layer's representation.}\\
        \For{$t \in \bold{T}$}{
            $\hat\mu_{X}(\cdot,t)\leftarrow \hat\mu_M(\cdot,t) ={1\over {\mid \bold{V}_{\cdot,t} \mid}}\sum_{v\in \bold{V}_{\cdot,t}}X_v$\;
            $\hat\Sigma_{XX}(y)\leftarrow {1\over |\bold{V}_{\cdot,t}|-1}\sum_{v \in \bold{V}_{\cdot,t}} (X_v-\hat\mu_{X}(\cdot,t))(X_v-\hat\mu_{X}(\cdot,t))^{\top}$\;
        }
        
        \BlankLine
        \textbf{Estimate covariance matrices of aggregated message.}\\
        \For{$y \in \bold{Y}$}{
            \For{$t \in \bold{T}$}{
                $\hat{\Sigma}_{MM}(y,t) \leftarrow {\sum_{\tilde y\in \bold{Y}}\left(\sum_{\tilde t\in\bold{T}_{t}^{\text{single}}}4\hat{\mathcal{P}}_{y, t}(\tilde y, \tilde t)+\sum_{\tilde t\in\bold{T}_{t}^{\text{double}}}\hat{\mathcal{P}}_{y, t}(\tilde y, \tilde t)\right)\hat\Sigma_{XX}(\tilde y)\over\left(\sum_{\tilde y\in \bold{Y}}\sum_{\tilde t\in\bold{T}_{t}^{\text{single}}}2\hat{\mathcal{P}}_{y, t}(\tilde y, \tilde t)+\sum_{\tilde y\in \bold{Y}}\sum_{\tilde t\in\bold{T}_{t}^{\text{double}}}\hat{\mathcal{P}}_{y, t}(\tilde y, \tilde t)\right)^2|\mathcal{N}_{yt}|}$\;
            }
        }
    
        \BlankLine
        \textbf{Orthogonal diagonalization.}\\
        \For{$y \in \bold{Y}$}{
            \For{$t \in \bold{T}$}{
                Find $\hat P_{y, t},\ \hat D_{y,t}$ s.t. $\hat\Sigma_{MM}(y,t)=\hat P_{y, t}\hat D_{y,t} \hat P_{y, t}^{-1}$ and $\hat P_{y, t}^{-1}=\hat P_{y, t}^{\top}$\;
            }
        }
        
        \BlankLine
        \textbf{Update aggregated message.}\\
        \For{$v \in \bold{V}\setminus\bold{V}_{\cdot,t_{\max}}$}{
            Let $y= label(v)$\;
            Let $t= time(v)$\;
            $M_v^{'} \leftarrow  \hat P_{y, t_{\max}}\hat D_{y, t_{\max}}^{1/2}\hat D_{y, t}^{-1/2}\hat P_{y, t}^{\top}(M_v-\hat\mu_{M}(y))+\hat \mu_{M}(y)$\;
        }
    \end{algorithm}

\subsection{Explanation of \JJnorm}
\subsubsection{1st and 2nd moment of aggregated message obtained through \JJnorm.}\label{apdx:moments_in_jjnorm}
We define the first moment of \JJnorm message as the following steps:\\
\textbf{1st moment of aggregated message obtained through \JJnorm.}\\
(a) Take the expectation of the averaged message.\\
(b) Approximate the expectation of every \PMP message to the 1st moment of \PMP message.\\
\begin{align}
\hat{\hat{\mathbb{E}}}[M_v ^{JJ}]&\overset{(a)}=\mathbb{E}[\alpha_t (M_v ^{pmp(K)}-\mu_M^{JJ}(y,t))+\mu_M^{JJ}(y,t)]\\
&=\alpha_t \mathbb{E} [M_v^{pmp(K)}]+(1-\alpha_t)\mathbb{E}[\mu_M^{JJ}(y,t)]\\
&=\alpha_t \mathbb{E} [M_v^{pmp(K)}]+(1-\alpha_t){1\over{|\bold{V}_{y,t}|}}\mathbb{E}\left[\sum_{x\in\bold{V}_{y,t}}M^{pmp(K)}_w\right]\\
&=\alpha_t \mathbb{E}[M^{pmp(K)}]+(1-\alpha_t){1\over{|\bold{V}_{y,t}|}}\sum_{x\in\bold{V}_{y,t}}\mathbb{E}\left[M^{pmp(K)}_w\right]\\
&\overset{(b)}=\alpha_t \mathbb{E}[M^{pmp(K)}]+(1-\alpha_t){1\over{|\bold{V}_{y,t}|}}\sum_{x\in\bold{V}_{y,t}}\hat{\mathbb{E}}\left[M^{pmp(K)}_w\right]\\
&=\alpha_t \mathbb{E}[M^{pmp(K)}]+(1-\alpha_t){1\over{|\bold{V}_{y,t}|}}\sum_{w\in\bold{V}_{y,t}}\mu_M^{pmp(K)}(y)\\
&=\alpha_t \mu_M^{pmp(K)}(y)+(1-\alpha_t) \mu_M^{pmp(K)}(y)\\
&=\mu_M^{pmp(K)}(y)
\end{align}

\textbf{2nd moment of aggregated message obtained through \JJnorm.}\\
We define the second moment of the \JJnorm message as the following steps:\\

(a) Take the variance of the averaged message.\\
(b) Consider $\mu_M^{JJ} (y, t)$ as a constant.\\
(c) Approximate the variance of the \PMP message to the 2nd moment of the \PMP message.\\

\begin{align}
\hat{\hat{\text{var}}}(M_v^{JJ}) &\overset{(a)}=  \text{var}\left(\alpha_t(M_v^{pmp(k)}-\mu_M^{JJ})+\mu_M^{JJ}(y,t)\right) \\
&\overset{(b)}= \text{var}\left(\alpha_t M_v^{pmp(K)}\right) \\
&= \alpha_t^2 \text{var}\left(M_v^{pmp(K)}\right) \\
&\overset{(c)}= \alpha_t^2 \hat{\text{var}}\left(M_v^{pmp(K)}\right) \\
&= \alpha_t^2 \Sigma_{MM}^{pmp(K)}(y,t)
\end{align}

\subsubsection{Proof of Lemma \ref{lem:jj}}\label{apdx:JJnormlemma}
Consider GNNs with linear semantic aggregation functions.
\begin{align}
	&M_v^{pmp(k+1)} \leftarrow \text{\PMP}(X_w^{pmp(k)},w\in \mathcal{N}_v)\\
	&X_v^{pmp(k+1)} \leftarrow A^{(k+1)}M_v^{pmp(k+1)}, \ \forall k<K, v\in \bold{V}
\end{align}
	
Let's use mathematical induction. First, for initial features, $\Sigma_{XX}^{pmp (0)}(y,t_{max})= \Sigma_{XX}^{pmp (0)}(y,t)$ holds. Suppose that in the $k$-th layer, representation $X^{(k)}$ satisfies $\beta_{t}^{(k)}\Sigma_{XX}^{pmp (k)}(y,t_{max})= \Sigma_{XX}^{pmp (k)}(y,t)$. This assumes that the expected covariance matrix of representations of nodes with identical labels but differing time information only differs by a constant factor.

\vspace{-15pt}

\begin{align}
\Sigma^{pmp(k+1)}_{MM}(y,t) &= \sum_{\tilde{y}\in \bold{Y}} \left( \sum_{\tilde{t}\in \bold{T}_{t}^{\text{single}}} 4 \mathcal{P}_{yt}\left(\tilde{y}, \tilde{t}\right) + \sum_{\tilde{t}\in \bold{T}_{t}^{\text{double}}} \mathcal{P}_{yt}\left(\tilde{y}, \tilde{t}\right) \right) \Sigma_{XX}^{pmp(k)}(\tilde{y}) \\
&\quad \Bigg/ \left( \sum_{\tilde{y}\in \bold{Y}} \sum_{\tilde{t}\in \bold{T}_{t}^{\text{single}}} 2 \mathcal{P}_{yt}\left(\tilde{y}, \tilde{t}\right) + \sum_{\tilde{y}\in \bold{Y}} \sum_{\tilde{t}\in \bold{T}_{t}^{\text{double}}} \mathcal{P}_{yt}\left(\tilde{y}, \tilde{t}\right) \right)^2 |\mathcal{N}_{v}|
\end{align}

\vspace{-5pt}

\begin{align}
\Sigma^{pmp(k+1)}_{MM}(y,t) &= {\sum_{\tilde y\in \bold{Y}}\left(\sum_{\tilde t\in\bold{T}_{t}^{\text{single}}}4\mathcal{P}_{y t}(\tilde y, \tilde t)\Sigma^{pmp(k)}_{XX}(\tilde y,\tilde t)+\sum_{\tilde t\in\bold{T}_{t}^{\text{double}}}\mathcal{P}_{y t}(\tilde y, \tilde t)\Sigma^{pmp(k)}_{XX}(\tilde y,\tilde t)\right)
\over
\left(\sum_{\tilde y\in \bold{Y}}\left(\sum_{\tilde t\in\bold{T}_{t}^{\text{single}}}2\mathcal{P}_{y t}(\tilde y, \tilde t)+\sum_{\tilde t\in\bold{T}_{t}^{\text{double}}}\mathcal{P}_{y t}(\tilde y, \tilde t)\right)\right)^2 |\mathcal N_{yt}|}\\
 &= {\sum_{\tilde y\in \bold{Y}}\left(\sum_{\tilde t\in\bold{T}_{t}^{\text{single}}}4\mathcal{P}_{y t}(\tilde y, \tilde t)\beta_{\tilde t}^{(k)}+\sum_{\tilde t\in\bold{T}_{t}^{\text{double}}}\mathcal{P}_{y t}(\tilde y, \tilde t)\beta_{\tilde t}^{(k)}\right)\Sigma_{XX}^{pmp (k)}(y,t_{max})
\over
\left(\sum_{\tilde y\in \bold{Y}}\left(\sum_{\tilde t\in\bold{T}_{t}^{\text{single}}}2\mathcal{P}_{y t}(\tilde y, \tilde t)+\sum_{\tilde t\in\bold{T}_{t}^{\text{double}}}\mathcal{P}_{y t}(\tilde y, \tilde t)\right)\right)^2 |\mathcal N_{yt}|}
\end{align}



\begin{align}
&{\sum_{\tilde t \in\bold{T}_{t}^{\text{single}}}4\mathcal{P}_{yt}(\tilde y,\tilde t)\beta_{\tilde t}^{(k)}+\sum_{\tilde t \in\bold{T}_{t}^{\text{double}}}\mathcal{P}_{yt}(\tilde y,\tilde t)\beta_{\tilde t}^{(k)}\over\sum_{\tilde t \in\bold{T}}4\mathcal{P}_{yt_{max}}(\tilde y,\tilde t)\beta_{\tilde t}^{(k)}}\\
&={\sum_{\tilde t \in\bold{T}_{t}^{\text{single}}}4g(y,\tilde y, |\tilde t -t|)\beta_{\tilde t}^{(k)}+\sum_{\tilde t \in\bold{T}_{t}^{\text{double}}}g(y,\tilde y, |\tilde t -t|)\beta_{\tilde t}^{(k)}\over\sum_{\tilde t \in\bold{T}}4g(y,\tilde y, |\tilde t -t_{max}|)\beta_{\tilde t}^{(k)}}
\end{align}

Since it is unrelated to $y$ by Assumption 4, we can define it as $\gamma_t^{(k)}$.

\begin{align}
&{\sqrt{|\mathcal{N}_{yt}|}\over \sqrt{|\mathcal{N}_{yt_{max}}|}}{\sum_{\tilde t \in\bold{T}_{t}^{\text{single}}}2\mathcal{P}_{yt}(\tilde y,\tilde t)+\sum_{\tilde t \in\bold{T}_{t}^{\text{double}}}\mathcal{P}_{yt}(\tilde y,\tilde t)\over\sum_{\tilde t \in\bold{T}}2\mathcal{P}_{yt_{max}}(\tilde y,\tilde t)}\\
&\overset{(c)}={\sqrt{P(t)}\over \sqrt{P(t_{max})}}{\sum_{\tilde t \in\bold{T}_{t}^{\text{single}}}2\mathcal{P}_{yt}(\tilde y,\tilde t)+\sum_{\tilde t \in\bold{T}_{t}^{\text{double}}}\mathcal{P}_{yt}(\tilde y,\tilde t)\over\sum_{\tilde t \in\bold{T}}2\mathcal{P}_{yt_{max}}(\tilde y,\tilde t)}\\
&={\sqrt{P(t)}\over \sqrt{P(t_{max})}}{\sum_{\tilde t \in\bold{T}_{t}^{\text{single}}}2g(y,\tilde y,|\tilde t-t|)+\sum_{\tilde t \in\bold{T}_{t}^{\text{double}}}g(y,\tilde y,|\tilde t-t|)\over\sum_{\tilde t \in\bold{T}}2g(y,\tilde y,|\tilde t-t_{max}|)(\tilde y,\tilde t)}
\end{align}

Since it is unrelated to $y$ by assumption 4, we can define it as $\lambda_t$.

1st equality holds by step (c) of 2nd moment of \PMP, as defined in Appendix \ref{apdx:secondmm}

{\scriptsize
\begin{align}
\Sigma^{pmp(k+1)}_{MM}(y,t) = {\gamma_{t}^{(k)}\over\lambda_{t}^2} {\sum_{\tilde y\in \bold{Y}}\sum_{\tilde t\in\bold{T}}4\mathcal{P}_{y t}(\tilde y, \tilde t)\beta_t^{(k)}\Sigma_{XX}^{pmp (k)}(\tilde y,t_{max})
\over
\left(\sum_{\tilde y\in \bold{Y}}\sum_{\tilde t\in\bold{T}}2\mathcal{P}_{y t}(\tilde y, \tilde t)\right)^2}
\end{align}
}%

Using $T_{t_{max}}^{\text{double}} = \phi$, 

\begin{align}
\Sigma^{pmp(k+1)}_{MM}(y,t) = {\gamma_{t}^{(k)}\over\lambda_{t}^2} \Sigma^{pmp(k+1)}_{MM}(y,t_{max}) 
\end{align}

Since $X_v^{(k+1)}=A^{(k+1)}M_v^{(k+1)}$, the following equation holds.

\begin{align}
\Sigma^{pmp(k+1)}_{XX}(y,t)&= A^{(k+1)}\Sigma^{pmp(k+1)}_{MM}(y,t)A^{(k+1)\top}\\
&=A^{(k+1)}{\gamma_{t}^{(k)}\over\lambda_{t}^2}\Sigma^{pmp(k+1)}_{MM}(y,t_{max})A^{(k+1)\top} \\
&={\gamma_{t}^{(k)}\over\lambda_{t}^2} \Sigma^{pmp(k+1)}_{XX}(y,t_{max}) 
\end{align}

Therefore, we proved that if $\beta_{t}^{(k)}\Sigma_{XX}^{pmp (k)}(y,t_{max})= \Sigma_{XX}^{pmp (k)}(y,t)$ holds for $k$, then for constants $\gamma_{t}^{(k)}, \lambda_{t}, \beta_{t}^{(k+1)}$ which depends only on time and layer, $\Sigma^{pmp(k+1)}_{MM}(y,t) = {\gamma_{t}^{(k)}\over\lambda_{t}^2} \Sigma^{pmp(k+1)}_{MM}(y,t_{max})$ and $\beta_{t}^{(k+1)}\Sigma_{XX}^{pmp (k+1)}(y,t_{max})= \Sigma_{XX}^{pmp (k+1)}(y,t)$ holds. By induction, lemma is proved.

\subsubsection{Proof of Theorem \ref{thm:jj}}
\label{apdx:JJnorm}
In this discussion, we will regard $\mu_M^{JJ}(\cdot,t)$ and $\mu_M^{JJ}(y,t)$ as constant, since generally there are sufficient number of samples in each community, especially for large-scale graphs. \\
As shown earlier, when passing through \PMP, the covariance matrix of the aggregated message is as follows.
\begin{figure}[hbt!]
	\centering
	\includegraphics[width=0.99\textwidth]{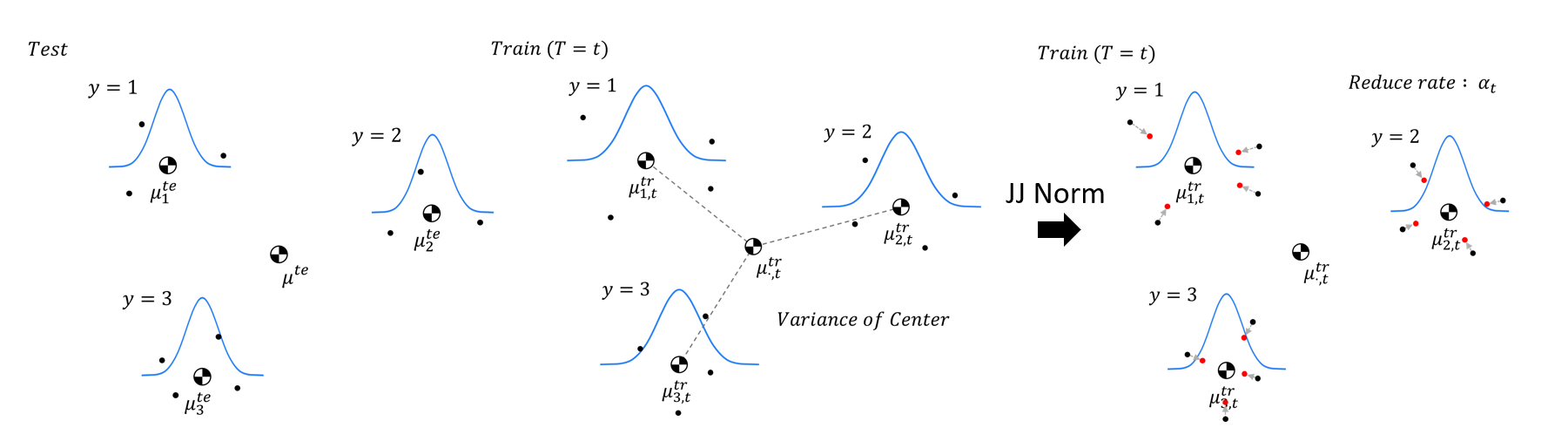}
	\vspace{-0.1in}
	\caption{Graphical explanation of \JJnorm. Under assumption 4, covariance matrices of aggregated message on each community differs only by a constant factor $\alpha_t$.}
	 \label{fig:JJ}
\end{figure}

Unlike \PNY, which estimates an affine transformation using $\hat{\mathcal{P}}_{y t}(\tilde y, \tilde t)$ to align the covariance matrix to be invariant, \JJnorm provides a more direct method to obtain an estimate $\hat{\alpha}_{t}$ of $\alpha_{t}$. Since the objective of this section is to get a sufficiently good estimator for implementation, the equations here may be heuristic but are proceeded with intuitive reasons.

Since we know that the covariance matrix differs only by a constant factor, we can simply use norms in multidimensional space rather than the covariance matrix to estimate $\alpha_{t}$.

Firstly, let's define $\bold{V}_{y,t} = \{u \in \bold{V} \mid u \text{ has label }y, u\text{ has time }t\}$, $\bold{V}_{\cdot,t} = \{u \in \bold{V} \mid u\text{ has time }t\}$.

Let us define
\begin{align}
\sigma_{y,t}^2=\mathbb{E}_{v\sim \bold{V}_{y,t}}[(M_v-\mu_M(y,t))^2]={1\over|\bold{V}_{y,t}|}\sum_{v\in\bold{V}_{y,t}}(M_v - \mu_{M}(y,t))^2
\end{align}

\begin{align}
\sigma_{\cdot,t}^2=\mathbb{E}_{v\sim \bold{V}_{\cdot,t}}[(M_v-\mu_M(t))^2]={1\over|\bold{V}_{\cdot,t}|}\sum_{v\in\bold{V}_{\cdot,t}}(M_v - \mu_{M}(t))^2
\end{align}

\begin{align}
\mu_{y,t}=\mathbb{E}_{v\sim \bold{V}_{y,t}}[M_v]={1\over|\bold{V}_{y,t}|}\sum_{v\in\bold{V}_{y,t}}M_v
\end{align}

\begin{align}
\mu_{y,t}=\mathbb{E}_{v\sim \bold{V}_{\cdot,t}}[M_v]={1\over|\bold{V}_{y,t}|}\sum_{v\in\bold{V}_{\cdot,t}}M_v
\end{align}

Note that definition of mean and variance here, are different with the definitions stated in \ref{apdx:moments_in_jjnorm}. Here, \JJnorm is a process of transforming the aggregated message, which is aggregated through \PMP, into a time-invariant representation. Hence, we can suppose that $\mu_M(y,t)$ is invariant to $t$. That is, for all $t\in\bold{T}$, $\mu_M(y,t)=\mu_M(y,t_{max})$. Additionally, we can define the variance of distances as follows: $\sigma_{y,t}^2=\mathbb{E}_{v\in \bold{V}_{y,t}}\left[(M_v-\mu_M(y,t))^2\right]$ and $\sigma_{\cdot,t}^2=\mathbb{E}_{v\in \bold{V}_{\cdot,t}}\left[(M_v-\mu_M(t))^2\right]$. Here, the square operation denotes the L2-norm.

\begin{flalign}
\mathbb{E}_{v\in \bold{V}_{\cdot,t}}\left[(M_v-\mu_M(t))^2\right] = \sum_{y\in \bold{Y}}P(y)\mathbb{E}_{v\in \bold{V}_{y,t}}\left[ (M_v - \mu_M(y,t)+\mu_M(y,t)-\mu_M(t))^2\right]\\=\sum_{y\in \bold{Y}}P(y)\Big(\mathbb{E}_{v\in \bold{V}_{y,t}}\left[ (M_v - \mu_M(y,t))^2 \right] +(\mu_M(y,t)-\mu_M(t))^2\Big)
\end{flalign}

Since $\mathbb{E}_{v\in \bold{V}_{y,t}}\left[ (M_v - \mu_M(y,t))^{\top}(\mu_M(y,t)-\mu_M(t))\right]=0$.

Here, mean of the aggregated messages during training and testing times satisfies the following equation: $\mu_M(t) = \mu_M(t_{max})$

\begin{align}
\mu_M(t)=\sum_{y\in\bold{Y}}P(y)\mu_M(y,t)=\sum_{y\in\bold{Y}}P(y)\mu_M(y,t_{max})=\mu_M(t_{max})
\end{align}

This equation is derived from the assumption that $\mu_M(y,t)$ is invariant to $t$ and from Assumption 1 regarding $P(y)$. Furthermore, by using Assumption 1 again, we can show that the variance of the mean computed for each label is also invariant to $t$:

\begin{align}
\sum_{y\in\bold{Y}}P(y)\mathbb{E}_{v\in \bold{V}_{y,t}}\left[(\mu_M(y,t)-\mu_M(t))^2 \right]=\sum_{y\in\bold{Y}}P(y)\mathbb{E}_{v\in \bold{V}_{y,t_{max}}}\left[(\mu_M(y,t_{max})-\mu_M(t_{max}))^2 \right]
\end{align}

\begin{align}
\mathbb{E}_{v\in \bold{V}_{y,t}}\left[(\mu_M(y,t)-\mu_M(t))^2 \right]=\mathbb{E}_{v\in \bold{V}_{y,t_{max}}}\left[(\mu_M(y,t_{max})-\mu_M(t_{max}))^2\right] =\nu^2,\ t\in \bold{T}
\end{align}

Here, $\nu^2$ can be interpreted as the variance of the mean of messages from nodes with the same $t\in \bold{T}$ for each label. According to the above equality, this is a value invariant to $t$.

Meanwhile, from Assumption 4,

\begin{align}
\alpha_t \mathbb{E}_{v\in \bold{V}_{y,t}}\left[ (M - \mu_M(y,t))^2 \right] = \mathbb{E}_{v\in \bold{V}_{y,t_{max}}}\left[ (M - \mu_M(y,t_{max}))^2\right], \forall t\in \bold{T}
\end{align}

\begin{align}
\alpha_t\sum_{y\in\bold{Y}}P(y)\mathbb{E}_{v\in \bold{V}_{y,t}}\left[ (M_v - \mu_M(y,t))^2 \right]=\sum_{y\in\bold{Y}}P(y)\mathbb{E}_{v\in \bold{V}_{y,t_{max}}}\left[ (M_v - \mu_M(y,t_{max}))^2\right]
\end{align}

Adding $\nu^2$ to both sides,

\begin{align}
\alpha_t\sum_{y\in \bold{Y}}P(y)\mathbb{E}_{v\in \bold{V}_{y,t}}\left[ (M_v - \mu_M(y,t))^2 \right] +\sum_{y\in \bold{Y}}P(y)\mathbb{E}_{v\in \bold{V}_{y,t}}\left[(\mu_M(y,t)-\mu_M(t))^2 \right] =\sigma_{\cdot,t_{max}}^2 
\end{align}

Thus,

\begin{align}
\alpha_t = { \sigma_{\cdot,t_{max}}^2  - \nu^2\over\sum_{y\in \bold{Y}}P(y)\mathbb{E}_{v\in \bold{V}_{y,t}}\left[ (M_v- \mu_M(y,t))^2 \right]}
\end{align}

Here, $\hat{\alpha}_t$ is an unbiased estimator of $\alpha_t$.

\begin{align}
\hat{\nu}^2={1\over \mid{\bold{V}_{\cdot,t}}\mid-1} \sum_{y\in \bold{Y}}\sum_{v \in \bold{V}_{y,t}}(\hat\mu_M(y,t) -\hat\mu_M(t) )^2  
\end{align}

\begin{align}
\hat{\alpha}_t = {\left( {1\over \mid{\bold{V}_{\cdot,t_{max}}}\mid-1}\sum_{v\in \bold{V}_{\cdot,t_{max}}}(M_v-\hat\mu_M(t_{max}))^2  -\hat{\nu}^2 \right)\over{1\over \mid{\bold{V}_{\cdot,t}}\mid-1} \sum_{y\in\bold{Y}} \sum_{v \in \bold{V}_{y,t}}(M_v-\hat\mu_M(y,t))^2}
\end{align}

Where $\hat\mu_M(y,t)={1\over {\mid \bold{V}_{y,t} \mid}}\sum_{v\in \bold{V}_{y,t}}M_v$  and $\hat\mu_M(t) ={1\over {\mid \bold{V}_{\cdot,t} \mid}}\sum_{v\in \bold{V}_{\cdot,t}}M_v$ . 

Note that all three terms in the above equation can be directly computed without requiring test labels.

By using $\hat{\alpha_t}$, we can update the aggregated message from \PMP to align the second-order statistics.

\begin{align}
\ M_v^{JJnorm} \leftarrow \hat\mu_M(y,t) +\hat{\alpha}_{t} (M_v - \hat\mu_M(y,t)),\ \forall i \in \bold{V}\setminus\bold{V}_{\cdot,t_{max}}
\end{align}

\begin{algorithm}
    \caption{\name JJ normalization}
    \label{alg:JJ-Norm}
        \SetKwInOut{Input}{Input}\SetKwInOut{Output}{Output}
        \Input{~Aggregated message $M_v, \forall v\in \bold{V}$, obtained from 1st moment alignment message passing; node time function $time:\bold{V}\rightarrow \bold{T}$; train, test split $\bold{V}^{tr}=\{v\mid v\in \bold{V}, time(v) < t_{\max}\}$ and $\bold{V}^{te}=\{v\mid v\in \bold{V}, time(v) = t_{\max}\}$; node label funtion $label:\bold{V}^{tr} \rightarrow \bold{Y}$.}
        \BlankLine
        \Output{~Modified aggregated message $M_v', \forall v\in \bold{V}$.}
        \BlankLine
        \BlankLine
        Let $\bold{V}_{y,t} = \{u \in \bold{V} \mid label(u)=y, time(u)=t\}$\;
        Let $\bold{V}_{\cdot,t} = \{u \in \bold{V} \mid time(u)=t\}$\;
    
        \BlankLine
        \textbf{Estimate mean and variance for each community.}\\
        \For{$t \in \bold{T}$}{
            $\hat\mu_{M}(\cdot,t)\leftarrow \hat\mu_M(\cdot,t) ={1\over {\mid \bold{V}_{\cdot,t} \mid}}\sum_{v\in \bold{V}_{\cdot,t}}M_v $\;
        }
        \For{$y \in \bold{Y}$}{
            \For{$t \in \{\dots,t_{\max}-1\}$}{
                $\hat\nu_t^2\leftarrow {1\over \mid{\bold{V}_{\cdot,t}}\mid-1} \sum_{y\in \bold{Y}}\sum_{v\in \bold{V}_{y,t}}(\hat\mu_M(y,t) -\hat\mu_M(\cdot,t) )^2 $\;
            }
        }
        \For{$y \in \bold{Y}$}{
            \For{$t \in \{\dots,t_{\max}-1\}$}{
                $\hat\mu_{M}(y,t)\leftarrow {1\over {\mid \bold{V}_{y,t} \mid}}\sum_{v\in \bold{V}_{y,t}}M_v $\;
                $\hat\sigma_{y,t}^2\leftarrow {1\over \mid{\bold{V}_{\cdot,t}}\mid-1} \sum_{y\in\bold{Y}} \sum_{v\in \bold{V}_{y,t}}(M_v-\hat\mu_M(y,t))^2$\;
            }
        }
    
        $\hat\sigma_{t_{\max}}^2\leftarrow {1\over \mid{\bold{V}_{\cdot,t_{\max}}}\mid-1}\sum_{v\in \bold{V}_{\cdot,t_{\max}}}(M_v-\hat\mu_M(\cdot,t_{\max}))^2  -{1\over \mid{\bold{V}_{\cdot,t}}\mid-1} \sum_{y\in \bold{Y}}\sum_{v\in \bold{V}_{y,t}}(\hat\mu_M(y,t) -\hat\mu_M(\cdot,t) )^2$\;
        
        \BlankLine
        \textbf{Estimate $\hat\alpha_t$ for $t<t_{\max}$.}\\
        \For{$t \in \{\dots,t_{\max}-1\}$}{
            $\hat\alpha_t^2 \leftarrow {\hat\sigma_{t_{\max}}^2 - \hat\nu_t^2 \over \hat\sigma_{y,t}^2}$\;
        }
        
        \BlankLine
        \textbf{Update aggregated message.}\\
        \For{$v\in \bold{V}\setminus\bold{V}_{\cdot,t_{\max}}$}{
            Let $y= label(i)$\;
            Let $t= time(i)$\;
            $M_v^{'} \leftarrow \hat\mu_M(y,t) +\hat{\alpha}_{t} (M_v - \hat\mu_M(y,t)),\ \forall v\in \bold{V}\setminus\bold{V}_{\cdot,t_{\max}}$\;
        }
    \end{algorithm}

\subsection{Detailed experimental setup for synthetic graph experiments.}
\label{apdx:synthetic_setup}
In our experiments, we set $f = 5$, $k_{y}$ was sampled from a uniform distribution in $[0, 8]$, and the center of features for each label $\mu(y) \in \mathbb{R}^f$ was sampled from a standard normal distribution. Each graph consisted of 2000 nodes, with a possible set of times $\bold{T} = \{0, 1, \dots, 9\}$ and a set of labels $\bold{Y} = \{0, 1, \dots, 9\}$, with time and label uniformly distributed. Therefore, the number of communities is 100, each comprising 20 nodes. Additionally, we defined $\bold{V}_{\text{te}} = \{u \in \bold{V} \mid u\text{ has time } \ge 8\}$ and $\bold{V}_{\text{tr}} = \{u \in \bold{V} \mid u\text{ has time } < 8\}$. When communities have an equal number of nodes, the following relationship holds:

\vspace{-15pt}
\begin{align}
    \bold{P}_{t \tilde t y \tilde y} = \gamma_{y, \tilde y}^{|t - \tilde t|} \bold{P}_{t t y \tilde y} ,\ \forall |t - \tilde t |>0
\end{align}
\vspace{-5pt}

To fully determine the tensor $\bold{P}_{t \tilde t y \tilde y}$, we needed to specify the values when $t = \tilde t$. In order to imbue the graph with topological information, we defined two hyperparameters, $\mathcal{K}$ and $\mathcal{G}$, such that $\mathcal{K} < \mathcal{G}$. For any $y, \tilde y \in \bold{Y}$, if $y = \tilde y$, we sampled $\mathcal{P}_{y, t, \tilde y, t}$ from a uniform distribution in $[0, \mathcal{K}]$, and if $y \ne \tilde y$, we sampled $\mathcal{P}_{y, t, \tilde y, t}$ from a uniform distribution in $[0, \mathcal{G}]$. In our experiments, we used $\mathcal{K} = 0.6$ and $\mathcal{G} = 0.24$. 

For cases where Assumption 4 was not satisfied, $\gamma_{y, \tilde y}$ was sampled from a uniform distribution $[0.4, 0.7]$. For cases where Assumption 4 was satisfied, all decay factors were the same, i.e., $\gamma_{y, \tilde y} = \gamma, \ \forall y, \tilde y \in \bold{Y}$. In this case, $\gamma$ indicates the extent to which the connection probability varies with the time difference between two nodes. A smaller $\gamma$ corresponds to a graph where the connection probability decreases drastically. We also compared the trends in the performance of each \IMPaCT method by varying the value of $\gamma$. The baseline SGC consisted of 2 layers of message passing and 2 layers of MLP, with the hidden layer dimension set to 16. The baseline GCN also consisted of 2 layers with the hidden layer dimension set to 16. Adam optimizer was used for training with a learning rate of $0.01$ and a weight decay of $0.0005$. Each model was trained for 200 epochs, and each data was obtained by repeating experiments on 200 random graph datasets generated through TSBM. The training of both models were conducted on a 2X Intel Xeon Platinum 8268 CPU with 48 cores and 192GB RAM.

\subsection{Scalability of invariant message passing methods}
\label{apdx:scalability}
First moment alignment methods such as \MMP and \PMP have the same complexity and can be easily applied by modifying the graph. By adding or removing edges according to the conditions, only $\mathcal{O}(|E|)$ additional preprocessing time is required, which is necessary only once throughout the entire training process. If the graph cannot be modified and the message passing function needs to be modified instead, it would require $\mathcal{O}(|E|fK)$, which is equivalent to the traditional averaging message passing. Similarly, the memory complexity remains $\mathcal{O}(|E|fK)$, consistent with traditional averaging message passing. Despite having the same complexity, \PMP is much more expressive than \MMP. Unless there are specific circumstances, \PMP is recommended for first moment alignment.

In \PNY, estimating the relative connectivity $\hat{\mathcal{P}}_{y, t}(\tilde y, \tilde t)$ requires careful consideration. If both $t\neq t_{max}$ and $\tilde t\neq t_{max}$, calculating the relative connectivity for all pairs involves $\mathcal{O}((N+|E|)f)$ operations, while computing for cases where either time is $t_{max}$ requires $\mathcal{O}(|Y|^2|T|^2)$ computations. Therefore, the total time complexity becomes $\mathcal{O}(|Y|^2|T|^2+(N+|E|)f)$. Additionally, for each message passing step, the covariance matrix of the previous layer's representation and the aggregated message needs to be computed for each label-time pair. Calculating the covariance matrix of the representation from the previous layer requires $\mathcal{O}((|Y||T|+N)f^2)$ operations. Subsequently, computing the covariance matrix of the aggregated message obtained through \PMP via relative connectivity requires $\mathcal{O}(|Y|^2|T|^2f^2)$ operations. Diagonalizing each of them to create affine transforms requires $\mathcal{O}(|Y||T|f^3)$, and transforming all representations requires $\mathcal{O}(Nf^2)$. Thus, with a total of $K$ layers of topological aggregation, the time complexity for applying \PNY becomes $\mathcal{O}(K(|Y||T|f^3+|Y|^2|T|^2f^2+Nf^2)+|E|f)$. Additionally, the memory complexity includes storing covariance matrices based on relative connectivity and label-time information, which is $\mathcal{O}(|Y||T|f^2 + |Y|^2|T|^2)$.

Now, let's consider applying \PNY to real-world massive graph data. For instance, in the ogbn-mag dataset, $|Y|=349$, $|T|=11$, $N=629571$, and $|E|=21111007$. Assuming a representation dimension of $f=512$, it becomes apparent that performing at least several trillion floating-point operations is necessary. Without approximation or transformations, applying \PNY to large graphs becomes challenging in terms of scalability.

Lastly, for \JJnorm, computing the sample mean of aggregated messages for each label and time pair requires $\mathcal{O}(Nf)$ operations. Based on this, computing the total variance, variance of the mean, and mean of representations with each time requires $\mathcal{O}(Nf)$ operations. Calculating each $\hat\alpha_t$ requires $O(|T|)$ operations, and modifying the aggregated message based on this requires $\mathcal{O}(Nf)$ operations, resulting in a total of $\mathcal{O}(Nf+|T|) \simeq \mathcal{O}(Nf)$ operations. For GNNs with nonlinear node-wise semantic aggregation function with a total of $K$ layers, layer-wise \JJnorm have to be applied, which results in $\mathcal{O}(NfK)$ operations. Additionally, the memory complexity becomes $\mathcal{O}(|Y||T|f)$. Considering that most operations in \JJnorm can be parallelized, it exhibits excellent scalability.

In experiments with synthetic graphs, it was shown that invariant message passing methods can be applied to general spatial GNNs, not just decoupled GNNs. For 1st moment alignment methods such as \PMP and \MMP, which can be applied by reconstructing the graph, they have the same time and memory complexity as calculated above. However, for 2nd moment alignment methods such as \JJnorm or \PNY, transformation is required for each message passing step, resulting in a time complexity multiplied by the number of epochs as calculated above. Therefore, when using general spatial GNNs on real-world graphs, only 1st moment alignment methods may be realistically applicable.

\textbf{Guidelines for deciding which \IMPaCT method to use.} Based on these findings, we propose guidelines for deciding which invariant message passing method to use. If the graph exhibits differences in environments due to temporal information, we recommend starting with \PMP to make the representation's 1st moment invariant during training. \MMP is generally not recommended. Next, if using Decoupled GNNs, \PNY and \JJnorm should be compared. If the graph is too large to apply \PNY, compare the results of using \PMP alone with using both \PMP and \JJnorm. In cases where there are no nonlinear operations in the message passing stage, \JJnorm needs to be applied only once at the end. Using 2nd moment alignment methods with General Spatial GNNs may be challenging unless scalability is improved.

Caution is warranted when applying invariant message passing methods to real-world data. If Assumptions do not hold or if the semantic aggregation functions between layers exhibit loose Lipschitz continuity, the differences in the distribution of final representations over time cannot be ignored. Therefore, rather than relying on a single method, exploring various combinations of the proposed invariant message passing methods to find the best-performing approach is recommended.

\end{document}